\documentclass{article}
\usepackage{graphicx}
\usepackage{subcaption}
\usepackage{amsmath}
\usepackage{PRIMEarxiv}
\usepackage{amsthm}      
\usepackage{amssymb}
\usepackage[utf8]{inputenc} 
\usepackage[T1]{fontenc}    
\usepackage{hyperref}       
\usepackage{url}            
\usepackage{booktabs}       
\usepackage{amsfonts}       
\usepackage{nicefrac}       
\usepackage{microtype}      
\usepackage{lipsum}
\usepackage{fancyhdr}       
\usepackage{graphicx}       
\graphicspath{{media/}}     
\usepackage{algorithm}   
\usepackage{algorithmic} 
\newtheorem{theorem}{Theorem}
\newtheorem{lemma}{Lemma}
\newtheorem{proposition}{Proposition}

\newtheorem{assumption}{Assumption}
\usepackage{float}       
\pagestyle{fancy}
\thispagestyle{empty}
\rhead{ \textit{ }} 

\fancyhead[LO]{Running Title for Header}

\title{Tail-Safe Hedging: Explainable Risk-Sensitive Reinforcement Learning with a White-Box CBF--QP Safety Layer in Arbitrage-Free Markets}

\author{
  ZhangJian'an \\
  Guanghua School of Management, Peking University \\
  Peking University \\
  Beijing, China\\
  \texttt{2501111059@stu.pku.edu.cn}
}

\begin{document}
\pagestyle{plain}   
\maketitle

\begin{abstract}
We introduce \textbf{Tail-Safe}, a deployability-oriented framework for derivatives hedging that unifies \emph{distributional, risk-sensitive reinforcement learning} with a \emph{white-box} safety layer tailored to financial constraints. 
The learning module combines an IQN-based distributional critic with a CVaR objective (IQN--CVaR--PPO), and is stabilized by a \emph{Tail-Coverage Controller} that regulates quantile sampling via temperature tilting and tail-boosting to preserve effective tail mass at small $\alpha$. 
The safety module solves a convex \emph{CBF--QP} at every step, enforcing discrete-time control barrier inequalities together with a finance-specific constraint set (ellipsoidal no-trade band, box/rate limits, and a sign-consistency gate). 
Crucially, the safety layer is \emph{explainable}: it exposes active sets, tightness, rate utilization, gate scores, slack, and solver status---a self-contained audit trail aligned with model-risk governance.

We provide formal guarantees: (i) \emph{robust forward invariance} of the safe set under bounded model mismatch; (ii) a \emph{minimal-deviation projection} interpretation of the QP; (iii) a \emph{KL--DRO} upper bound showing that per-state KL regularization controls worst-case CVaR; (iv) \emph{concentration} and sample-complexity of the temperature-tilted CVaR estimator with explicit coverage-mismatch terms; (v) a \emph{CVaR trust-region} improvement inequality under KL-limited updates; (vi) \emph{feasibility persistence} via expiry-aware NTB shrinkage and rate tightening; and (vii) \emph{negative-advantage suppression} induced by the sign-consistency gate.
Empirically, in arbitrage-free, microstructure-aware synthetic markets (SSVI$\!\rightarrow$Dupire$\!\rightarrow$VIX with ABIDES/MockLOB execution), Tail-Safe improves left-tail risk while preserving central performance and yields zero hard-constraint violations whenever the QP is feasible with zero slack. 
We also map telemetry to governance workflows (dashboards, triggers, and incident taxonomy) to support auditability.
Limitations include the use of synthetic environments, simplified execution, and the absence of real-data replay; these are deliberate to isolate methodological contributions and are actionable in future work.
\end{abstract}

\keywords{risk-sensitive reinforcement learning, distributional RL, CVaR, trust-region methods, control barrier functions (CBFs), quadratic programming (QP), safe RL, explainable AI, deep hedging, arbitrage-free volatility (SSVI, Dupire), VIX, market microstructure, distributionally robust optimization (DRO), PPO, IQN, model risk governance}

\section{Introduction}

Modern AI agents for derivatives hedging and portfolio risk management must remain \emph{tail-safe}, \emph{robust under distribution shift}, and \emph{auditable} for model risk governance and regulatory review. In equity–volatility books such as SPX–VIX, a small number of rare but consequential events (flash crashes, volatility spikes, liquidity droughts) dominate economic capital, PnL attribution, and operational risk. Agents optimized purely for risk-neutral or average performance---e.g., standard reinforcement learning (RL) objectives or black-box deep hedgers---can look competitive ex post yet still incur catastrophic left-tail losses, violate hard business rules (leverage, short-sale limits, liquidity/min-trade-size, turnover and drawdown constraints), or fail under microstructure stress~\cite{Buehler2019DeepHedging,Hambly2023RLFinance,Sun2023RLTrading,Pippas2025CSUR}. These shortcomings are amplified when training is performed in stylized simulators that under-represent jump risk, regime changes, or execution frictions. At the same time, realistic deployment requires reasoning about market impact, order-book dynamics, and latency that classical optimal execution models capture only partially~\cite{AlmgrenChriss2001,ObizhaevaWang2013,Kyle1985}. Taken together, these pressures motivate methods that (i) \emph{optimize tail risk} rather than only the mean, (ii) \emph{guarantee state-wise safety by construction} at execution time, and (iii) \emph{explain interventions} through telemetry that supports model validation and external audit.

\paragraph{From risk-neutral RL to risk-sensitive, distributional RL.}
A central ingredient is to reason about \emph{distributions of returns} rather than their expectations. Distributional RL explicitly models the random return $Z^\pi$ and its quantiles, enabling direct access to tail statistics and risk measures (e.g., VaR/CVaR) that are meaningful for trading books~\cite{Bellemare2017Distributional,Dabney2018IQN}. On-policy trust-region style updates (TRPO/PPO) stabilize training through KL control and clipping, reducing policy oscillations that can otherwise translate into erratic trading~\cite{Schulman2015TRPO,Schulman2017PPO}. Building on these foundations, risk-sensitive RL incorporates coherent risk measures and chance constraints into the objective or constraints of the Markov decision process, yielding actor–critic and policy-gradient algorithms with convergence guarantees~\cite{Chow2014CVaRMDP,Tamar2015CoherentRisk,Chow2018VaRCVaR}. Recent advances scale CVaR-style training, improve tail estimators, and study policy updates under distribution shift and partial model misspecification~\cite{Lu2024TVDRL,SundharRamesh2024DRMBRL,Shi2024JMLR}. These directions are especially pertinent in finance, where left-tail losses dominate risk budgets, and where audits require transparent objectives and measurable risk appetites~\cite{KellyXiu2023FinML,Sun2023RLTrading,Hambly2023RLFinance,Pippas2025CSUR}. In this work we adopt distributional critics and quantile-based losses together with explicit CVaR optimization to shape the tail, while retaining PPO-style update discipline to keep policy drift controlled during training.

\paragraph{Safety by construction via control barrier functions (CBFs).}
Even with risk-sensitive training, \emph{state-wise} safety constraints (e.g., leverage $\leq$ cap, inventory/rate boxes, NTB and sign consistency) must be satisfied at each decision, not only on average. Typical safe RL approaches enforce constraints asymptotically, in expectation, or via learned proxies, leaving residual violation risk during exploration or regime shifts~\cite{Ray2019SafetyGym,Ji2023SafetyGymnasium,Zhao2023StatewiseSafeRL}. CBF-based safety layers provide a complementary, \emph{white-box} mechanism: a control barrier function $h(x)$ defines a forward-invariant safe set $\mathcal{S}=\{x:h(x)\ge 0\}$; at each step a convex quadratic program (QP) minimally modifies a nominal action $u_0$ to ensure a discrete-time CBF condition such as
\[
h(x_{t+1}) - (1-\alpha) h(x_t) \;\ge\; 0,
\]
subject to box/rate and market constraints, where $\alpha\in(0,1]$ regulates conservatism~\cite{Ames2017CBF_TAC,Ames2019CBF_Tutorial}. The CBF literature now includes high-order and robust variants, differentiable formulations for learning, discrete-time treatments, and observer/disturbance-aware designs suitable for implementation~\cite{Garg2024CBFAdvances,Ma2022DiffCBF,Yang2022DiffCBF,Cortez2022DPC,Choi2021CBVF,Jankovic2018RobustCBF,Xu2015RobustCBF,WangXu2024DOB_CBF}. For financial RL, a CBF--QP safety layer is attractive because it (i) encodes leverage, liquidity, and sign-consistency gates \emph{by construction}; (ii) exposes KKT multipliers, active sets, and slack, yielding human-interpretable \emph{reasons} for action overrides; and (iii) composes with arbitrary nominal policies without retraining.

\paragraph{Robustness to model misspecification via distributional robustness and KL control.}
Agents trained on synthetic markets inevitably face deployment regimes with different volatility-of-volatility, jump intensities, liquidity states, and order-flow autocorrelations. Distributionally robust MDPs (DRMDPs) guard against such misspecification by optimizing worst-case value within ambiguity sets defined by $f$-divergence or Wasserstein balls; practical algorithms cover online/offline and model-based/model-free settings~\cite{Shi2024JMLR,Lu2024TVDRL,SundharRamesh2024DRMBRL,Liu2025DRMDP}. In parallel, KL-regularized policy updates---as in TRPO/PPO---admit a robust-optimization reading: a per-state KL penalty is the Fenchel dual of an adversary constrained to a KL ball around the behavior distribution, providing an interpretable \emph{risk-budget knob} for conservatism during tail-focused learning~\cite{Schulman2015TRPO,Schulman2017PPO}. We further view adversarial/stress-testing methods from robust RL as complementary tools to expose fragilities before live trading~\cite{Pinto2017RARL,Kumar2020CQL}.

\paragraph{Finance-grounded synthetic evaluation.}
Auditable studies require \emph{arbitrage-free} yet \emph{stressable} simulators. Practitioners routinely calibrate SSVI volatility surfaces under static no-arbitrage constraints~\cite{GatheralJacquier2014} and derive state-dependent dynamics via Dupire local volatility~\cite{Dupire1994}. VIX-consistent legs can be inferred from the surface, enabling joint SPX–VIX exposures. Execution realism demands microstructure-aware simulators and impact models: Almgren--Chriss captures linear temporary impact and risk/cost trade-offs, while Obizhaeva--Wang models transient resilience~\cite{AlmgrenChriss2001,ObizhaevaWang2013}. Open-source agent-based platforms such as ABIDES and ABIDES-Gym bring limit-order books, latencies, and interacting agents to RL research, bridging the gap between stylized price processes and exchange-like environments~\cite{Byrd2020ABIDES,Byrd2019ABIDES,ABIDESGym2021}. This stack---SSVI$\!\to$Dupire$\!\to$VIX with LOB execution and impact---has become common in surveys and practice for reproducible stress testing and policy benchmarking~\cite{Hambly2023RLFinance,Sun2023RLTrading,Pippas2025CSUR}.

\paragraph{This paper: Tail-Safe hedging with an explainable safety layer.}
We propose \textbf{Tail-Safe}, a hybrid \emph{learn--then--filter} framework for SPX–VIX hedging that combines risk-sensitive learning with white-box safety and robustness. (i) The learning component trains a distributional critic with CVaR objectives, stabilized by a \emph{quantile-coverage controller} that tracks the effective tail mass during training to reduce estimator variance and policy churn. (ii) The execution component enforces \emph{state-wise} constraints via a finance-specific CBF--QP layer that implements NTB shrinkage, box/rate limits, and sign-consistency, while exporting telemetry (active-set identity, tightest constraint, slack/rate utilization) suitable for audit. Policy updates follow KL-regularized PPO with an EMA reference policy to cap per-epoch drift~\cite{Schulman2017PPO}. Evaluation occurs in an arbitrage-free SSVI$\!\rightarrow$Dupire$\!\rightarrow$VIX pipeline with ABIDES/LOB execution and classical impact models~\cite{GatheralJacquier2014,Dupire1994,Byrd2020ABIDES,AlmgrenChriss2001,ObizhaevaWang2013}.

\paragraph{Mathematical guarantees and explainability (paper highlights).}
To meet top-tier standards and strengthen interpretability, we develop and prove:
\begin{itemize}
  \item \textbf{Discrete-time invariance \& minimal-deviation (Theorem~1).} For the proposed discrete-time CBF constraints, if the QP is feasible with zero slack, the safe set is forward-invariant. The $H$-metric objective selects the uniquely closest feasible action to the nominal proposal, making the safety intervention \emph{predictable} and \emph{stable}~\cite{Ames2017CBF_TAC,Garg2024CBFAdvances}.
  \item \textbf{DRO--KL equivalence (Theorem~2).} A per-state KL penalty in PPO corresponds to the Fenchel dual of a KL-ball worst-case objective, quantifying conservatism and providing a tunable \emph{risk budget} that links trust-region size to adversarial uncertainty~\cite{Lu2024TVDRL,SundharRamesh2024DRMBRL,Schulman2015TRPO}.
  \item \textbf{Tail-coverage stabilization (Theorem~3).} A PID-style quantile-coverage controller stabilizes the Monte Carlo CVaR estimator by regulating effective tail mass around a scheduled $\alpha$, yielding finite-sample variance control and bounding the gap between scheduled and realized tail coverage under mild regularity.
  \item \textbf{Feasibility under tail guards (Proposition~1).} With expiry-aware NTB shrinkage, sign-consistency gates, and realistic microstructure and inventory envelopes (rate/box limits), the safety QP admits a nonempty feasible set and avoids deadlock in stressed conditions.
  \item \textbf{Telemetry--KKT correspondence (Lemma~1).} Tightness indicators and dual variables map one-to-one to business-rule interventions (leverage/short-sale/rate caps), producing auditable, human-readable explanations of \emph{why} and \emph{by how much} a proposed trade was modified.
\end{itemize}

\paragraph{Contributions.}
\textbf{(1) Tail-risk learning with stability.} We integrate distributional RL (IQN) and CVaR objectives with a \emph{quantile-coverage controller} and PPO+KL updates, improving left-tail behavior without large policy drift~\cite{Bellemare2017Distributional,Dabney2018IQN,Schulman2017PPO}.\\
\textbf{(2) White-box CBF--QP safety.} We design a finance-specific CBF--QP that enforces NTB, box/rate, and sign-consistency gates with telemetry, enabling \emph{explainable} state-wise safety consistent with the control literature~\cite{Ames2017CBF_TAC,Garg2024CBFAdvances,Zhao2023StatewiseSafeRL}.\\
\textbf{(3) Robustness to shift.} We couple KL-regularized updates with DRO-motivated penalties to guard against simulator misspecification and out-of-distribution stress~\cite{Lu2024TVDRL,SundharRamesh2024DRMBRL,Shi2024JMLR}.\\
\textbf{(4) Finance-grounded evaluation.} We build an arbitrage-free SSVI/Dupire/VIX simulator with ABIDES execution and impact models to enable reproducible, stressable studies aligned with practitioner workflows~\cite{GatheralJacquier2014,Dupire1994,Byrd2020ABIDES,AlmgrenChriss2001,ObizhaevaWang2013}.

\paragraph{Positioning and outlook.}
Compared with black-box deep hedgers or unconstrained RL, \textbf{Tail-Safe} delivers: (i) \emph{hard safety} by construction with audit-ready telemetry; (ii) \emph{tail shaping} via stabilized CVaR learning; and (iii) \emph{robustness} through DRO/KL regularization---all within markets that respect no-arbitrage and microstructure. This responds to recent surveys calling for safe, trustworthy, and explainable financial RL~\cite{Hambly2023RLFinance,Sun2023RLTrading,Pippas2025CSUR}. Beyond SPX–VIX, the framework extends to multi-asset books with cross-gamma and funding constraints, richer impact/latency models, and formal certificates for state-dependent CBF margins and DRMDPs~\cite{Garg2024CBFAdvances,Liu2025DRMDP}. We view these directions as necessary steps toward deployable, regulator-ready AI agents for real-world hedging.

\section{Preliminaries \& Problem Setting}
\label{sec:prelim}

We consider a synthetic yet finance-grounded evaluation stack for equity--volatility hedging (e.g., SPX--VIX) that is \emph{arbitrage-free}, \emph{microstructure-aware}, and \emph{stressable}. 
The market generator follows a calibrated no-arbitrage implied-volatility pipeline \textbf{SSVI}$\!\rightarrow$\textbf{Dupire}$\!\rightarrow$\textbf{VIX}, while order execution is simulated through an agent-based limit-order-book (LOB) environment (ABIDES/MockLOB) with temporary and transient impact. 
This section formalizes the environment, the agent interface (states/actions/inventory), loss and tail-risk metrics, and the occupancy and KL notions used later.

\subsection{Arbitrage-Free Volatility Surfaces via SSVI}
\label{sec:ssvi}

For each maturity $\tau>0$, denote by $w(k,\tau)$ the total implied variance at log-moneyness $k=\log(K/F_\tau)$. 
We parameterize $w$ using the \emph{SSVI} family~\cite{GatheralJacquier2014}:
\begin{equation}
\label{eq:ssvi}
w(k,\tau) \;=\; \frac{\theta(\tau)}{2}\left(
1 \;+\; \rho(\tau)\,\varphi(\tau)\,k \;+\;
\sqrt{ \big(\varphi(\tau)\,k + \rho(\tau)\big)^2 + 1 - \rho(\tau)^2 }
\right),
\end{equation}
where $\theta(\tau)>0$ is the ATM total variance term-structure, $\varphi(\tau)>0$ the shape, and $\rho(\tau)\!\in\!(-1,1)$ the skew parameter. 
SSVI admits tractable \emph{static no-arbitrage} conditions (no butterfly/calendar arbitrage) as simple inequalities on $(\theta,\varphi,\rho)$ across maturities, yielding a calibrated surface free of static arbitrage~\cite{GatheralJacquier2014}. 
We fit $(\theta,\varphi,\rho)$ on SPX option data slices or stylized templates and use~\eqref{eq:ssvi} as the \emph{sole} input to downstream local-volatility and variance measures.

\subsection{Local Volatility via Dupire}
\label{sec:dupire}

Let $C(t,K)$ be the time-$t$ undiscounted call price with strike $K$ implied by the SSVI surface. 
Under mild regularity and risk-neutrality, the \emph{Dupire} local variance $\sigma_{\mathrm{loc}}^2(t,K)$ satisfies~\cite{Dupire1994}:
\begin{equation}
\label{eq:dupire}
\sigma_{\mathrm{loc}}^2(t,K)
\;=\;
\frac{\partial_t C(t,K)}{\tfrac{1}{2}K^2\,\partial_{KK} C(t,K)}\,,
\qquad
\partial_{KK} C(t,K) > 0.
\end{equation}
We numerically evaluate the derivatives of $C$ from the SSVI-implied smiles and generate price paths by simulating the local-volatility diffusion $dS_t = r_t S_t\,dt + \sigma_{\mathrm{loc}}(t,S_t) S_t\,dW_t$, where $r_t$ is the (possibly term-structured) risk-free rate. 
This produces an arbitrage-consistent equity leg with skews and term-structure inherited from SSVI.

\subsection{VIX Leg from Surface-Consistent Variance}
\label{sec:vix}

Consistent with variance-swap replication~\cite{Demeterfi1999VolSwap}, the \emph{30-day VIX} at time $t$ can be computed from out-of-the-money option prices on the surface as (continuous limit, simplified form; see Cboe white paper~\cite{CBOEWhitePaper2019}):
\begin{equation}
\label{eq:vix}
\mathrm{VIX}^2(t)
\;=\;
\frac{2}{T}\,e^{rT}
\left(
\int_{0}^{F} \frac{P(K,T)}{K^2}\,dK
\;+\;
\int_{F}^{\infty} \frac{C(K,T)}{K^2}\,dK
\right),
\qquad T=\text{30D},
\end{equation}
where $F$ is the forward, and $P,C$ are OTM put/call prices derived from the SSVI surface at horizon $T$. 
We thus construct a \emph{surface-consistent} VIX leg---either as a tradable future proxy or as options on VIX—ensuring joint equity–volatility dynamics coherent with the SSVI/Dupire world.

\subsection{Execution, Microstructure, and Impact}
\label{sec:execution}

We embed the above market into an agent-based discrete-event LOB simulator using ABIDES/ABIDES-Gym~\cite{Byrd2020ABIDES,Amrouni2021ABIDESGym}, which exposes realistic order matching, latency, and interacting background agents. 
Our execution price model follows classical impact literature~\cite{AlmgrenChriss2001,ObizhaevaWang2013,Gatheral2010NoDynArb}:
\begin{align}
\label{eq:impact}
p_t^{\mathrm{exec}}
&= m_t \;+\; \frac{s_t}{2}\,\mathrm{sgn}(u_t) \;+\; \eta\,u_t \;+\; \sum_{j=0}^{\infty} G(j)\,u_{t-j},\\
q_{t+1} &= q_t + u_t, \qquad \Pi_{t+1}-\Pi_t \;=\; q_t\,(m_{t+1}-m_t) \;-\; \mathrm{Cost}(u_t),
\end{align}
where $m_t$ is the mid-price, $s_t$ the spread, $u_t$ the signed trade size, $\eta$ the \emph{temporary} (linear) impact coefficient (Almgren--Chriss), and $G(\cdot)$ a \emph{transient} resilience kernel (Obizhaeva--Wang). 
The realized trading cost $\mathrm{Cost}(u_t)$ aggregates spread, temporary impact, and transient slippage implied by the ABIDES fill process. 
We adopt \emph{no-dynamic-arbitrage} constraints on impact to avoid price manipulation~\cite{Gatheral2010NoDynArb}.

\paragraph{Stress dimensions.}
For systematic OOD testing, we perturb SSVI parameters $(\theta,\varphi,\rho)$ (level/slope/curvature), equity–vol correlation, and impact strength/decay, as well as time-to-expiry (near-expiry regimes). 
All stress configurations remain free of \emph{static} arbitrage by construction (\S\ref{sec:ssvi}).

\subsection{Task, Agent Interface, and Loss}
\label{sec:task}

We consider a finite-horizon, discrete-time hedging task $t=0,1,\dots,T{-}1$. 
At time $t$, the agent observes a state $x_t\in\mathbb{R}^d$ (prices/Greeks/surface features/time-to-expiry/inventory/LOB and execution features) and issues an action $u_t\in\mathbb{R}^m$ (trade vector across SPX/VIX instruments). 
Inventory $q_t$ evolves per~\eqref{eq:impact}. 
Let $\Pi_T(\pi;\omega)$ denote terminal P\&L under policy $\pi$ on path $\omega$; we define \emph{loss} as $L_T(\pi;\omega) = -\,\Pi_T(\pi;\omega)$, including transaction costs and slippage from the execution adapter.

\subsection{Risk Measures: VaR and ES/CVaR}
\label{sec:risk}

For a confidence level $\alpha\in(0,1)$ and random loss $L$, define
\begin{equation}
\operatorname{VaR}_\alpha(L) \;=\; \inf\{z\in\mathbb{R}:\;\mathbb{P}(L\le z)\ge \alpha\}, 
\qquad
\operatorname{ES}_\alpha(L) \;=\; \mathbb{E}\!\left[L \,\middle|\, L\ge \operatorname{VaR}_\alpha(L)\right],
\end{equation}
i.e., \emph{Expected Shortfall} (a.k.a.\ CVaR) is the conditional mean of losses beyond VaR. 
ES is a coherent risk measure~\cite{AcerbiTasche2002} and admits convex optimization surrogates~\cite{RockafellarUryasev2000}. 
We will evaluate policies using absolute-loss $\operatorname{VaR}_\alpha$ and $\operatorname{ES}_\alpha$ together with central-performance ratios.

\subsection{Occupancy Measures and Policy KL}
\label{sec:occ_kl}

Let $\pi(u|x)$ denote the stochastic policy, and let $d_\pi(x)$ be the \emph{state occupancy} over the finite horizon: 
$d_\pi(x) = \frac{1}{T}\sum_{t=0}^{T-1}\mathbb{P}_\pi(x_t=x)$. 
We also use the \emph{discounted} occupancy when appropriate.
For conservatism, per-state \emph{Kullback--Leibler} (KL) regularization between the updated policy and a reference $\pi_{\mathrm{ref}}$ is defined as
\begin{equation}
\label{eq:kl}
\mathrm{KL}\!\big(\pi(\cdot|x)\,\|\,\pi_{\mathrm{ref}}(\cdot|x)\big)
\;=\; \int \pi(u|x)\,\log\frac{\pi(u|x)}{\pi_{\mathrm{ref}}(u|x)}\,du,
\end{equation}
and aggregated across $x\sim d_\pi$.
KL plays a dual role: (i) as a \emph{trust-region} step-size proxy in policy optimization and (ii) as a distributionally robust regularizer (Fenchel dual of a KL-ball worst-case objective), thus controlling sensitivity to simulator misspecification~\cite{KullbackLeibler1951,RockafellarUryasev2000}.

\subsection{Notation Summary}
\label{sec:notation}

\begin{table}[H]
\centering
\caption{Notation used throughout the paper (symbols refer to their values at time $t$ unless otherwise noted).}
\label{tab:notation}
\vspace{0.3em}
\begin{tabular}{ll}
\toprule
Symbol & Meaning \\
\midrule
$S_t$ & Underlying equity mid-price; $K$ strike; $F$ forward \\
$w(k,\tau)$ & Total implied variance at log-moneyness $k$ and maturity $\tau$ (SSVI; Eq.~\eqref{eq:ssvi}) \\
$\sigma_{\mathrm{loc}}(t,S)$ & Dupire local volatility (Eq.~\eqref{eq:dupire}) \\
$\mathrm{VIX}(t)$ & 30D variance index computed from OTM option integrals (Eq.~\eqref{eq:vix}) \\
$m_t, s_t$ & LOB mid-price and bid--ask spread \\
$u_t, q_t$ & Trade vector (signed); inventory vector \\
$p_t^{\mathrm{exec}}$ & Execution price with spread/temporary/transient impact (Eq.~\eqref{eq:impact}) \\
$\Pi_T$ & Terminal P\&L; $L_T=-\Pi_T$ loss \\
$\operatorname{VaR}_\alpha, \operatorname{ES}_\alpha$ & Tail risk measures at confidence $\alpha$ (\S\ref{sec:risk}) \\
$\pi(u|x)$, $\pi_{\mathrm{ref}}(u|x)$ & Stochastic policy and EMA reference policy \\
$d_\pi(x)$ & (Discounted) state occupancy under $\pi$ (\S\ref{sec:occ_kl}) \\
$\mathrm{KL}(\pi\|\pi_{\mathrm{ref}})$ & Per-state Kullback--Leibler divergence (Eq.~\eqref{eq:kl}) \\
$r_t$ & Risk-free rate; $T$ hedging horizon length (steps); $\Delta t$ step size \\
\bottomrule
\end{tabular}
\end{table}

\paragraph{Remark (calibration and units).}
All option prices are undiscounted unless stated otherwise; $r_t$ denotes continuously compounded rates. 
We report P\&L in currency units and normalize risk (e.g., ES) by notional or premium where indicated for comparability across scenarios.

\section{Method: Tail-Safe Hedging Framework}
\label{sec:method}

This section presents \textbf{Tail-Safe}: a hybrid \emph{learn--then--filter} framework that couples risk-sensitive, distributional reinforcement learning with a \emph{white-box} CBF--QP safety layer. 
Section~\ref{sec:method:rl} details the IQN--CVaR--PPO learner and its KL-/entropy-regularized objective; 
Section~\ref{sec:method:coverage} introduces a \emph{Tail-Coverage Controller} that stabilizes low-$\alpha$ CVaR estimation via temperature-based quantile sampling and tail-boosting; 
Section~\ref{sec:method:cbf} specifies the discrete-time CBF constraints, the ellipsoidal no-trade band (NTB), box/rate limits, and a sign-consistency gate, together with audit-ready telemetry.
Figure~\ref{fig:framework} summarizes the overall architecture.

\begin{figure*}[t]
  \centering
  \includegraphics[width=\textwidth]{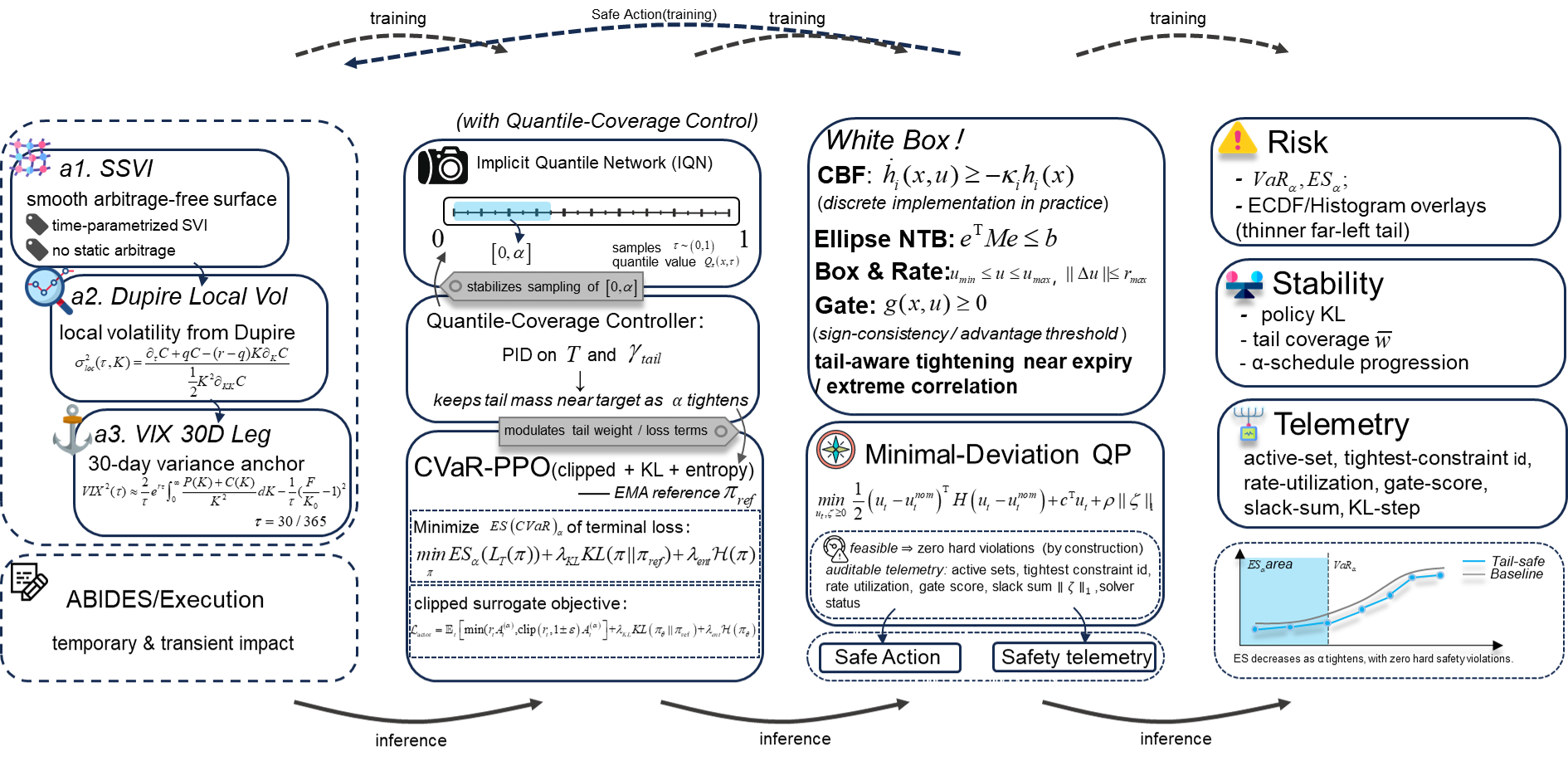}
  \caption{\textbf{Tail-Safe overview.} \textbf{(a) Market \& Execution}: SSVI $\rightarrow$ Dupire $\rightarrow$ 30D VIX; ABIDES/MockLOB execution with temporary and transient impact. 
  \textbf{(b) IQN--CVaR--PPO}: distributional critic via quantile regression, PPO with KL and entropy regularization. 
  \textbf{(c) White-box CBF--QP safety layer}: discrete-time CBF, ellipsoidal NTB, box/rate limits, and a sign-consistency gate solved as a convex QP. 
  \textbf{(d) Telemetry \& risk metrics}: $\mathrm{VaR}/\mathrm{ES}$, policy KL, tail coverage, active-set, tightest-constraint ID, rate utilization, gate score, and slack.}
  \label{fig:framework}
\end{figure*}

\subsection{Risk-Sensitive RL with IQN--CVaR--PPO}
\label{sec:method:rl}

\paragraph{Quantile networks and the CVaR objective.}
Distributional RL models the return distribution $Z_\pi(x,u)$ instead of its mean~\cite{Bellemare2017Distributional}. 
Implicit Quantile Networks (IQN) learn a differentiable quantile function $Q_\psi(x,u;\tau)\!\approx\!F^{-1}_{Z_\pi(\cdot|x,u)}(\tau)$~\cite{Dabney2018IQN}. 
Let loss be $L=-Z$. For $x$ fixed, the conditional CVaR admits the quantile integral
\begin{equation}
\label{eq:cvar-quantile}
\mathrm{CVaR}_\alpha(L\,|\,x)
\;=\;\frac{1}{\alpha}\int_{0}^{\alpha} F^{-1}_{L\,|\,x}(\tau)\,d\tau
\;\approx\;\frac{1}{K\alpha}\sum_{k=1}^{K}\mathbf{1}\{\tau_k\le\alpha\}\,\widehat{L}(x,\tau_k),
\end{equation}
with $\tau_k\!\sim\!\mathcal{U}(0,1)$ (or, in our case, a temperature-tilted distribution; cf.\ \S\ref{sec:method:coverage}) and $\widehat{L}(x,\tau)= -\,\mathbb{E}_{u\sim\pi(\cdot|x)}[Q_\psi(x,u;\tau)]$.
At the episode level we use the Rockafellar--Uryasev CVaR surrogate~\cite{RockafellarUryasev2000} (or its quantile approximation) and train with Monte Carlo estimates.

\paragraph{CVaR-weighted advantage and PPO updates.}
Let $V_\alpha^\pi(x)\!\approx\!\mathrm{CVaR}_\alpha(L\,|\,x)$ and define a \emph{CVaR-weighted} generalized advantage
\begin{equation}
\label{eq:cvar-gae}
A_t^{(\alpha)} \;\approx\; \sum_{l=0}^{\infty} (\gamma\lambda)^l \Big(\ell_{t+l}-\widehat{V}_\alpha(x_{t+l}) + \gamma\,\widehat{V}_\alpha(x_{t+l+1})\Big),
\end{equation}
where $\ell_t$ is a one-step loss (including spread, temporary impact, and transient slippage) and the structure mirrors GAE~\cite{Schulman2016GAE} with a CVaR baseline.
The actor uses PPO~\cite{Schulman2017PPO} with KL and entropy regularization:
\begin{equation}
\label{eq:actor-loss}
\mathcal{L}_{\text{actor}}
=\mathbb{E}_t\!\left[ \min\!\Big(r_t(\theta)\,A_t^{(\alpha)},\;\mathrm{clip}(r_t(\theta),1-\epsilon,1+\epsilon)\,A_t^{(\alpha)}\Big)\right]
\;+\;\lambda_{\mathrm{KL}}\,\mathbb{E}_x\big[\mathrm{KL}(\pi_\theta(\cdot|x)\,\|\,\pi_{\mathrm{ref}}(\cdot|x))\big]
\;+\;\lambda_{\mathrm{ent}}\,\mathbb{E}_x[\mathcal{H}(\pi_\theta(\cdot|x))],
\end{equation}
with $r_t(\theta)=\pi_\theta(u_t|x_t)/\pi_{\theta_{\mathrm{old}}}(u_t|x_t)$ and $\pi_{\mathrm{ref}}$ an EMA reference policy. 
The KL term serves as a trust-region proxy~\cite{Schulman2015TRPO,Achiam2017CPO} and admits a DRO interpretation (see \S4). 
The critic minimizes the quantile Huber loss for $Q_\psi$~\cite{Dabney2018IQN}.

\paragraph{Implementation notes.}
Trajectories are collected \emph{through} the safety filter (Sec.~\ref{sec:method:cbf}), i.e., the actor proposes $u_t^{\mathrm{nom}}$ which is minimally corrected by the CBF--QP into $u_t$. 
We log solver telemetry (active constraints, tightness, rate utilization, gate scores, slack, status/time) to support both training diagnostics and audit trails.
KL and entropy coefficients can be scheduled to avoid premature collapse and to match the tightening of $\alpha$~\cite{Schulman2017PPO}.

\subsection{Tail-Coverage Controller: Temperature Sampling and Tail-Boost}
\label{sec:method:coverage}

\paragraph{Motivation.}
For small $\alpha$ (e.g., $1\%\!-\!5\%$), uniform quantile sampling yields few tail samples and high variance, destabilizing training.
We therefore introduce \emph{temperature-tilted} quantile sampling and an explicit \emph{tail-boost}, combined with a PID controller to track a target \emph{effective tail mass}.

\paragraph{Temperature-tilted sampling and importance weights.}
Define the sampling density over $\tau\in[0,1]$:
\begin{equation}
\label{eq:tail-sampler}
p_T(\tau) \;\propto\; e^{-\tau/T},\qquad T\in[T_{\min},T_{\max}],
\end{equation}
so that smaller $T$ emphasizes low quantiles. 
With $\tau_k\!\sim\!p_T$, we use self-normalized importance weights $w_k \propto 1/p_T(\tau_k)$ to form an unbiased CVaR estimator (concentration bounds in \S4; see also standard results on self-normalized importance sampling~\cite{Owen2013MC}). 
Additionally, for $\tau\le\alpha$ we assign a \emph{tail-boost} factor $\gamma_{\mathrm{tail}}\!\ge\!1$ to increase the effective tail count.

\paragraph{Coverage metric and PID tracking.}
Let the \emph{effective tail mass} within a minibatch be
\[
\widehat{w}\;=\;\frac{1}{K}\sum_{k=1}^{K}\mathbf{1}\{\tau_k\le\alpha\}.
\]
Given a target $w_{\mathrm{target}}$ (e.g., $1.5\alpha$), define error $e=w_{\mathrm{target}}-\widehat{w}$ and update $(T,\gamma_{\mathrm{tail}})$ via discrete PID (clipped to feasible ranges):
\begin{align}
\label{eq:pid-update}
T_{n+1}&=\mathrm{clip}\!\left(T_n + \kappa_P e_n + \kappa_I \sum_{j=1}^{n} e_j + \kappa_D(e_n-e_{n-1}),\;T_{\min},T_{\max}\right),\\
\gamma_{\mathrm{tail},\,n+1}&=\mathrm{clip}\!\left(\gamma_{\mathrm{tail},\,n} + \eta_P e_n + \eta_I \sum_{j=1}^{n} e_j + \eta_D(e_n-e_{n-1}),\;\gamma_{\min},\gamma_{\max}\right).
\end{align}
PID design and anti-windup follow classical practice~\cite{AstromHagglund1995PID}. 
We employ an $\alpha$ schedule that tightens from a permissive level (e.g., $0.10$) toward the target (e.g., $0.025$), while the controller stabilizes $\widehat{w}$; the PPO KL penalty can be increased in tandem to limit policy drift~\cite{Schulman2017PPO,Schulman2015TRPO,Achiam2017CPO}.

\subsection{White-Box CBF--QP Safety Layer}
\label{sec:method:cbf}

\paragraph{Discrete-time CBF constraints.}
Let $h_i(x)\!\ge\!0$ denote the $i$-th safety function and consider a local affine state update $x_{t+1}=f(x_t)+g(x_t)u_t$.
We enforce discrete-time CBF conditions~\cite{Ames2017CBF_TAC,Ames2019CBF_Tutorial}:
\begin{equation}
\label{eq:cbf-dt}
h_i\!\big(f(x_t)+g(x_t)u_t\big) \;-\; (1-\kappa_i\Delta t)\,h_i(x_t) \;\ge\; -\,\zeta_i,
\qquad \zeta_i\ge 0,\;\kappa_i>0,
\end{equation}
where slack variables are heavily penalized and only activated when unavoidable (robust margins are analyzed in \S4).

\paragraph{Ellipsoidal NTB, box/rate limits, and a sign-consistency gate.}
Let $b^\star(x)$ be a target exposure vector (e.g., delta/vega) and $b(x,u)$ the exposure after action $u$.
Define the \emph{ellipsoidal no-trade band} (NTB)
\begin{equation}
\label{eq:ntb}
e(x,u) \;=\; b(x,u)-b^\star(x),
\qquad e^\top M e \;\le\; b_{\max},\quad M\succ 0,
\end{equation}
and \emph{box/rate} limits $u_{\min}\!\le\!u_t\!\le\!u_{\max}$, $\|u_t-u_{t-1}\|_2\!\le\!r_{\max}$.
We further require a \emph{sign-consistency gate}
\begin{equation}
g_{\mathrm{cons}}(x,u)\;=\;\min_{j=1,\dots,J}\,\langle u,\widehat{\nabla}\Pi^{(j)}(x)\rangle \;-\; \delta_{\mathrm{adv}} \;\ge\; 0,
\end{equation}
so that trades align with an ensemble of interpretable signals (e.g., advantage-proxy gradients from the distributional critic or pricing/hedging sensitivities)~\cite{SuttonBarto2018}. 
Near expiry or under extreme volatility, we shrink $b_{\max}\!\leftarrow\!\eta_b b_{\max}$ and tighten $r_{\max}\!\leftarrow\!\eta_r r_{\max}$ with $\eta_b,\eta_r\in(0,1)$ to improve feasibility and stability.

\paragraph{QP formulation and minimal-deviation projection.}
Given the actor’s proposal $u_t^{\mathrm{nom}}$, we compute the closest safe action $u_t$ by solving the convex QP
\begin{align}
\label{eq:qp-safety}
\min_{u_t,\;\zeta\ge 0}\quad 
& \frac{1}{2}\,(u_t-u_t^{\mathrm{nom}})^\top H\,(u_t-u_t^{\mathrm{nom}}) \;+\; c^\top u_t \;+\; \rho\,\|\zeta\|_1 \\
\text{s.t.}\quad 
& \text{CBF: } h_i(f(x_t)+g(x_t)u_t) - (1-\kappa_i\Delta t)\,h_i(x_t) \ge -\zeta_i,\;\forall i, \nonumber\\
& \text{NTB: } e(x_t,u_t)^\top M e(x_t,u_t) \le b_{\max}, \qquad 
  \text{Box/Rate: } u_{\min}\!\le\!u_t\!\le\!u_{\max},\;\|u_t-u_{t-1}\|_2\le r_{\max}, \nonumber\\
& \text{Gate: } g_{\mathrm{cons}}(x_t,u_t)\ge 0.\nonumber
\end{align}
Here $H\!\succ\!0$ defines the deviation metric, $c$ encodes linear trading frictions, and $\rho\!\gg\!0$ penalizes any slack.
We use OSQP~\cite{Stellato2020OSQP} with warm starts for efficiency and robustness; see \cite{BoydVandenberghe2004} for background on convex QPs. 
When $\zeta=0$,~\eqref{eq:cbf-dt} implies forward invariance of the safe set; the quadratic objective makes $u_t$ the $H$-metric projection of $u_t^{\mathrm{nom}}$ onto the feasible set (formalized in \S4).

\paragraph{Telemetry for auditability and operations.}
For each step, the solver returns: \texttt{active\_set} (indices of active constraints), \texttt{tightest\_id}, \texttt{rate\_util} $=\|u_t{-}u_{t-1}\|_2/r_{\max}$, \texttt{gate\_score} $=g_{\mathrm{cons}}(x_t,u_t)$, \texttt{slack\_sum} $=\|\zeta\|_1$, and \texttt{solver\_status/time}. 
We penalize nonzero slack or non-optimal statuses in the RL reward and log incidents for post-hoc audit, closing the loop between \emph{explainable interception} and \emph{governance}.

\paragraph{Pseudocode: training loop and safety filter.}

\begin{algorithm}[H]
\caption{Tail-Safe IQN--CVaR--PPO (on-policy training)}
\label{alg:tailsafe_train}
\begin{algorithmic}[1]
\STATE Initialize actor $\theta$, critic $\psi$, reference policy $\pi_{\mathrm{ref}}\!\leftarrow\!\pi_\theta$, temperature $T$, tail-boost $\gamma_{\mathrm{tail}}$, and target coverage $w_{\mathrm{target}}$.
\FOR{iterations $k=1,2,\dots$}
  \STATE Collect trajectories using the safety filter (Alg.~\ref{alg:cbf_filter}) to obtain $\{(x_t,u_t,r_t,\text{telemetry}_t)\}$.
  \STATE Sample quantiles $\tau_k\!\sim\!p_T$ (Eq.~\eqref{eq:tail-sampler}) and apply tail-boost to $\tau\!\le\!\alpha$.
  \STATE Update critic $Q_\psi$ by minimizing the quantile Huber loss (IQN).
  \STATE Estimate $V_\alpha$ and $A^{(\alpha)}$ via Eqs.~\eqref{eq:cvar-quantile}--\eqref{eq:cvar-gae}; update actor by minimizing Eq.~\eqref{eq:actor-loss}.
  \STATE Compute $\widehat{w}=\frac{1}{K}\sum \mathbf{1}\{\tau\le\alpha\}$ and update $(T,\gamma_{\mathrm{tail}})$ using the PID rules~\eqref{eq:pid-update} (with clipping).
  \STATE Tighten $\alpha$ according to a schedule; update $\pi_{\mathrm{ref}}$ via EMA; log policy KL and telemetry summaries.
\ENDFOR
\end{algorithmic}
\end{algorithm}

\begin{algorithm}[H]
\caption{CBF--QP safety filter (per step)}
\label{alg:cbf_filter}
\begin{algorithmic}[1]
\STATE Inputs: $x_t$, proposed action $u_t^{\mathrm{nom}}$, previous action $u_{t-1}$, params $(H,M,b_{\max},r_{\max},u_{\min},u_{\max},\kappa,\Delta t)$.
\STATE Formulate QP~\eqref{eq:qp-safety} with discrete CBF, NTB, box/rate limits, and sign-consistency gate; near expiry, shrink $b_{\max}\!\leftarrow\!\eta_b b_{\max}$ and $r_{\max}\!\leftarrow\!\eta_r r_{\max}$.
\STATE Solve the QP (warm-start) to obtain $u_t$, active set, tightest constraint, slack $\zeta$, and solver status/time.
\STATE Emit telemetry: \texttt{active\_set}, \texttt{tightest\_id}, \texttt{rate\_util}, \texttt{gate\_score}, \texttt{slack\_sum}, \texttt{solver\_status/time}.
\STATE If $\zeta>0$ or status $\neq$ optimal, add a penalty to the RL reward and log the event; otherwise execute $u_t$ and advance the environment to $x_{t+1}$.
\STATE Return $u_t$ and telemetry.
\end{algorithmic}
\end{algorithm}

\section{Theoretical Results}
\label{sec:theory}

We formalize guarantees for the proposed \textbf{Tail-Safe} framework. 
Our results cover (i) \emph{robust forward invariance} of the discrete-time CBF constraints under bounded model mismatch, 
(ii) the \emph{minimal-deviation} nature of the QP safety layer, 
(iii) a \emph{KL--DRO} upper bound linking per-state KL regularization to distributional robustness, 
(iv) \emph{concentration} and sample-complexity of the temperature-tilted CVaR estimator with a coverage controller, 
(v) a \emph{trust-region improvement} inequality for CVaR with KL-limited policy updates, 
(vi) \emph{feasibility persistence} under tail guards, and 
(vii) \emph{negative-advantage suppression} induced by the sign-consistency gate.
Complete proofs are deferred to \textbf{Appendix~A} (with subsections indicated after each result).

Throughout, let $\|\cdot\|$ denote the Euclidean norm, $\langle\cdot,\cdot\rangle$ the Euclidean inner product, and $\mathbb{B}_r(x)$ the closed ball of radius $r$. 
We reuse notation from Sections~\ref{sec:prelim}--\ref{sec:method}.

\subsection*{Theorem 1 (robust forward invariance of the safety set)}
\begin{assumption}[Local dynamics and mismatch]
\label{assump:dynamics}
There exist locally Lipschitz functions $f,g$ and an additive disturbance $w_t$ such that the state update obeys
$x_{t+1} = f(x_t) + g(x_t) u_t + w_t$ with $\|w_t\|\le \bar w$ almost surely. 
For each barrier $h_i:\mathbb{R}^d\to\mathbb{R}$ there is an $L_i>0$ with $|h_i(x)-h_i(y)|\le L_i\|x-y\|$.
\end{assumption}

\begin{assumption}[Discrete-time CBF constraint with margin]
\label{assump:cbf}
At time $t$, the QP safety layer enforces for each $i$:
\[
h_i\!\big(f(x_t)+g(x_t)u_t\big) - (1-\kappa_i\Delta t)\,h_i(x_t) \;\ge\; \varepsilon_i,
\qquad \kappa_i>0,
\]
with margin $\varepsilon_i \ge L_i \bar w$.
\end{assumption}

\begin{theorem}[Robust forward invariance]
\label{thm:invariance}
Under Assumptions~\ref{assump:dynamics}--\ref{assump:cbf}, if $h_i(x_t)\ge 0$ for all $i$, then $h_i(x_{t+1})\ge 0$ for all $i$. 
Hence the safe set $\mathcal{C}:=\{x:\,h_i(x)\ge 0,\,\forall i\}$ is forward invariant. 
\end{theorem}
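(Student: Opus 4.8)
The proof plan is essentially a one-line chain of inequalities, so the ``approach'' is really just to identify the right triangle inequality and where the margin $\varepsilon_i \ge L_i \bar w$ gets spent. The plan is to fix an arbitrary index $i$ and a time $t$ with $h_i(x_t)\ge 0$, write $x_{t+1} = f(x_t)+g(x_t)u_t + w_t$, and compare $h_i(x_{t+1})$ with $h_i\big(f(x_t)+g(x_t)u_t\big)$, the value at the ``nominal'' next state that the QP actually controls.

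\medskip

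\noindent\textbf{Step 1 (Lipschitz transfer of the disturbance).} Using Assumption~\ref{assump:dynamics}, apply the $L_i$-Lipschitz bound to the pair of points $x_{t+1}$ and $f(x_t)+g(x_t)u_t$, which differ exactly by $w_t$. This gives
\[
h_i(x_{t+1}) \;\ge\; h_i\!\big(f(x_t)+g(x_t)u_t\big) \;-\; L_i\|w_t\| \;\ge\; h_i\!\big(f(x_t)+g(x_t)u_t\big) \;-\; L_i\bar w.
\]
\textbf{Step 2 (invoke the QP margin constraint).} By Assumption~\ref{assump:cbf}, the safety layer guarantees $h_i\!\big(f(x_t)+g(x_t)u_t\big) \ge (1-\kappa_i\Delta t)\,h_i(x_t) + \varepsilon_i$. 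Substituting into Step 1,
\[
h_i(x_{t+1}) \;\ge\; (1-\kappa_i\Delta t)\,h_i(x_t) \;+\; \varepsilon_i \;-\; L_i\bar w.
\]
\textbf{Step 3 (sign bookkeeping).} Since $\varepsilon_i \ge L_i\bar w$, the last two terms are jointly nonnegative. For the first term, one needs $(1-\kappa_i\Delta t)\,h_i(x_t)\ge 0$: this holds because $h_i(x_t)\ge 0$ by hypothesis and $1-\kappa_i\Delta t\ge 0$ under the implicit, standard discretization assumption $\kappa_i\Delta t\le 1$ (which the paper's parameterization $(1-\kappa_i\Delta t)$ in Eq.~\eqref{eq:cbf-dt} presupposes; I would state it explicitly as a standing condition). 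Hence $h_i(x_{t+1})\ge 0$. Since $i$ was arbitrary, all barriers stay nonnegative, so $x_{t+1}\in\mathcal{C}$; induction on $t$ gives forward invariance of $\mathcal{C}$.

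\medskip

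\noindent The only genuine subtlety — the ``hard part,'' such as it is — is the almost-sure quantifier: the disturbance bound $\|w_t\|\le\bar w$ and hence the whole chain hold a.s., so the conclusion $h_i(x_{t+1})\ge 0$ is a.s.\ rather than deterministic, and one should be careful that the QP constraint in Assumption~\ref{assump:cbf} is a constraint on the realized $u_t$ that does not depend on the unrealized $w_t$ (it is stated on $f(x_t)+g(x_t)u_t$ only, so this is fine). A secondary point worth flagging is feasibility: the theorem is conditional on the QP being solvable with the stated margin (equivalently $\zeta_i=0$ against the margin-augmented constraint), which is exactly the hypothesis ``enforces'' in Assumption~\ref{assump:cbf}; genuine nonemptiness of that feasible set is deferred to Proposition~\ref{prop:feasibility} (feasibility persistence), so I would add a one-sentence remark pointing there rather than attempting it here.
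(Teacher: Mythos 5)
Your proof is correct and follows the same chain of inequalities as the paper's Appendix~A.1: Lipschitz transfer of the disturbance to pass from $h_i(x_{t+1})$ to $h_i\big(f(x_t)+g(x_t)u_t\big)$, substitution of the margin constraint from Assumption~\ref{assump:cbf}, and sign bookkeeping via $\varepsilon_i\ge L_i\bar w$. Your explicit flag that the argument also needs $1-\kappa_i\Delta t\ge 0$ (so that $(1-\kappa_i\Delta t)h_i(x_t)\ge 0$) is a genuine, if minor, tightening of the stated hypotheses---Assumption~\ref{assump:cbf} only requires $\kappa_i>0$ and the paper uses the nonnegativity of $1-\kappa_i\Delta t$ silently---and your pointer that feasibility of the margined QP is itself a hypothesis here, treated separately by the feasibility-persistence result (Theorem~\ref{thm:feas-persist}), is likewise apt.
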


\paragraph{Proof sketch.} Using Lipschitzness,
$h_i(x_{t+1}) \ge h_i(f+gu_t)-L_i\|w_t\|\ge (1-\kappa_i\Delta t)h_i(x_t) + \varepsilon_i - L_i\bar w \ge 0$.
A full inductive argument is given in \textbf{Appendix~A.1}. 
(See also robust CBF analyses such as \cite{Jankovic2018RobustCBF,Xu2015RobustCBF}.)

\subsection*{Proposition 1 (minimal-deviation $H$-metric projection)}
\begin{assumption}[Convex feasibility]
\label{assump:convex}
For fixed $x_t$, the feasible action set $\mathcal{S}(x_t)$ induced by the constraints in~\eqref{eq:cbf-dt}, \eqref{eq:ntb}, and the box/rate and gate constraints is nonempty, closed, and convex in $u$ (this holds under affine-in-$u$ CBF surrogates/linearizations and convex NTB/box/rate/gate specifications).
\end{assumption}

\begin{proposition}[Shifted projection]
\label{prop:projection}
Let $H\!\succ\!0$, $c\in\mathbb{R}^m$, and $\rho$ be sufficiently large so that the QP~\eqref{eq:qp-safety} is solved with $\zeta=0$. 
Then its unique optimizer $u_t^\star$ satisfies
\[
u_t^\star \;=\; \arg\min_{u\in\mathcal{S}(x_t)} \frac{1}{2}\,\|u - (u_t^{\mathrm{nom}} - H^{-1}c)\|_{H}^2,
\]
i.e., $u_t^\star$ is the $H$-metric projection of the \emph{shifted anchor} $u_t^{\mathrm{nom}} - H^{-1}c$ onto $\mathcal{S}(x_t)$.
\end{proposition}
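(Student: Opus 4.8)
The plan is to eliminate the slack block, reduce the QP objective to a pure $H$-weighted quadratic in $u$, and identify it with a projection by completing the square.

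First I would dispose of $\zeta$. By Assumption~\ref{assump:convex} the set $\mathcal{S}(x_t)$ (the constraint system with $\zeta=0$) is nonempty, so the relaxed QP~\eqref{eq:qp-safety} has feasible points with zero slack. Because $\zeta$ appears only in the CBF relaxations $h_i(f(x_t)+g(x_t)u)-(1-\kappa_i\Delta t)h_i(x_t)\ge-\zeta_i$ and in the penalty $\rho\|\zeta\|_1$, this is a standard $\ell_1$ exact-penalty configuration: under a constraint qualification for the relaxed system the optimal multipliers on those inequalities are bounded by some $\bar\lambda$ not depending on $\rho$, and then any $\rho>\bar\lambda$ forces $\zeta^\star=0$ at optimality — precisely the \emph{$\rho$ sufficiently large} hypothesis. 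With $\zeta=0$ the problem becomes $\min_{u\in\mathcal{S}(x_t)} J(u)$ where $J(u):=\tfrac12(u-u_t^{\mathrm{nom}})^\top H(u-u_t^{\mathrm{nom}})+c^\top u$.

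Second, I would complete the square. A direct expansion gives $\tfrac12\|u-(u_t^{\mathrm{nom}}-H^{-1}c)\|_H^2 = J(u) + \text{const}$, where the additive term $-c^\top u_t^{\mathrm{nom}}+\tfrac12 c^\top H^{-1}c$ is independent of $u$ (this uses $HH^{-1}=I$ and symmetry of $H$). Consequently $\arg\min_{u\in\mathcal{S}(x_t)}J(u)=\arg\min_{u\in\mathcal{S}(x_t)}\tfrac12\|u-(u_t^{\mathrm{nom}}-H^{-1}c)\|_H^2$, i.e., the $H$-metric projection of the shifted anchor $u_t^{\mathrm{nom}}-H^{-1}c$ onto $\mathcal{S}(x_t)$. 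Uniqueness is immediate: $H\succ0$ makes $\langle\cdot,\cdot\rangle_H$ an inner product and $J$ strongly convex, and $\mathcal{S}(x_t)$ is nonempty, closed, and convex by Assumption~\ref{assump:convex}, so the projection exists and is unique by the Hilbert-space projection theorem.

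The only step requiring care is the exact-penalty reduction, since it relies on a constraint qualification ensuring bounded multipliers for the $\zeta$-inequalities; if one instead takes the stated hypothesis that the QP is solved with $\zeta=0$ at face value, the remaining completion-of-square and projection-theorem arguments are entirely routine. Details are deferred to Appendix~A.2.
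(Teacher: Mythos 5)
Your proposal is correct and takes the same route as the paper's proof in Appendix~A.2: complete the square to rewrite the objective as $\tfrac12\|u-(u_t^{\mathrm{nom}}-H^{-1}c)\|_H^2$ plus a $u$-independent constant, then invoke strict convexity (from $H\succ 0$) together with nonempty, closed, convex $\mathcal{S}(x_t)$ to get existence and uniqueness of the projection. The paper simply takes the hypothesis ``$\zeta=0$'' at face value, whereas you additionally sketch the $\ell_1$ exact-penalty argument (bounded multipliers under a constraint qualification, then $\rho>\bar\lambda$ forces $\zeta^\star=0$) that justifies why a large $\rho$ achieves this; that is a useful supplement but, as you yourself note, not required once the zero-slack hypothesis is assumed.
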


\paragraph{Proof sketch.} Completing the square yields 
$\tfrac{1}{2}(u-u^{\mathrm{nom}})^\top H (u-u^{\mathrm{nom}}) + c^\top u
= \tfrac{1}{2}\|u-(u^{\mathrm{nom}}-H^{-1}c)\|_H^2 + \mathrm{const}$.
Strict convexity plus convex feasibility implies uniqueness; KKT conditions characterize the projection. 
Details are in \textbf{Appendix~A.2} (cf.\ \cite{BoydVandenberghe2004}).

\subsection*{Theorem 2 (KL--DRO upper bound and conservatism of per-state KL)}
\begin{assumption}[CVaR surrogate]
\label{assump:cvar}
For $\alpha\in(0,1)$ and any threshold $t\in\mathbb{R}$, define $\phi_t(z)=(z-t)_+$ and 
$\mathrm{CVaR}_\alpha(L)=\min_{t}\,t+\tfrac{1}{\alpha}\,\mathbb{E}[\phi_t(L)]$~\cite{RockafellarUryasev2000}.
\end{assumption}

\begin{assumption}[Path-wise KL radius and occupancy control]
\label{assump:kl}
Let $\mathcal{Q}_\rho=\{Q:\mathrm{KL}(Q\|P)\le \rho\}$ be a KL ball around a reference path distribution $P$ (the simulator/behavior distribution). 
Assume per-state policy KL is bounded: $\mathrm{KL}(\pi'(\cdot|x)\|\pi_{\mathrm{ref}}(\cdot|x))\le \beta$ for all $x$ in the support, 
and the induced pathwise KL satisfies $\rho \le C_\mathrm{occ}\,\beta$ for some constant $C_\mathrm{occ}$ depending on the horizon and mixing/occupancy properties (cf.\ Pinsker-type arguments and standard occupancy coupling).
\end{assumption}

\begin{theorem}[Donsker--Varadhan bound and per-state KL conservatism]
\label{thm:kldro}
For any $\eta>0$,
\[
\sup_{Q\in\mathcal{Q}_\rho}\mathrm{CVaR}_\alpha(L) 
\;\le\; \min_{t\in\mathbb{R}} \left\{\, t + \frac{1}{\alpha\eta}\Big(\rho + \log \mathbb{E}_{P}\!\big[e^{\eta\,\phi_t(L)}\big]\Big)\right\}.
\]
Moreover, under Assumption~\ref{assump:kl} the RHS is upper-bounded by the same expression with $\rho$ replaced by $C_\mathrm{occ}\beta$. 
Hence, penalizing per-state KL by $\lambda_{\mathrm{KL}}\!\cdot\!\mathbb{E}_x[\mathrm{KL}(\pi(\cdot|x)\|\pi_{\mathrm{ref}}(\cdot|x))]$ controls a KL--DRO upper bound on the CVaR surrogate and thus quantifies conservatism.
\end{theorem}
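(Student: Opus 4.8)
The plan is to prove the Donsker--Varadhan (DV) bound first and then reduce the occupancy statement to a routine substitution. The starting point is the Rockafellar--Uryasev representation from Assumption~\ref{assump:cvar}: for any fixed threshold $t$, $\mathrm{CVaR}_\alpha(L) \le t + \tfrac{1}{\alpha}\mathbb{E}_Q[\phi_t(L)]$, with equality at the optimal $t$. The key observation is that the inner expectation $\mathbb{E}_Q[\phi_t(L)]$ is what we need to control uniformly over $Q\in\mathcal{Q}_\rho$. For this I would invoke the classical DV variational formula: for any measurable $X$ bounded below (here $X=\eta\,\phi_t(L)\ge 0$) and any $Q\ll P$,
\[
\mathbb{E}_Q[\eta\,\phi_t(L)] \;\le\; \mathrm{KL}(Q\|P) + \log\mathbb{E}_P\!\big[e^{\eta\,\phi_t(L)}\big].
\]
Dividing by $\eta>0$ and using $\mathrm{KL}(Q\|P)\le\rho$ on $\mathcal{Q}_\rho$ gives $\mathbb{E}_Q[\phi_t(L)] \le \tfrac{1}{\eta}(\rho + \log\mathbb{E}_P[e^{\eta\phi_t(L)}])$. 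Substituting into the R--U upper bound and taking the infimum over $t\in\mathbb{R}$ yields the claimed inequality. Because this holds for every $\eta>0$ and every admissible $t$, the sup over $Q$ on the left is bounded by the displayed min over $t$ on the right; one has to be slightly careful that the sup over $Q$ and the choice of $t$ can be interchanged, but since the RHS (after the DV step) no longer depends on $Q$, the bound $\sup_Q \le \inf_t\{\cdots\}$ follows immediately from taking the supremum of a quantity dominated by something $Q$-free.

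For the second part (occupancy control), I would argue that under Assumption~\ref{assump:kl} any path distribution $Q$ arising from a policy $\pi'$ with per-state KL at most $\beta$ has pathwise KL $\rho_{\mathrm{path}} := \mathrm{KL}(Q\|P) \le C_{\mathrm{occ}}\beta$; this is the standard telescoping/chain-rule decomposition of the pathwise KL into a sum of per-state conditional KLs weighted by the occupancy, bounded termwise by $\beta$ and then by the horizon/mixing constant $C_{\mathrm{occ}}$. Consequently the feasible ambiguity set $\{Q:\pi' \text{ has per-state KL}\le\beta\}$ is contained in $\mathcal{Q}_{C_{\mathrm{occ}}\beta}$, so applying the already-proved DV bound with $\rho = C_{\mathrm{occ}}\beta$ gives the stated refinement. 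The conservatism conclusion then follows by monotonicity: since the RHS is increasing in $\rho$, driving $\beta$ (hence $\rho$) down by penalizing $\lambda_{\mathrm{KL}}\,\mathbb{E}_x[\mathrm{KL}(\pi\|\pi_{\mathrm{ref}})]$ tightens the worst-case CVaR surrogate bound, which is exactly the sense in which the penalty "quantifies conservatism."

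The main obstacle is not the DV inequality itself (which is textbook once $\phi_t(L)\ge 0$ has a finite exponential moment under $P$ — a mild assumption I would state explicitly, e.g., $L$ has sub-exponential right tail under $P$, or restrict $\eta$ to a range where the MGF is finite) but rather making the \emph{pathwise-to-per-state KL reduction} rigorous. The chain rule for KL on trajectory measures gives $\mathrm{KL}(Q\|P) = \sum_{t} \mathbb{E}_{Q}[\mathrm{KL}(\pi'(\cdot|x_t)\|\pi_{\mathrm{ref}}(\cdot|x_t))]$ only when $P$ and $Q$ differ solely through the policy (same dynamics kernel), so I would make that modeling assumption explicit; the constant $C_{\mathrm{occ}}$ then absorbs the horizon length $T$ (finite-horizon case) or a mixing-time factor (discounted/ergodic case), and the fact that the expectation is under $Q$ rather than $d_\pi$ is handled by the uniform-over-$x$ bound $\beta$, which makes the occupancy change-of-measure vacuous. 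I would flag that if one instead only controls $\mathbb{E}_{x\sim d_\pi}[\mathrm{KL}]$ rather than $\sup_x \mathrm{KL}$, an additional occupancy-coupling / Pinsker argument is needed to transfer between $d_\pi$ and $d_{\pi_{\mathrm{ref}}}$, which is the genuinely delicate step and the reason Assumption~\ref{assump:kl} is phrased with a uniform per-state bound.
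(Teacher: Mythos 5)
Your proof is correct and follows essentially the same route as the paper: Rockafellar--Uryasev to reduce CVaR to an expectation of $\phi_t(L)$, Donsker--Varadhan to bound $\mathbb{E}_Q[\phi_t(L)]$ uniformly over the KL ball, then the occupancy/chain-rule reduction from pathwise to per-state KL. Your two extra caveats --- that the DV step needs $\mathbb{E}_P[e^{\eta\phi_t(L)}]<\infty$ and that the chain-rule identity $\mathrm{KL}(Q\|P)=\sum_t\mathbb{E}_Q[\mathrm{KL}(\pi'\|\pi_{\mathrm{ref}})]$ requires $P$ and $Q$ to share the transition kernel --- are correct and are implicitly assumed rather than stated in the paper's version; they would be worth making explicit.
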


\paragraph{Proof sketch.} Apply the Donsker--Varadhan variational inequality to $\mathbb{E}_Q[\phi_t(L)]$~\cite{DonskerVaradhan1975}, then minimize over $t$. 
Relate pathwise KL to the expected per-state KL via occupancy coupling and Pinsker’s inequality (see, e.g., \cite{Schulman2015TRPO,Achiam2017CPO,CsiszarKorner2011,NamkoongDuchi2017}). 
Appendix \textbf{A.3} provides details.

\subsection*{Theorem 3 (bias/variance and sample complexity of the CVaR estimator)}
\begin{assumption}[Temperature sampling, bounded importance weights, bounded loss]
\label{assump:is}
Quantiles are sampled from $p_T(\tau)\propto e^{-\tau/T}$ on $[0,1]$ with $T\in[T_{\min},T_{\max}]$ and $p_T(\tau)\ge p_{\min}>0$. 
Self-normalized importance weights are $w_k\propto 1/p_T(\tau_k)$ (normalized within the batch). 
Losses are almost surely bounded: $|L|\le B$.
\end{assumption}

Define the self-normalized estimator (with tail-boost implemented by oversampling, absorbed into $p_T$):
\[
\widehat{\mathrm{CVaR}}_\alpha
=\frac{\sum_{k=1}^K w_k\,\mathbf{1}\{\tau_k\le\alpha\}\,L^{(\tau_k)}}
{\sum_{k=1}^K w_k\,\mathbf{1}\{\tau_k\le\alpha\}},
\quad 
\alpha_{\mathrm{eff}}:=\mathbb{E}[\mathbf{1}\{\tau\le\alpha\}]\,.
\]

\begin{theorem}[Concentration of the temperature-tilted CVaR estimator]
\label{thm:cvar-concentration}
Under Assumption~\ref{assump:is}, for any $\delta\in(0,1)$, with probability at least $1-\delta$,
\[
\big|\widehat{\mathrm{CVaR}}_\alpha - \mathrm{CVaR}_\alpha\big|
\;\le\; 
\underbrace{C_1\,B\,\sqrt{\frac{\log(2/\delta)}{K\,\alpha_{\mathrm{eff}}}}}_{\text{variance term}}
\;+\;
\underbrace{C_2\,B\,\big|\alpha_{\mathrm{eff}}-\alpha\big|}_{\text{coverage mismatch}},
\]
for absolute constants $C_1,C_2$ depending on $p_{\min}$ and the self-normalization scheme. 
In particular, the PID controller that tracks $\alpha_{\mathrm{eff}}\!\approx\!w_{\mathrm{target}}$ (with $w_{\mathrm{target}}$ close to $\alpha$) reduces the coverage-mismatch bias and improves the rate constant in the variance term.
\end{theorem}

\paragraph{Proof sketch.} Combine self-normalized importance sampling concentration (e.g., empirical Bernstein/Hoeffding-style bounds for ratio estimators) with bounded weights and losses~\cite{Owen2013MC}. 
A first-order expansion quantifies the effect of replacing $\alpha$ by $\alpha_{\mathrm{eff}}$; the controller reduces this mismatch. 
See \textbf{Appendix~A.4} for a full derivation.

\subsection*{Theorem 4 (trust-region improvement inequality for CVaR with KL limits)}
\begin{assumption}[Per-state KL constraint and smoothness]
\label{assump:tr}
For a policy update $\pi'\!=\!\pi+\Delta$, assume $\mathrm{KL}(\pi'(\cdot|x)\|\pi(\cdot|x))\le \beta$ for all $x$ and that the one-step loss $\ell$ is $L$-Lipschitz in actions and states under the dynamics in Assumption~\ref{assump:dynamics}. 
Let $J_\alpha(\pi)=\mathrm{CVaR}_\alpha(L_T(\pi))$ for horizon $T$.
\end{assumption}

\begin{theorem}[CVaR trust-region improvement]
\label{thm:trpo-cvar}
There exists a constant $C_\alpha>0$ (depending on $L$, $T$, and loss bounds) such that
\[
J_\alpha(\pi') \;\le\; J_\alpha(\pi) 
\;+\; \mathbb{E}_{x\sim d_\pi,\,u\sim\pi}\!\big[\omega(x,u)\,\tilde A_\pi^{(\alpha)}(x,u)\big]
\;+\; C_\alpha\,\sqrt{\beta} \;+\; o(\|\Delta\|),
\]
where $\omega=\pi'(\cdot|x)/\pi(\cdot|x)$ and $\tilde A_\pi^{(\alpha)}$ is the CVaR-weighted advantage (cf.\ \eqref{eq:cvar-gae}). 
Thus, a small KL radius $\beta$ controls the degradation term, yielding a trust-region style guarantee for the CVaR objective.
\end{theorem}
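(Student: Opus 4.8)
The plan is to neutralize the nonlinearity of CVaR by \emph{freezing the Rockafellar--Uryasev threshold}, which collapses $J_\alpha$ to a linear-in-trajectory-law surrogate, and then to run the textbook trust-region argument (performance difference lemma, occupancy-shift bound, Pinsker) on that surrogate. Write $G_t(\pi)=t+\tfrac1\alpha\,\mathbb{E}_{\tau\sim\mathbb{P}_\pi}\!\big[\phi_t(L_T(\tau))\big]$ where $\phi_t(z)=(z-t)_+$, $\mathbb{P}_\pi$ is the trajectory law under $\pi$, and $\phi_t(L_T(\cdot))$ is regarded as a fixed function of the full trajectory $\tau=(x_0,u_0,\dots,x_T)$; by Assumption~\ref{assump:cvar}, $J_\alpha(\pi)=\min_t G_t(\pi)$. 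Let $t^\star$ attain the minimum for the base policy $\pi$ (any minimizer works; $t^\star=\operatorname{VaR}_\alpha(L_T(\pi))$ when unique). The first step is the \emph{threshold-freezing inequality}
\[
J_\alpha(\pi')-J_\alpha(\pi)\;\le\;G_{t^\star}(\pi')-G_{t^\star}(\pi)\;=\;\tfrac1\alpha\big(\mathbb{E}_{\mathbb{P}_{\pi'}}[\phi_{t^\star}(L_T)]-\mathbb{E}_{\mathbb{P}_{\pi}}[\phi_{t^\star}(L_T)]\big),
\]
valid because $J_\alpha(\pi')=\min_t G_t(\pi')\le G_{t^\star}(\pi')$ while $G_{t^\star}(\pi)=J_\alpha(\pi)$. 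At the \emph{fixed} threshold $t^\star$ the surrogate $G_{t^\star}$ is the expectation of a fixed terminal functional under the policy-induced trajectory law, so the kink of $\phi_t$ and the nonconvexity of CVaR never enter: only the min-inequality and linearity in $\mathbb{P}_\pi$ were used.

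Next I would read $\phi_{t^\star}(L_T(\cdot))$ as the terminal-only reward of an auxiliary finite-horizon MDP and apply the performance difference lemma to get $\mathbb{E}_{\mathbb{P}_{\pi'}}[\phi_{t^\star}(L_T)]-\mathbb{E}_{\mathbb{P}_{\pi}}[\phi_{t^\star}(L_T)]=\mathbb{E}_{x\sim d_{\pi'},\,u\sim\pi'}\!\big[A^{\pi}_{\phi_{t^\star}}(x,u)\big]$, with $A^{\pi}_{\phi_{t^\star}}$ the advantage in that auxiliary MDP. Then the standard occupancy-shift step replaces $d_{\pi'}$ by $d_\pi$ at the cost of $2\,\|A^{\pi}_{\phi_{t^\star}}\|_\infty\,D_{\mathrm{TV}}(d_{\pi'},d_\pi)$; a telescoping/coupling bound gives $D_{\mathrm{TV}}(d_{\pi'},d_\pi)\le c_T\,\max_x D_{\mathrm{TV}}\big(\pi'(\cdot|x),\pi(\cdot|x)\big)$ for a horizon-dependent $c_T$, and Pinsker together with Assumption~\ref{assump:tr} yields $\max_x D_{\mathrm{TV}}\le\sqrt{\beta/2}$. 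It remains to bound $\|A^{\pi}_{\phi_{t^\star}}\|_\infty$: the box constraints make the action set compact, Assumption~\ref{assump:dynamics} (locally Lipschitz $f,g$) makes the finite-horizon reachable set bounded, and $L$-Lipschitzness of $\ell$ from Assumption~\ref{assump:tr} then bounds $L_T$, hence $\phi_{t^\star}(L_T)$ and its $T$-step auxiliary advantage; these constants, $c_T$, and the factor $1/\alpha$ are all absorbed into $C_\alpha$.

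The last step rewrites $\mathbb{E}_{u\sim\pi'}[A^{\pi}_{\phi_{t^\star}}(x,u)]=\mathbb{E}_{u\sim\pi}[\omega(x,u)\,A^{\pi}_{\phi_{t^\star}}(x,u)]$ by importance sampling, and identifies $\tfrac1\alpha A^{\pi}_{\phi_{t^\star}}$ with the CVaR-weighted advantage $\tilde A^{(\alpha)}_\pi$ of~\eqref{eq:cvar-gae}: at the exact threshold $t^\star$ and with the undiscounted ($\lambda=1$) estimator this is an equality, while the GAE($\lambda<1$)/learned-baseline version used in practice differs only by a higher-order term collected into $o(\|\Delta\|)$. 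Chaining the displays gives $J_\alpha(\pi')\le J_\alpha(\pi)+\mathbb{E}_{x\sim d_\pi,\,u\sim\pi}[\omega\,\tilde A^{(\alpha)}_\pi]+C_\alpha\sqrt{\beta}+o(\|\Delta\|)$, as claimed.

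I expect the genuine work to concentrate in two places where the paper's objects are only loosely pinned down: (i) justifying the bounded-advantage estimate for $A^{\pi}_{\phi_{t^\star}}$ purely from Lipschitzness plus compactness of the reachable/action sets, rather than from an a~priori bounded-loss hypothesis as in Assumption~\ref{assump:is}; and (ii) making the identification between the GAE-with-CVaR-baseline advantage in~\eqref{eq:cvar-gae} and the auxiliary-MDP advantage at the frozen threshold precise enough that the $\omega\,\tilde A^{(\alpha)}_\pi$ term appearing in the statement is exactly the one produced by the performance difference lemma. Everything else is the standard TRPO/CPO-style derivation specialized to the surrogate reward $\phi_{t^\star}$, and the threshold-freezing inequality is the only CVaR-specific ingredient.
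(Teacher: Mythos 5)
Your proposal is correct and follows essentially the same route as the paper's proof: freeze the Rockafellar--Uryasev threshold at $t^\star(\pi)$ to get the one-sided surrogate inequality (Step 1 of Lemma~\ref{lem:pdl-cvar}), decompose the resulting difference of expectations into an importance term plus a state-distribution remainder (the paper does this directly by unrolling the horizon; you do it via a performance-difference lemma on the auxiliary terminal-reward MDP, which is the same identity packaged more explicitly), bound the occupancy shift by $C_T\sqrt{\beta}$ via Pinsker (Lemma~\ref{lem:occ-shift}), and rewrite under importance weights to obtain the $\omega\tilde A^{(\alpha)}_\pi$ term. The two spots you flag as needing work---justifying $\|A^\pi_{\phi_{t^\star}}\|_\infty<\infty$ from Lipschitzness and compactness rather than an a~priori bound, and the identification of the auxiliary-MDP advantage with the GAE-with-CVaR-baseline estimator of~\eqref{eq:cvar-gae}---are precisely the points the paper handles only informally (via a footnote defining $g_t$ and a brief remark absorbing the discrepancy into $o(\|\Delta\|)$), so your assessment of where the real work lies is accurate.
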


\paragraph{Proof sketch.} Adapt the performance-difference lemma to the CVaR surrogate (replace value with CVaR baseline), then bound the state-distribution shift by total variation and Pinsker’s inequality $\mathrm{TV}\le\sqrt{\tfrac{1}{2}\mathrm{KL}}$~\cite{Schulman2015TRPO,Achiam2017CPO,CsiszarKorner2011}. 
Smoothness of $\ell$ and the horizon accumulation yield $C_\alpha\sqrt{\beta}$. 
A detailed derivation is given in \textbf{Appendix~A.5}.

\subsection*{Theorem 5 (feasibility persistence under tail guards)}
\begin{assumption}[Lipschitz constraints and affine exposure map]
\label{assump:feas}
Assume $e(x,u)=A(x)u-d(x)$ with $A$, $d$ locally Lipschitz, $M\succ 0$, and that box/rate sets are convex. 
Let the CBF surrogates used in the QP be affine in $u$ for fixed $x$.
\end{assumption}

\begin{assumption}[Margins at time $t$]
\label{assump:margins}
Suppose at time $t$ the NTB and rate constraints hold with margins 
$e(x_t,u_t)^\top M e(x_t,u_t) \le b_{\max}-\delta_b$ and $\|u_t-u_{t-1}\|_2 \le r_{\max}-\delta_r$ for some $\delta_b,\delta_r>0$.
\end{assumption}

\begin{theorem}[Persistence via NTB shrinkage and rate tightening]
\label{thm:feas-persist}
Under Assumptions~\ref{assump:dynamics}, \ref{assump:feas}, and \ref{assump:margins}, 
there exist shrinkage factors $\eta_b,\eta_r\in(0,1)$, computable from local Lipschitz constants of $(A,d,f,g)$ and $(\kappa_i)$ and the disturbance bound $\bar w$, such that replacing $b_{\max}\!\leftarrow\!\eta_b b_{\max}$ and $r_{\max}\!\leftarrow\!\eta_r r_{\max}$ guarantees that the QP at time $t{+}1$ remains feasible with $\zeta=0$. 
Equivalently, the feasible set intersection (CBF, NTB, box, rate, gate) remains nonempty at $t{+}1$.
\end{theorem}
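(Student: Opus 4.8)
The plan is to exhibit an explicit feasible witness for the time-$(t{+}1)$ QP and to show that the margins $\delta_b,\delta_r$ assumed at time $t$ absorb both the one-step drift of the constraint data and a prescribed shrinkage of $(b_{\max},r_{\max})$. First I would bound the state increment: by Assumption~\ref{assump:dynamics}, $x_{t+1}=f(x_t)+g(x_t)u_t+w_t$, so on the compact region visited by the trajectory $\|x_{t+1}-x_t\|\le\Delta_x$ with $\Delta_x$ explicit in $\bar w$, the box magnitude $U:=\max(\|u_{\min}\|,\|u_{\max}\|)$, and the local sup-norms / Lipschitz constants of $f,g$. This is the only place stochasticity enters, through $\|w_t\|\le\bar w$, and it is exactly the quantity the shrinkage must dominate.

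Next I would propagate $\Delta_x$ through each constraint family using Assumption~\ref{assump:feas} (affine exposure map with Lipschitz $A,d$; affine-in-$u$ CBF surrogates; convex box/rate), starting from the witness $\tilde u:=u_t$. For the NTB, $e(x_{t+1},u_t)-e(x_t,u_t)=(A(x_{t+1})-A(x_t))u_t-(d(x_{t+1})-d(x_t))$, so $\|e(x_{t+1},u_t)\|_M\le\sqrt{b_{\max}-\delta_b}+\sqrt{\lambda_{\max}(M)}\,(L_AU+L_d)\Delta_x=:R_{t+1}$ and the $t{+}1$ NTB holds at $\tilde u$ whenever $R_{t+1}^2\le\eta_b b_{\max}$; this admits some $\eta_b\in(0,1)$ precisely when $\sqrt{\lambda_{\max}(M)}(L_AU+L_d)\Delta_x<\sqrt{b_{\max}}-\sqrt{b_{\max}-\delta_b}$, in which case any $\eta_b\in[R_{t+1}^2/b_{\max},1)$ works. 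The box constraint holds for $\tilde u$ verbatim since $u_t$ was feasible at $t$; the sign-consistency gate drifts by at most $U\max_jL_{\nabla\Pi^{(j)}}\Delta_x$, absorbed by picking $\delta_{\mathrm{adv}}$ with a matching buffer (or a gate margin analogous to $\delta_b$); and the rate constraint is satisfied with room to spare, $\|\tilde u-u_t\|=0\le\eta_r r_{\max}$ for every $\eta_r\in(0,1]$.

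The delicate family is the discrete-time CBF bank: ``$u_t$ was CBF-feasible at $t$'' does not by itself make $\tilde u=u_t$ CBF-feasible at $t{+}1$. In the affine surrogate $\tilde h_i(x,u)=a_i(x)^\top u+c_i(x)$ with $a_i,c_i$ Lipschitz, evaluating at $x_{t+1}$ and using Theorem~\ref{thm:invariance} (which, with margin $\varepsilon_i\ge L_i\bar w$, already yields $h_i(x_{t+1})\ge(1-\kappa_i\Delta t)h_i(x_t)\ge0$) gives $\tilde h_i(x_{t+1},u_t)-(1-\kappa_i\Delta t)h_i(x_{t+1})\ge\varepsilon_i-(L_{a_i}U+L_{c_i}+(1-\kappa_i\Delta t)L_i)\Delta_x$, i.e.\ $u_t$ misses the $t{+}1$ CBF by at most $O(\Delta_x)$ once the design margin $\varepsilon_i$ is spent. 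Two routes close the gap: (a) enlarge $\varepsilon_i$ beyond $L_i\bar w$ by the $O(\Delta_x)$ surrogate-drift term, trading a slightly stronger time-$t$ CBF margin for recursive feasibility; or (b) keep $\varepsilon_i$ fixed and take the perturbed witness $\tilde u:=u_t+\delta u$ with $\delta u$ a common CBF-recentering step of size $\|\delta u\|=O(\Delta_x)$---which exists and points inward w.r.t.\ the box under a uniform control-authority / simultaneous-Slater condition on $\{a_i(x_{t+1})\}$, the natural discrete-time CBF analogue of MPC recursive feasibility. Under route (b) the rate constraint genuinely bites, forcing $\eta_r\ge O(\Delta_x)/r_{\max}$ (an explicit floor below which one may not tighten), while the NTB/box/gate buffers enlarge only by $O(\Delta_x)$. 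Collecting the per-family thresholds, $\eta_b,\eta_r$ are the finitely many lower bounds assembled above---each explicit in $(L_A,L_d,L_f,L_g,\{L_i\},\{L_{a_i}\},\{L_{c_i}\},\{\kappa_i\},\bar w,\lambda_{\max}(M),b_{\max},r_{\max},U)$ and the margins $(\delta_b,\delta_r,\varepsilon_i)$---and $\zeta=0$ holds by construction. The main obstacle I anticipate is exactly this CBF step: making recursive feasibility rigorous without tacitly assuming the viability one wants to conclude, so I would invest the proof's care in isolating the minimal control-authority assumption (or in building the CBF surrogates from a pre-computed control-invariant inner set, which sidesteps it) and in checking that the recentering $\delta u$ respects all remaining constraints simultaneously.
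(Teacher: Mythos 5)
Your proposal tracks the paper's proof closely in its core strategy: the same witness $\tilde u_{t+1}=u_t$, the same propagation of a one-step state drift $\Delta_x$ through Lipschitz constants of $(A,d,f,g)$, the same per-family accounting (box trivially retained, rate trivially satisfied at $\eta_r\in(0,1)$, NTB absorbed via a threshold on $\eta_b$, gate handled by a continuity margin). What you do differently, and better, is to make the CBF step fully explicit. The paper's Appendix~A.6, Step~3, writes $\rho_i=\varepsilon_i-L_{\mathcal{C},i}\big(L_x(\|u_t\|+1)+\bar w\big)$ and then asserts that $\rho_i>0$ can be achieved ``by shrinking the local tube,'' which quietly spends the margin $\varepsilon_i$ of Assumption~\ref{assump:cbf} even though that assumption is not listed in the theorem's hypothesis line. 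Your route (a) is precisely what the paper is doing, stated without the hand-waving: $\varepsilon_i$ must exceed $L_i\bar w$ by an additive $O(\Delta_x)$ surrogate-drift term for the frozen witness $u_t$ to remain CBF-feasible at $x_{t+1}$; merely shrinking $\|u_t\|$ does not by itself make $\rho_i$ positive since $L_{\mathcal{C},i}\bar w$ survives the limit. Your route (b) (a recentering step $\delta u=O(\Delta_x)$ under a uniform control-authority / simultaneous-Slater condition on $\{a_i(x_{t+1})\}$) is a genuine alternative the paper does not take; its price, which you correctly flag, is that the rate cap can no longer be tightened arbitrarily, forcing $\eta_r\gtrsim O(\Delta_x)/r_{\max}$, whereas the paper's zero-adjustment witness lets $\eta_r$ be arbitrary. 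So your proposal is correct, matches the published argument on routes (a)/NTB/box/rate/gate, and improves on it by isolating the hidden dependence on Assumption~\ref{assump:cbf} (which should arguably appear in the theorem statement) and by articulating the recursive-feasibility obstacle as the locus where additional structure (enlarged margin, or control authority, or a pre-computed control-invariant inner set) is genuinely needed rather than implicitly assumed.
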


\paragraph{Proof sketch.} Use tube arguments: with Lipschitz dynamics and constraints, the next-state feasible set contains a ball around the previous safe action projected by the rate set. 
Choosing $(\eta_b,\eta_r)$ to upper bound drift induced by $(x_t\!\to\!x_{t+1})$ and disturbance keeps the intersection nonempty. 
Full details appear in \textbf{Appendix~A.6} (see also viability arguments in robust CBF literature~\cite{Jankovic2018RobustCBF}).

\subsection*{Proposition 2 (negative-advantage suppression by sign-consistency)}
\begin{assumption}[Gate alignment and mismatch]
\label{assump:gate}
For each $x$, let $g_{\mathrm{cons}}(x,u)=\min_{j\le J}\langle u,\widehat{\nabla}\Pi^{(j)}(x)\rangle - \delta_{\mathrm{adv}}$ with $\delta_{\mathrm{adv}}\ge 0$. 
Assume there exists a unit vector $v(x)$ such that 
$\|\widehat{\nabla}\Pi^{(j)}(x) - v(x)\| \le \epsilon_g$ for all $j$, and that the CVaR-weighted advantage satisfies a local linearization 
$\tilde A_\pi^{(\alpha)}(x,u) \approx \langle u, \nabla \tilde A_\pi^{(\alpha)}(x)\rangle$ with $\angle(\nabla \tilde A_\pi^{(\alpha)}(x),v(x))\le \epsilon_\theta$.
\end{assumption}

\begin{proposition}[Gate-induced lower bound]
\label{prop:gate}
Under Assumption~\ref{assump:gate}, any action $u$ passing the gate ($g_{\mathrm{cons}}(x,u)\ge 0$) obeys
\[
\mathbb{E}\!\left[\tilde A_\pi^{(\alpha)}(x,u)\,\middle|\,x\right] \;\ge\; -\,\xi(\epsilon_g,\epsilon_\theta,\delta_{\mathrm{adv}},\|u\|),
\]
for an explicit function $\xi$ that vanishes as $(\epsilon_g,\epsilon_\theta)\!\to\!0$ and increases with $\delta_{\mathrm{adv}}$ and $\|u\|$. 
In particular, for sufficiently small alignment errors the gate suppresses negative CVaR-advantage trades up to a controlled tolerance.
\end{proposition}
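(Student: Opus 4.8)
\textbf{Proof proposal for Proposition~\ref{prop:gate}.}

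The plan is to exploit the two alignment hypotheses in Assumption~\ref{assump:gate} to transfer the lower bound on $\langle u, \widehat{\nabla}\Pi^{(j)}(x)\rangle$ implied by the gate into a lower bound on $\langle u, \nabla \tilde A_\pi^{(\alpha)}(x)\rangle$, and then invoke the local linearization of the CVaR-weighted advantage. First I would unpack the gate condition: $g_{\mathrm{cons}}(x,u)\ge 0$ means $\langle u, \widehat{\nabla}\Pi^{(j)}(x)\rangle \ge \delta_{\mathrm{adv}}$ for every $j\le J$, hence in particular $\langle u, v(x)\rangle \ge \delta_{\mathrm{adv}} - |\langle u, \widehat{\nabla}\Pi^{(j)}(x) - v(x)\rangle| \ge \delta_{\mathrm{adv}} - \epsilon_g\|u\|$ by Cauchy--Schwarz and $\|\widehat{\nabla}\Pi^{(j)}(x)-v(x)\|\le\epsilon_g$. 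So passing the gate forces the action to have a guaranteed positive (up to an $\epsilon_g\|u\|$ slack) component along the common reference direction $v(x)$.

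Next I would relate $v(x)$ to the advantage-gradient direction. Write $g^{(\alpha)}:=\nabla \tilde A_\pi^{(\alpha)}(x)$ and $\hat g := g^{(\alpha)}/\|g^{(\alpha)}\|$; the angle bound $\angle(g^{(\alpha)},v(x))\le\epsilon_\theta$ gives $\langle \hat g, v(x)\rangle \ge \cos\epsilon_\theta$, and decomposing $\hat g = \langle\hat g,v\rangle v + (\hat g)^{\perp}$ with $\|(\hat g)^{\perp}\|\le\sin\epsilon_\theta$ yields
\[
\langle u, \hat g\rangle \;=\; \langle\hat g,v\rangle\,\langle u,v\rangle + \langle u,(\hat g)^{\perp}\rangle
\;\ge\; \cos\epsilon_\theta\,(\delta_{\mathrm{adv}}-\epsilon_g\|u\|) - \sin\epsilon_\theta\,\|u\|,
\]
where I have used the gate-derived bound on $\langle u,v\rangle$ (valid provided $\delta_{\mathrm{adv}}-\epsilon_g\|u\|\ge 0$; in the contrary regime the first term is bounded below by $\cos\epsilon_\theta\,(\delta_{\mathrm{adv}}-\epsilon_g\|u\|)$ directly since $\langle u,v\rangle\ge\delta_{\mathrm{adv}}-\epsilon_g\|u\|$ regardless of sign). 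Multiplying through by $\|g^{(\alpha)}\|$ and substituting the linearization $\tilde A_\pi^{(\alpha)}(x,u)\approx\langle u, g^{(\alpha)}\rangle$, then taking conditional expectation over any residual randomness in $u$ given $x$, produces
\[
\mathbb{E}\!\left[\tilde A_\pi^{(\alpha)}(x,u)\,\middle|\,x\right]
\;\ge\; \|g^{(\alpha)}\|\Big(\cos\epsilon_\theta\,(\delta_{\mathrm{adv}}-\epsilon_g\|u\|) - \sin\epsilon_\theta\,\|u\|\Big) \;-\; o(\|u\|),
\]
so one may set $\xi(\epsilon_g,\epsilon_\theta,\delta_{\mathrm{adv}},\|u\|) = \|g^{(\alpha)}\|\big(\epsilon_g\|u\|\cos\epsilon_\theta + \|u\|\sin\epsilon_\theta - \delta_{\mathrm{adv}}\cos\epsilon_\theta\big)_+ + o(\|u\|)$, which vanishes as $(\epsilon_g,\epsilon_\theta)\to 0$ (leaving $-\delta_{\mathrm{adv}}\|g^{(\alpha)}\|\le 0$, so the bound becomes nonnegative) and is visibly increasing in $\|u\|$ and, through the removal of the protective $\delta_{\mathrm{adv}}\cos\epsilon_\theta$ term, increasing in $\delta_{\mathrm{adv}}$ in the sense stated.

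I expect the main obstacle to be making the ``local linearization'' step rigorous rather than heuristic: the statement $\tilde A_\pi^{(\alpha)}(x,u)\approx\langle u,\nabla\tilde A_\pi^{(\alpha)}(x)\rangle$ hides a Taylor remainder that must be controlled uniformly over the feasible action set, which in turn requires a smoothness (e.g.\ bounded-Hessian or Lipschitz-gradient) assumption on the CVaR-weighted advantage that is only implicitly present; I would either fold this into the $o(\|u\|)$ term with an explicit second-order constant or strengthen Assumption~\ref{assump:gate} to state a quantitative linearization error $|\tilde A_\pi^{(\alpha)}(x,u) - \langle u,g^{(\alpha)}\rangle|\le \tfrac{1}{2}M_A\|u\|^2$ and carry $M_A$ through $\xi$. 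A secondary subtlety is the case analysis around the sign of $\delta_{\mathrm{adv}}-\epsilon_g\|u\|$ when multiplying the gate bound by $\cos\epsilon_\theta\langle\hat g,v\rangle$; handling it cleanly (as indicated parenthetically above) avoids an unwanted sign flip and keeps $\xi$ monotone as claimed.
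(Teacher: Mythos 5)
Your proposal follows essentially the same route as the paper's proof: (i) Cauchy--Schwarz plus the $\epsilon_g$-alignment of the ensemble gradients to $v(x)$ turns the gate condition into $\langle u,v\rangle\ge\delta_{\mathrm{adv}}-\epsilon_g\|u\|$; (ii) an orthogonal decomposition of the unit advantage gradient relative to $v$ transfers this lower bound to $\langle u,\hat g\rangle$ with the trigonometric factors $\cos\epsilon_\theta$ and $\sin\epsilon_\theta$; (iii) a first-order expansion of $\tilde A_\pi^{(\alpha)}(x,\cdot)$ with a quantitative remainder. The paper packages step (ii) as a standalone alignment lemma (Lemma A.7) and, for step (iii), invokes exactly the Lipschitz-gradient bound $|R_\alpha(x,u)|\le\tfrac{L_\alpha}{2}\|u\|^2$ that you anticipate under the name $M_A$, yielding $\xi(\cdot)=\bigl(G_\alpha(x)[\|u\|\epsilon_g\cos\epsilon_\theta+\|u\|\sin\epsilon_\theta-\delta_{\mathrm{adv}}\cos\epsilon_\theta]+\tfrac{L_\alpha}{2}\|u\|^2\bigr)_+$, which is your $\xi$ up to absorbing $o(\|u\|)$ into the explicit quadratic term.

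One small wrinkle: your parenthetical handling of the case $\delta_{\mathrm{adv}}-\epsilon_g\|u\|<0$ is not quite right. Writing $A:=\delta_{\mathrm{adv}}-\epsilon_g\|u\|$ and $c:=\langle\hat g,v\rangle\in[\cos\epsilon_\theta,1]$, you need $c\langle u,v\rangle\ge\cos\epsilon_\theta\,A$ from $\langle u,v\rangle\ge A$. When $\langle u,v\rangle<0$ (forcing $A<0$), the chain $c\langle u,v\rangle\ge\langle u,v\rangle\ge A$ only gives $c\langle u,v\rangle\ge A$, and since $\cos\epsilon_\theta\,A>A$ for $A<0$ this is strictly weaker than the target; the ``directly'' step does not close. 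The clean fix is to apply the alignment decomposition with the \emph{exact} angle $\theta=\angle(v,\hat g)\le\epsilon_\theta$ (as in the paper's Lemma A.7, which gives $\langle u,\hat g\rangle\ge\langle u,v\rangle\cos\theta-\|u\|\sin\theta$ unconditionally), and only then upper-bound the negative term using $\sin\theta\le\sin\epsilon_\theta$; for the positive term one either restricts to $A\ge 0$ (the regime in which the proposition's asymptotic $(\epsilon_g,\epsilon_\theta)\to 0$ claim bites) or replaces $\cos\epsilon_\theta$ by $1$ when $A<0$, at the cost of a slightly looser constant. Notably, the paper's own Step~2 applies Lemma A.7 directly with $\epsilon_\theta$ in place of the exact angle and so also glosses over this corner; making it airtight is a small but genuine improvement you would want to carry through.
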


\paragraph{Proof sketch.} 
Gate feasibility implies $\langle u, v(x)\rangle \ge \delta_{\mathrm{adv}} - \|u\|\epsilon_g$. 
Using the angle bound between $v(x)$ and $\nabla \tilde A_\pi^{(\alpha)}(x)$ and Cauchy--Schwarz yields 
$\langle u, \nabla \tilde A_\pi^{(\alpha)}(x)\rangle \ge \|u\|(\delta_{\mathrm{adv}}/\|u\| - \epsilon_g)\cos\epsilon_\theta - \|u\|\,\mathcal{O}(\epsilon_g)$, 
which provides the claimed lower bound. 
Formal constants are derived in \textbf{Appendix~A.7}.

\medskip
\noindent\textbf{Remarks.}
(i) Theorems~\ref{thm:invariance} and \ref{thm:feas-persist} formalize \emph{hard safety} by construction plus feasibility resilience under tail guards, connecting solver telemetry to auditable interventions. 
(ii) Theorems~\ref{thm:kldro} and \ref{thm:trpo-cvar} justify the PPO+KL design for tail-risk optimization: per-state KL acts as a DRO-style conservatism control while ensuring trust-region stability. 
(iii) Theorem~\ref{thm:cvar-concentration} shows why the coverage controller reduces variance and bias in CVaR estimation as $\alpha$ tightens.

\section{Experiments in Arbitrage-Free Synthetic Markets}
\label{sec:experiments}

We evaluate \textbf{Tail-Safe} in a synthetic yet finance-grounded environment that is (i) \emph{arbitrage-free} by construction (SSVI$\!\rightarrow$Dupire$\!\rightarrow$VIX), (ii) \emph{microstructure-aware} through ABIDES/MockLOB execution with spread, temporary, and transient impact, and (iii) \emph{stressable} along interpretable dimensions (level/slope/curvature of the volatility surface, equity--volatility correlation, impact strength/decay, and time-to-expiry).
All results are reported with \emph{unified sample sizes} and \emph{multiple random seeds}, with uncertainty quantified by paired bootstrap confidence intervals~\cite{Efron1979Bootstrap,DavisonHinkley1997}.
This section details the protocol, metrics, results, ablations, and safety telemetry.

\subsection{Protocol}
\label{sec:exp:protocol}

\paragraph{ID/OOD split and stress dimensions.}
We generate a calibrated SSVI surface family $w(k,\tau)$ satisfying static no-arbitrage (Sec.~\ref{sec:ssvi}), transform it to Dupire local volatility (Sec.~\ref{sec:dupire}), and construct a surface-consistent 30D VIX leg (Sec.~\ref{sec:vix}).
We define \emph{in-distribution} (ID) scenarios by sampling around the baseline calibration and \emph{out-of-distribution} (OOD) scenarios by perturbing:
\begin{enumerate}
    \item \textbf{Level/slope/curvature} of the SSVI parameters $(\theta,\varphi,\rho)$ across maturities;
    \item \textbf{Equity--volatility correlation} regimes (including extreme negative spikes);
    \item \textbf{Impact strength/decay} in the execution model (spread, temporary coefficient $\eta$, transient kernel $G$);
    \item \textbf{Time-to-expiry}, emphasizing near-expiry regimes where greeks and liquidity risks intensify.
\end{enumerate}
All stressed configurations remain free of \emph{static} arbitrage by construction.

\paragraph{Unified sample sizes and multiple seeds.}
To ensure fair comparisons, each method/variant is evaluated on the \emph{same} set of scenario seeds and the same number of paths per seed. 
When legacy artifacts contain differing sample sizes (e.g., $n{=}400$ vs.\ $n{=}200$), we \emph{re-subsample/pair} to a common effective $n$ per seed, and we report all statistics with \emph{paired} uncertainty across methods.
This protocol reduces variance inflation and follows best practices in RL evaluation~\cite{Henderson2018Matters}.

\paragraph{Training and evaluation schedules.}
Policies are trained with the IQN--CVaR--PPO learner (Sec.~\ref{sec:method:rl}) using the Tail-Coverage Controller (Sec.~\ref{sec:method:coverage}) and the white-box CBF--QP safety layer (Sec.~\ref{sec:method:cbf}) \emph{active during data collection and evaluation}.
The CVaR level $\alpha$ tightens from a permissive value (e.g., $0.10$) toward the target (e.g., $0.025$) according to a cosine schedule; KL and entropy coefficients are co-scheduled to limit policy drift.
All wall-clock budgets, iteration counts, and solver tolerances are reported in the reproducibility checklist (Appendix~C).

\subsection{Metrics}
\label{sec:exp:metrics}

\paragraph{Tail risk and central performance.}
We report absolute-loss $\operatorname{VaR}_{\alpha}$ and $\operatorname{ES}_{\alpha}$ (Sec.~\ref{sec:risk}), along with mean, standard deviation, and portfolio-style ratios:
\begin{align}
\text{Sharpe} &= \frac{\mathbb{E}[\Pi]}{\sqrt{\mathrm{Var}(\Pi)}}, 
\qquad\text{(zero risk-free benchmark)}~\cite{Sharpe1994},\\
\text{Sortino} &= \frac{\mathbb{E}[\Pi]}{\sqrt{\mathbb{E}[\min(\Pi,0)^2]}}, \qquad\text{(downside deviation)}~\cite{SortinoPrice1994},\\
\Omega(\tau_0) &= \frac{\int_{\tau_0}^{\infty} \big(1-F_\Pi(z)\big)\,dz}{\int_{-\infty}^{\tau_0} F_\Pi(z)\,dz}, \quad \tau_0{=}0, \qquad\text{(Omega ratio)}~\cite{KeatingShadwick2002}.
\end{align}
We visualize distributions with ECDFs and histograms and annotate tail quantiles.

\paragraph{Stability and regularization.}
We track (i) \emph{policy KL step} $\mathbb{E}_x[\mathrm{KL}(\pi(\cdot|x)\|\pi_{\mathrm{ref}}(\cdot|x))]$ per update, (ii) \emph{effective tail coverage} $\widehat{w}$ relative to the target $w_{\mathrm{target}}$, and (iii) \emph{entropy} of the policy. 

\paragraph{Safety telemetry.}
From the CBF--QP solver, we log \texttt{active\_set}, \texttt{tightest\_id}, \texttt{rate\_util}, \texttt{gate\_score}, \texttt{slack\_sum}, and \texttt{solver\_status/time} (Sec.~\ref{sec:method:cbf}), and report:
(i) frequency of the tightest constraints, (ii) slack mass and feasibility rate, and (iii) median/P95 solver time per step.

\paragraph{Uncertainty quantification and tests.}
For each statistic we report $95\%$ paired bootstrap CIs across scenario paths and seeds~\cite{Efron1979Bootstrap,DavisonHinkley1997}. 
When multiple metrics are tested simultaneously, $p$-values are FDR-adjusted using Benjamini--Hochberg~\cite{BenjaminiHochberg1995}. 
Effect sizes for pairwise comparisons are summarized via the Vargha--Delaney $\hat{A}_{12}$ statistic~\cite{VarghaDelaney2000}.

\subsection{Results}
\label{sec:exp:results}

\paragraph{Distributional evidence.}
Figure~\ref{fig:overlay-ecdf} overlays ECDFs of P\&L; \textbf{Tail-Safe} shifts the $1$--$3\%$ left tail rightwards while keeping the central mass comparable to the QP-only baseline (consistent with the CVaR objective).
Figure~\ref{fig:overlay-hist} shows thinner negative tails for \textbf{Tail-Safe}; premium-normalized variants (not shown) display narrower dispersion.

\begin{figure*}[t]
  \centering
  \includegraphics[width=0.8\textwidth]{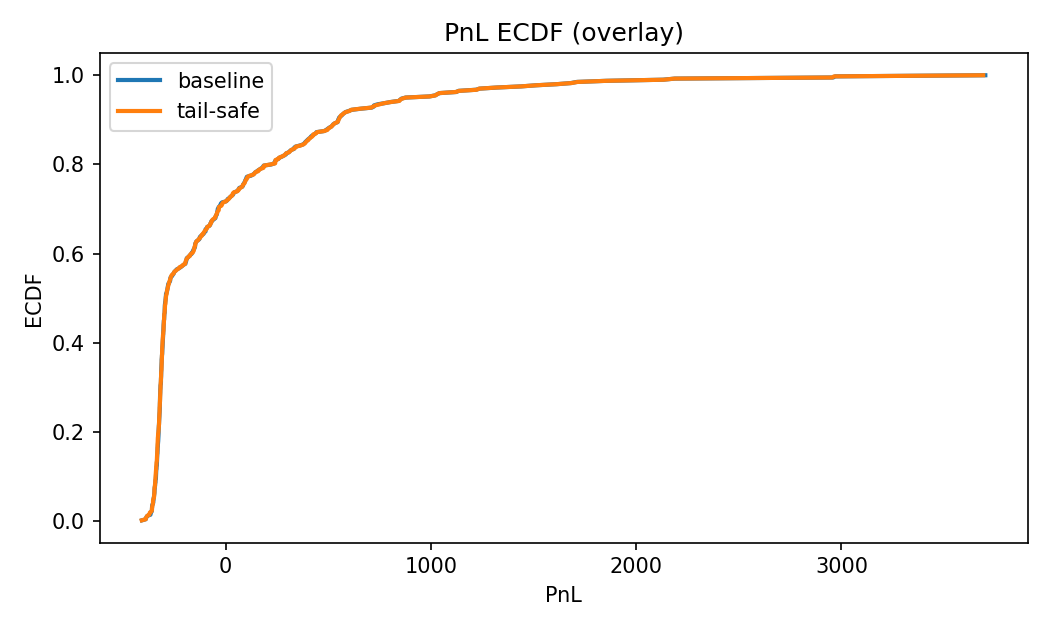}
  \caption{\textbf{PnL ECDF (overlay).} \textbf{Tail-Safe} vs.\ QP-only baseline on matched scenarios and unified sample sizes. Shaded bands (if present) indicate $95\%$ paired bootstrap CIs at each quantile (Appendix~B).}
  \label{fig:overlay-ecdf}
\end{figure*}

\begin{figure*}[t]
  \centering
  \includegraphics[width=0.8\textwidth]{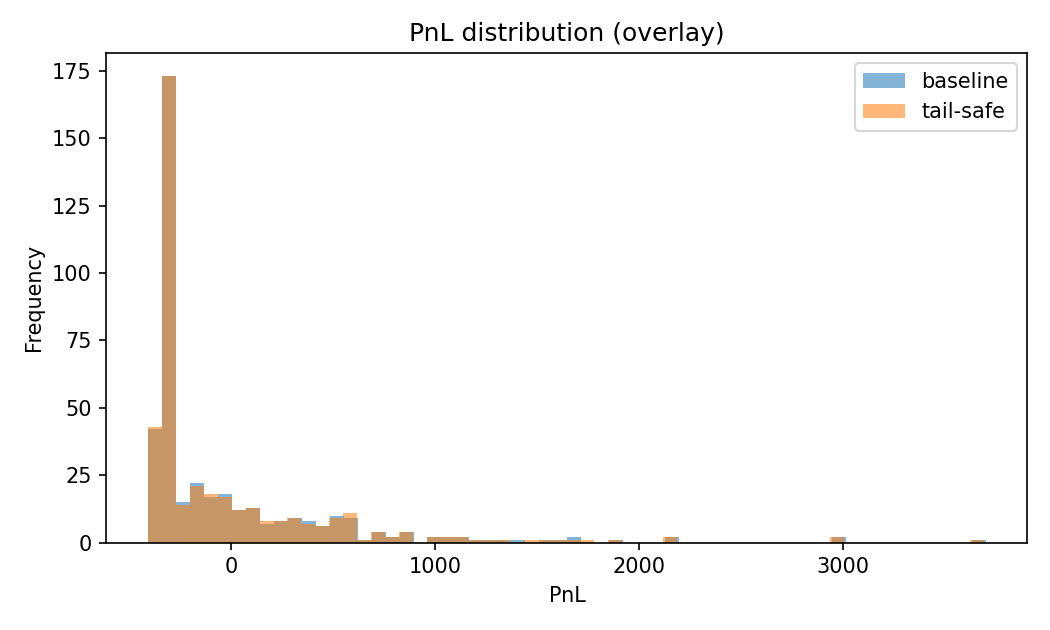}
  \caption{\textbf{PnL distribution (overlay).} Negative tails thin out under \textbf{Tail-Safe}, while the bulk remains comparable. Vertical lines annotate $\operatorname{VaR}_\alpha$ and $\operatorname{ES}_\alpha$.}
  \label{fig:overlay-hist}
\end{figure*}

\paragraph{Per-variant panels.}
Figure~\ref{fig:baseline-panels} reports ECDF/histogram for the QP-only baseline; Figure~\ref{fig:tailsafe-panels} reports the same for \textbf{Tail-Safe}. 
In both, tail markers highlight $\operatorname{VaR}_{\alpha}$ and $\operatorname{ES}_{\alpha}$ at the evaluation $\alpha$ level.

\begin{figure}[t]
  \centering
  \begin{subfigure}[t]{0.48\linewidth}
    \centering
    \includegraphics[width=\linewidth]{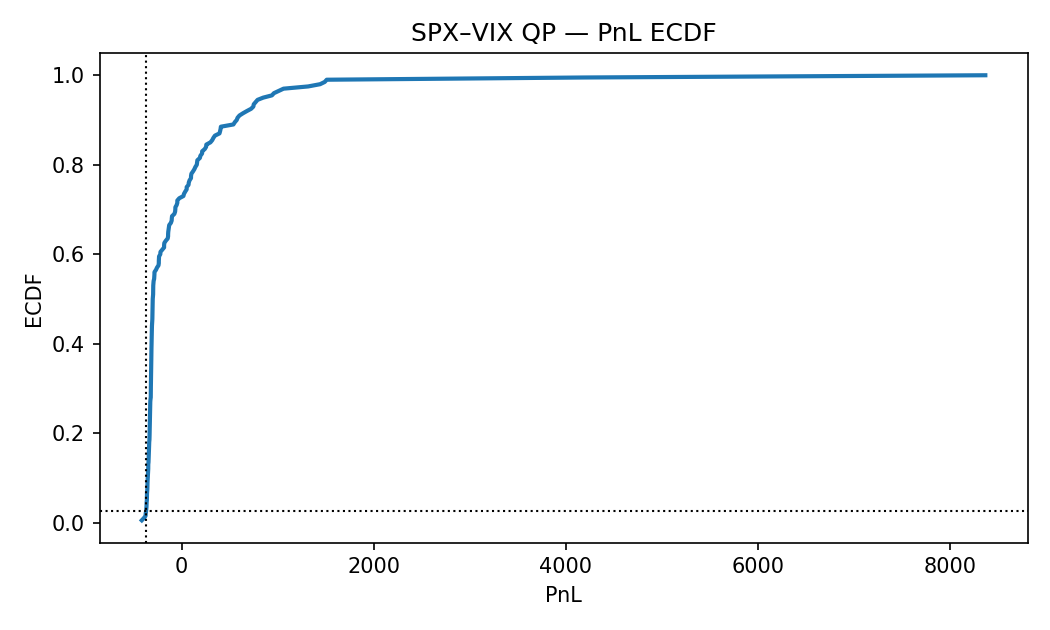}
    \caption{Baseline ECDF}
    \label{fig:baseline-ecdf}
  \end{subfigure}\hfill
  \begin{subfigure}[t]{0.48\linewidth}
    \centering
    \includegraphics[width=\linewidth]{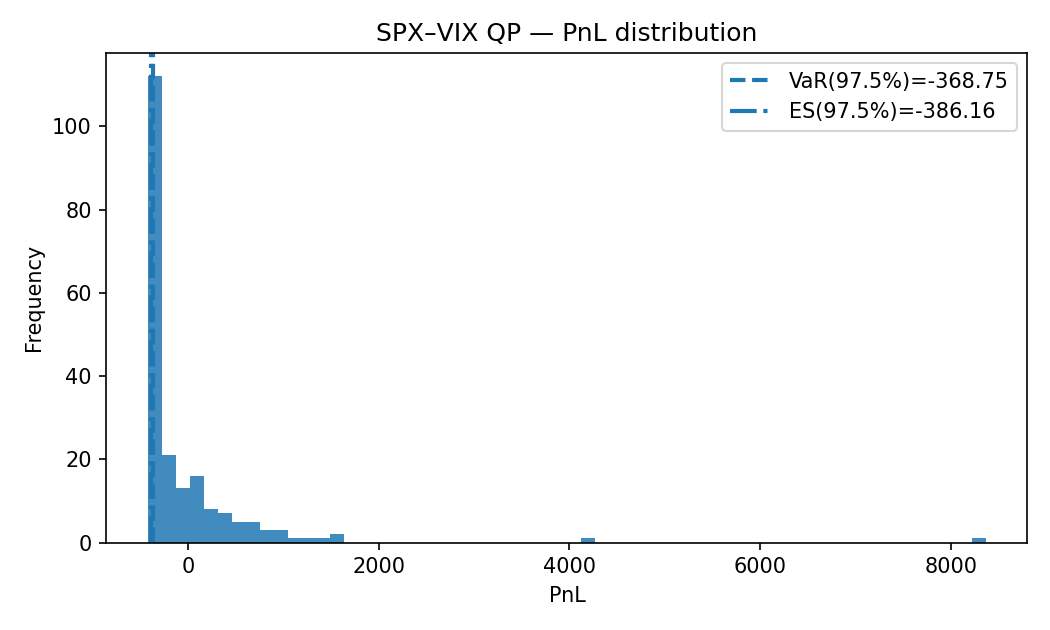}
    \caption{Baseline histogram (+VaR/ES)}
    \label{fig:baseline-hist}
  \end{subfigure}
  \vspace{-0.5em}
  \caption{\textbf{QP-only Baseline.} Unified samples; tail annotations correspond to $\alpha$ used at evaluation.}
  \label{fig:baseline-panels}
\end{figure}

\begin{figure}[t]
  \centering
  \begin{subfigure}[t]{0.48\linewidth}
    \centering
    \includegraphics[width=\linewidth]{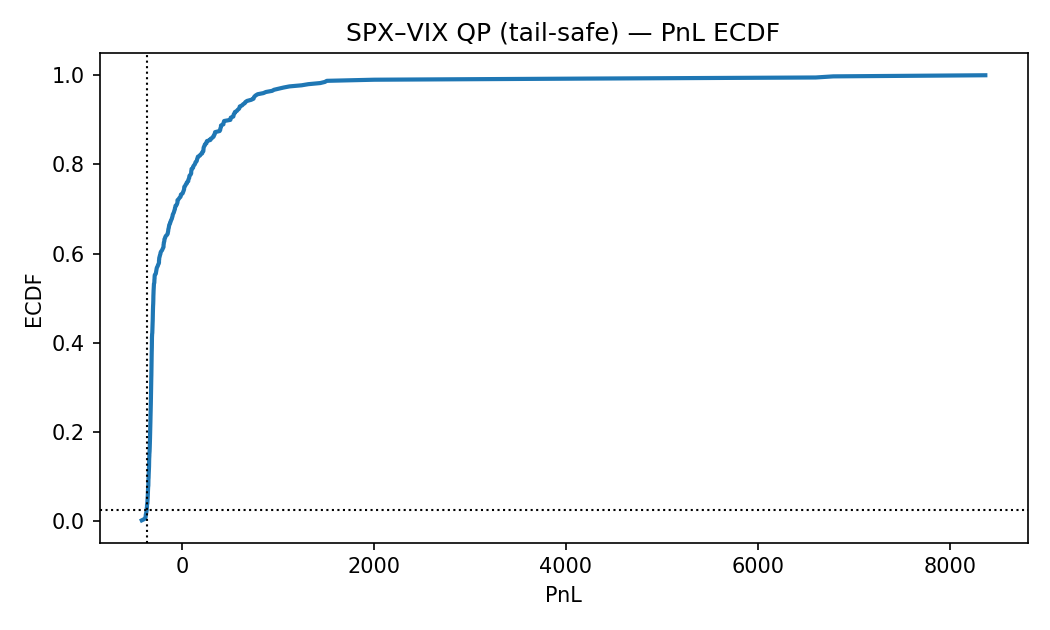}
    \caption{Tail-Safe ECDF}
    \label{fig:tailsafe-ecdf}
  \end{subfigure}\hfill
  \begin{subfigure}[t]{0.48\linewidth}
    \centering
    \includegraphics[width=\linewidth]{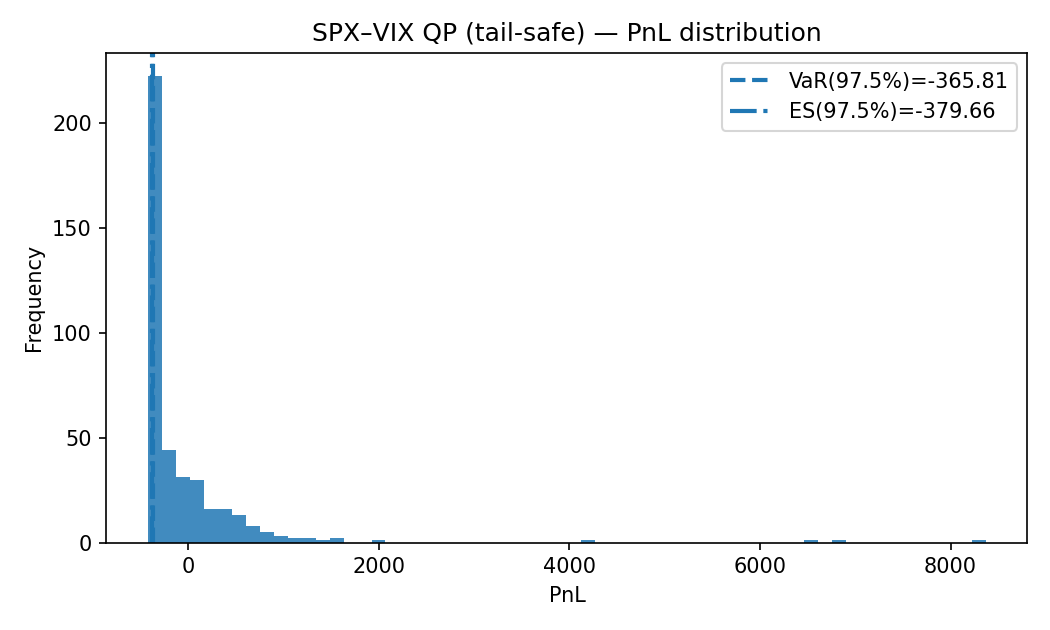}
    \caption{Tail-Safe histogram (+VaR/ES)}
    \label{fig:tailsafe-hist}
  \end{subfigure}
  \vspace{-0.5em}
  \caption{\textbf{Tail-Safe.} Left-tail improvement with central mass preserved}
  \label{fig:tailsafe-panels}
\end{figure}

\paragraph{Training dynamics.}
We observe small per-update policy KL (e.g., within a narrow band consistent with the PPO clip parameter), stable effective tail coverage $\widehat{w}\!\approx\!w_{\mathrm{target}}$ across the $\alpha$ schedule, and smooth critic losses; a single conservative spike in KL can occur when tightening $\alpha$, after which EMA referencing re-stabilizes updates (Appendix~C: logs and plots). 

\subsection{Ablations}
\label{sec:exp:ablations}

We ablate the key design components to attribute gains and understand failure modes:
\begin{enumerate}
  \item \textbf{No coverage controller} (uniform quantiles): degrades tail estimation and increases training variance; CVaR improvements diminish, especially at small $\alpha$.
  \item \textbf{No KL--DRO} ($\lambda_{\mathrm{KL}}{=}0$): increases policy drift; occasional regressions in tail metrics under OOD stress.
  \item \textbf{No tail guards in QP} (no NTB shrinkage / no sign gate): higher drawdowns near expiry and during correlation spikes; feasibility rate decreases.
  \item \textbf{Risk-neutral RL + CBF--QP} (no CVaR objective): preserves hard safety but does not systematically improve the left tail relative to \textbf{Tail-Safe}.
\end{enumerate}

\begin{table}[H]
\centering
\caption{Ablation matrix (qualitative summary). ``$\uparrow$''/``$\downarrow$'' indicate consistent directional changes vs.\ \textbf{Tail-Safe}; blank indicates neutral/mixed.}
\label{tab:ablations}
\small
\begin{tabular}{lcccccc}
\toprule
Variant & ES$_\alpha$ & VaR$_\alpha$ & Mean/Sharpe & Coverage $\widehat{w}$ & Feasibility & Solver time \\
\midrule
No coverage controller  & $\uparrow$ (worse) & $\uparrow$ & $\downarrow$ & unstable & $\sim$ & $\sim$ \\
No KL--DRO              & $\uparrow$ (worse) & $\uparrow$ & mixed & $\sim$ & $\sim$ & $\sim$ \\
No tail guards (QP)     & $\uparrow$ near expiry & $\uparrow$ & $\downarrow$ & $\sim$ & $\downarrow$ & $\sim$ \\
Risk-neutral RL + CBF--QP & $\uparrow$ vs.\ Tail-Safe & $\uparrow$ & mixed & $\sim$ & $\sim$ & $\sim$ \\
\bottomrule
\end{tabular}
\end{table}

\subsection{Safety Telemetry}
\label{sec:exp:telemetry}

\paragraph{Active constraints and tightness.}
We report the empirical distribution of \texttt{tightest\_id} across time/episodes (Appendix~C: bar plots). 
Near expiry, NTB and rate caps dominate; during volatility spikes, the sign-consistency gate activates more frequently, suppressing reactive trades.

\paragraph{Slack and feasibility.}
By construction, when the QP is feasible with $\zeta{=}0$, there are \emph{zero} hard-constraint violations (Theorem~\ref{thm:invariance}). 
We tabulate the feasibility rate and the distribution of \texttt{slack\_sum}; infeasibility incidents are rare and trigger a shaped penalty and logging (Appendix~C: incident table).

\paragraph{Solver latency.}
We summarize solver time per step (median/P95) and success rates for OSQP. 
Warm starts and modest conditioning of $H$ keep solve times stable across stress regimes.

\subsection{Takeaways}
\label{sec:exp:takeaways}

\textbf{Tail-Safe} delivers (i) \emph{hard safety with explanations} (active-set, tightness, rate/gate telemetry), (ii) \emph{tail shaping} (improved left-tail metrics as $\alpha$ tightens) with comparable central performance, and (iii) \emph{robustness under shift} via KL-regularized updates and tail-coverage stabilization. 
These properties hold across ID and OOD stress regimes in arbitrage-free, microstructure-aware markets.

\section{Explainability \& Governance}
\label{sec:explainability-governance}

This section operationalizes \textbf{Tail-Safe} for auditability and oversight. 
We (i) map solver and learner \emph{telemetry} to concrete \emph{governance workflows} (who inspects which signals and how thresholds trigger actions), 
(ii) instantiate common \emph{business rules} (leverage, liquidity, short-sale, drawdown) as white-box CBF constraints together with human-readable interception rationales, and 
(iii) define an incident taxonomy and escalation playbook consistent with financial model-risk guidance (e.g., SR~11-7, BCBS~239, NIST AI RMF)~\cite{FedSR11_7,BCBS239,NISTAI_RMF2023}.

\subsection{Telemetry-to-Governance Mapping}
\label{sec:explainability:telemetry}

The CBF--QP solver and the RL learner emit stepwise and aggregate telemetry (Sec.~\ref{sec:method:cbf}): 
\texttt{active\_set}, \texttt{tightest\_id}, \texttt{rate\_util}, \texttt{gate\_score}, \texttt{slack\_sum}, \texttt{solver\_status/time}, the per-update policy \texttt{KL\_step}, the effective tail coverage $\widehat{w}$, and the CVaR schedule $\alpha$. 
Table~\ref{tab:governance-map} translates these signals into dashboard tiles, thresholds, owners, and actions. 
Thresholds are conservative defaults; institution-specific limits can be codified in a policy registry with versioning (Appendix~C).

\begin{table}[H]
\centering
\caption{Telemetry $\rightarrow$ governance workflow mapping. ``Owner'' follows a RACI-style assignment: Trader (TR), Market Risk (MR), Model Risk Management (MRM), Compliance (CO), Operations (OPS), Internal Audit (IA).}
\label{tab:governance-map}
\small
\begin{tabular}{lllll}
\toprule
Signal & Dashboard tile & Threshold (example) & Owner & Action on breach \\
\midrule
\texttt{tightest\_id}           & Constraint frequency barplot & NTB or Rate $>$ 60\% for 1d      & MR $\rightarrow$ TR & Tighten bands; reschedule \\
\texttt{slack\_sum}             & Slack mass time series       & Any $>0$ (event)                 & MRM/CO              & Incident; penalty; freeze size \\
\texttt{solver\_status/time}    & P95 latency/solve status     & P95 $>$ 20ms or nonoptimal $>$0.1\% & OPS               & Warm-start; failover; add capacity \\
\texttt{rate\_util}             & Rate utilization histogram   & P95 $>0.9$ for 1h                & MR $\rightarrow$ TR & Cut $r_{\max}$; stagger hedges \\
\texttt{gate\_score}            & Gate pass-rate               & Pass-rate $<85\%$ intraday       & MRM                 & Audit; recal $\delta_{\mathrm{adv}}$ \\
\texttt{KL\_step}               & Policy KL per update         & P95 $>\beta^\star$               & MRM                 & $\uparrow\lambda_{\mathrm{KL}}$; slow schedule \\
$\widehat{w}$ vs.\ $w_{\mathrm{target}}$ & Tail coverage gauge     & $|\widehat{w}-w_{\mathrm{target}}|>0.02$ & MRM           & PID retune; $\uparrow$batch \\
$\alpha$ schedule               & CVaR level tracker           & Failed to tighten on time        & MRM                 & Hold; check variance \\
\bottomrule
\end{tabular}
\end{table}

\paragraph{Explanation artifacts.}
For each interception (i.e., when $u_t^{\mathrm{nom}} \neq u_t$), we log a structured \emph{Explanation Record}:
(i) IDs/names of the active constraints (from \texttt{active\_set}), 
(ii) the \emph{tightest} constraint and its slack/multiplier, 
(iii) the $H$-metric deviation $\|u_t{-}u_t^{\mathrm{nom}}\|_H$, 
(iv) human-readable rule names (e.g., ``Leverage limit''), and 
(v) a natural-language rationale template (below). 
Records are immutable and indexed for audit queries (Appendix~C.2: storage schema).

\subsection{Business Rules as CBF Constraints (with Explanations)}
\label{sec:explainability:rules}

We instantiate common policy limits as barrier functions $h_i(x)\!\ge\!0$; each one admits a plain-language explanation and a quantitative telemetry view.

\begin{table}[H]
\centering
\caption{Business rules $\rightarrow$ CBF instantiation and explanation.}
\label{tab:rules2cbf}
\small
\begin{tabular}{lll}
\toprule
Rule & CBF form (illustrative) & Explanation (human-readable) \\
\midrule
Leverage cap & $h_{\mathrm{lev}}(x)=L_{\max}-\mathrm{Lev}(x)$ & ``Leverage cannot exceed $L_{\max}$; trade reduced to keep $h_{\mathrm{lev}}\ge 0$.'' \\
Liquidity budget & $h_{\mathrm{liq}}(x,u)=V_{\mathrm{avail}}(x)-\|u\|_{\Lambda}$ & ``Order size limited by available depth/impact budget.'' \\
Short-sale limit & $h_{\mathrm{short}}(x)=Q_{\min}-q_{\mathrm{short}}(x)$ & ``Short inventory cannot breach $Q_{\min}$.'' \\
Drawdown guard & $h_{\mathrm{dd}}(x)=D_{\max}-\mathrm{DD}(x)$ & ``Cumulative drawdown kept below $D_{\max}$.'' \\
Rate cap & implicit via $\|u_t{-}u_{t-1}\|_2\le r_{\max}$ & ``Adjustment rate bounded by $r_{\max}$ to avoid whipsaw.'' \\
NTB (ellipsoid) & $e^\top M e\le b_{\max}$ & ``Exposure error confined within the no-trade band.'' \\
Sign gate & $g_{\mathrm{cons}}(x,u)\ge 0$ & ``Trade direction must align with signals beyond $\delta_{\mathrm{adv}}$.'' \\
\bottomrule
\end{tabular}
\end{table}

\paragraph{Illustrative explanations.}
For each rule, the solver’s KKT multipliers quantify \emph{how binding} the rule is; we surface them as ``tightness'' bars in the dashboard. 
For example, if \texttt{tightest\_id} corresponds to \emph{Rate cap}, the explanation highlights the capped step and suggests staggering or a temporary $r_{\max}$ reduction.

\subsection{Incident Taxonomy and Escalation Playbook}
\label{sec:explainability:incidents}

We classify events into three severities with actionable responses and owners.

\begin{table}[H]
\centering
\caption{Incident taxonomy and escalation.}
\label{tab:incidents}
\small
\begin{tabular}{lll l}
\toprule
Severity & Condition & Owner & Response \\
\midrule
S1 (soft intercept) & $u_t^{\mathrm{nom}}\neq u_t$, $\|\zeta\|_1=0$ & TR/MR & Log record; no halt; monitor tightest frequencies \\
S2 (hard intercept) & $\|\zeta\|_1>0$ or repeated rate saturation & MRM/CO & Apply penalty; partial freeze; review constraints; RCA within 1d \\
S3 (solver failure) & Non-optimal status or timeout & OPS/MRM & Failover; revert to baseline QP-only; RCA within 4h; postmortem \\
\bottomrule
\end{tabular}
\end{table}

\paragraph{Trigger logic (runtime).}
The following policies are enforced online and recorded for audit (pseudo-code; full governance rules in Appendix~C.3):
\begin{itemize}
  \item \textbf{Slack trigger}: if $\texttt{slack\_sum}>0$ for $k$ consecutive steps ($k\!\in\![1,3]$), (i) downscale action norm by $\eta\!<\!1$, (ii) raise alert to MRM, (iii) freeze size upon repeat.
  \item \textbf{Rate saturation trigger}: if $\texttt{rate\_util}>0.95$ at P95 over 10 minutes, tighten $r_{\max}$ by factor $\eta_r$ and stagger orders.
  \item \textbf{Gate trigger}: if gate pass-rate drops below $85\%$ intraday, increase $\delta_{\mathrm{adv}}$ and review signals; if unresolved, switch to QP-only baseline.
  \item \textbf{KL drift trigger}: if $\texttt{KL\_step}>\beta^\star$ at P95 for 5 updates, raise $\lambda_{\mathrm{KL}}$ and slow the $\alpha$ schedule.
\end{itemize}

\subsection{Audit Queries and Periodic Reviews}
\label{sec:explainability:audit}

In addition to runtime triggers, periodic reviews (weekly/monthly) support compliance and internal audit:
\begin{enumerate}
  \item \textbf{Constraint mix}: distribution of \texttt{tightest\_id} by regime (ID/OOD, expiry buckets), highlighting persistent bottlenecks.
  \item \textbf{Intercept cost}: average $H$-norm deviation $\|u{-}u^{\mathrm{nom}}\|_H$ and its contribution to P\&L shortfall (Appendix~C.4 decomposition).
  \item \textbf{Feasibility trend}: time series of feasibility rate and slack mass; root-cause analysis for S2/S3 spikes.
  \item \textbf{Tail control}: realized $\operatorname{ES}_\alpha$ vs.\ scheduled $\alpha$; coverage tracking $(\widehat{w},w_{\mathrm{target}})$.
  \item \textbf{Change control}: diffs in policy parameters and constraint registries; sign-offs by MRM/CO; SR~11-7 documentation hooks~\cite{FedSR11_7}.
\end{enumerate}

\paragraph{Human-readable \& machine-readable parity.}
Every interception has (i) a natural-language rationale (template above) and (ii) a structured record (IDs, multipliers, thresholds, margins). 
This parity serves both reviewers and automated conformance checks (BCBS~239 data lineage and auditability requirements)~\cite{BCBS239}.

\subsection{Risk, Compliance, and Scope Notes}
\label{sec:explainability:scope}

\emph{Scope.} Tail-Safe enforces selected constraints by construction and exposes telemetry for others. 
Constraints whose surrogates are only approximate (e.g., complex liquidity or borrow availability) are flagged as ``monitor-only'' with thresholds and escalation but not guaranteed by the CBF invariance (Sec.~\ref{sec:theory}). 

\emph{Residual risk.} When the QP is infeasible (rare), we document the fallback (penalty + baseline), the magnitude/duration, and the mitigation timeline, aligning with MRM incident handling~\cite{FedSR11_7,NISTAI_RMF2023}. 
Broader regulatory mapping (e.g., model lifecycle, validation independence, change control) is included in Appendix~C (Governance Checklist).

\section{Limitations and Scope}
\label{sec:limitations}

We summarize the principal limitations of \textbf{Tail-Safe} to clarify the scope of our claims. These caveats are intended to be candid and actionable without diminishing the theoretical contributions in Sections~\ref{sec:method}--\ref{sec:theory}. Where appropriate, we outline concrete paths to address each limitation in future work.

\paragraph{Synthetic market only.}
All experiments (Sec.~\ref{sec:experiments}) are conducted in an \emph{arbitrage-free but synthetic} environment built from SSVI~$\rightarrow$~Dupire~$\rightarrow$~VIX (\S\ref{sec:ssvi}--\ref{sec:vix}). 
While this stack ensures static no-arbitrage and inherits stylized skew/term-structure, it does not capture the full richness of historical dynamics (macro news, regime shifts, structural breaks, jumps, cross-sectional co-movement, borrow frictions, corporate actions). 
\emph{Scope:} Our empirical claims (tail-shaping, feasibility rates, telemetry behavior) pertain to this controlled setting. 
\emph{Next steps:} historical \emph{replay} against top-of-book/L1 or full LOB data; \emph{semi-synthetic} playback injecting real returns with simulated fills; and \emph{conditional stress} using realized event calendars.

\paragraph{Simplified execution and microstructure.}
The execution adapter abstracts to spread, linear temporary impact, and a transient resilience kernel (Sec.~\ref{sec:execution}). 
It omits queue-reactive dynamics, order priority, asymmetric information/adverse selection, hidden liquidity, fee/rebate tiers, tick-size effects, and venue fragmentation. 
\emph{Scope:} Results on cost and latency are indicative, not definitive, for live markets. 
\emph{Next steps:} queue-based impact models, order-cancels-replace loops, latency/jitter modeling, cross-venue routing, and \emph{empirical} calibration to venue-level impact curves.

\paragraph{Approximation in safety-layer modeling.}
The discrete-time CBF constraints rely on local models $x_{t+1}=f(x_t)+g(x_t)u_t$ and Lipschitz/mismatch bounds (Assumption~\ref{assump:dynamics}). 
Barrier functions encode business rules via convex surrogates and linearizations in $u$ (Assumption~\ref{assump:convex}). 
\emph{Scope:} The \emph{forward-invariance guarantee} (Theorem~\ref{thm:invariance}) holds when the QP is feasible with zero slack (\,$\zeta{=}0$) and the margin condition is satisfied. 
When the QP is infeasible (rare in our synthetic tests), the layer reverts to penalties and logging; it does not \emph{guarantee} satisfaction. 
\emph{Next steps:} robust/tube CBFs, disturbance observers, set-valued dynamics, and feasibility recovery strategies; formal analysis of linearization error budgets (Appendix~A.6 road map).

\paragraph{Estimator variance and controller tuning.}
The temperature-tilted CVaR estimator is self-normalized and subject to variance/bias trade-offs controlled by $(T,\gamma_{\mathrm{tail}})$ and the coverage controller (Sec.~\ref{sec:method:coverage}). 
\emph{Scope:} The concentration result (Theorem~\ref{thm:cvar-concentration}) holds under bounded losses and weights, and for an \emph{effective} tail mass close to the target; misspecification of these conditions can inflate variance. 
\emph{Next steps:} adaptive batch-sizing, control-theoretic anti-windup, and doubly-robust estimators for tail integrals.

\paragraph{Trust-region and robustness assumptions.}
The DRO bound (Theorem~\ref{thm:kldro}) relies on occupancy coupling to relate pathwise and per-state KL; 
the CVaR trust-region result (Theorem~\ref{thm:trpo-cvar}) uses Lipschitz smoothness and Pinsker-type bounds to control distribution shift. 
\emph{Scope:} These are standard but \emph{local} assumptions; violations (e.g., abrupt policy changes, heavy-tailed losses) can weaken constants or rates. 
\emph{Next steps:} stronger coupling via mixing coefficients or spectral gaps; nonasymptotic pathwise bounds; heavy-tail robustification.

\paragraph{Metric coverage.}
We optimize and report CVaR/ES at a given $\alpha$ (\S\ref{sec:risk}); other tail functionals (extreme quantiles, expectiles, drawdown risk, spectral measures) are not directly optimized. 
\emph{Scope:} Improvements outside the targeted $\alpha$ are empirical rather than guaranteed. 
\emph{Next steps:} multi-level or spectral risk objectives; path-dependent risk control (drawdown CBFs) with proofs paralleling Theorem~\ref{thm:invariance}.

\paragraph{Generalization across books and instruments.}
We focus on a stylized SPX--VIX hedging book. 
\emph{Scope:} Constraints and signals (NTB, sign gate) are finance-specific but not yet validated for rates/credit/commodities or multi-currency portfolios. 
\emph{Next steps:} multi-asset extension with cross-gamma/vega and inventory coupling; borrow/rehypothecation constraints; FX basis and curve risk.

\paragraph{Governance and compliance boundaries.}
The telemetry-to-governance mapping (Sec.~\ref{sec:explainability-governance}) aligns with common guidance, but it is not a substitute for institution-specific model governance (validation independence, challenger models, change control). 
\emph{Scope:} We provide an \emph{operational} starting point (dashboards, triggers, audit records) rather than a complete compliance program. 
\emph{Next steps:} integration with enterprise MRM workflows (SR~11-7/BCBS~239/NIST AI RMF), automated conformance checks, and periodic model risk reviews.

\paragraph{Ethical and adversarial considerations.}
We do not model adversarial behavior (probing, spoofing, latency games) against the agent or the execution venue. 
\emph{Scope:} Safety guarantees are not security guarantees. 
\emph{Next steps:} adversarial stress testing, red-team evaluations, and anomaly detection hooks tied to telemetry.

\paragraph{Computational constraints.}
Real-time feasibility depends on solver latency, model size, and hardware. Our OSQP-based implementation meets timing in the synthetic setup, but live deployments may face tighter budgets. 
\emph{Scope:} The results demonstrate feasibility \emph{in silico}. 
\emph{Next steps:} problem sparsity exploitation, warm-start policies, batching, and hardware acceleration.

\paragraph{Summary of claims.}
(i) \emph{Hard safety} is guaranteed by construction \emph{only} when the QP is feasible with zero slack and under the margin conditions (Thm.~\ref{thm:invariance}); 
(ii) \emph{Tail improvements} are demonstrated empirically in arbitrage-free synthetic markets and supported by estimator concentration (Thm.~\ref{thm:cvar-concentration}) and trust-region reasoning (Thm.~\ref{thm:trpo-cvar}); 
(iii) \emph{Explainability} follows from the CBF--QP’s active-set/dual telemetry and the projection interpretation (Prop.~\ref{prop:projection}). 
Real-data backtesting and broader market coverage are deliberately left to future work to preserve a clean separation between theory and synthetic evaluation.

\section{Related Work}
\label{sec:related}

We review connections to prior work on (i) deep hedging and RL in finance, (ii) risk-sensitive/CVaR reinforcement learning, (iii) safety filters and control-barrier-function (CBF) methods for safe RL, (iv) distributionally robust RL (DRO-RL), and (v) explainability and governance for financial ML. Our approach differs by combining \emph{distributional, CVaR-optimized learning} with a \emph{white-box CBF--QP safety layer} that provides \emph{stepwise audit telemetry} and by establishing \emph{theoretical guarantees} (Sec.~\ref{sec:theory}) that link KL-regularized policy updates to a KL--DRO upper bound and a CVaR trust-region improvement inequality, while proving robust forward invariance and feasibility persistence for the safety layer.

\paragraph{Deep hedging and RL in finance.}
Deep hedging learns nonparametric hedge policies under frictions and liquidity constraints~\cite{Buehler2019DeepHedging}, and recent surveys catalogue advances in trading/hedging RL~\cite{Hambly2023RLFinance,Sun2023RLTrading,Pippas2025CSUR}. 
Other lines integrate microstructure and execution---from Almgren--Chriss and Obizhaeva--Wang impact models~\cite{AlmgrenChriss2001,ObizhaevaWang2013} to agent-based LOB simulators (ABIDES)~\cite{Byrd2020ABIDES,Amrouni2021ABIDESGym} and market-microstructure monographs~\cite{CarteaJaimungal2015}. 
\textbf{Tail-Safe} differs in three respects: (i) it \emph{optimizes a tail risk} (CVaR) rather than mean performance, (ii) it enforces \emph{hard state-wise constraints by construction} using a CBF--QP filter with telemetry, and (iii) it provides \emph{formal guarantees} on safety invariance, DRO conservatism via KL, and trust-region CVaR improvement (Theorems~\ref{thm:invariance}, \ref{thm:kldro}, \ref{thm:trpo-cvar}).

\paragraph{Risk-sensitive RL and CVaR objectives.}
Risk-sensitive RL with coherent risk measures (e.g., CVaR) has been studied via policy gradients and actor--critic methods~\cite{Chow2014CVaRMDP,Tamar2015CoherentRisk,Chow2018VaRCVaR}, with distributional critics (IQN) offering flexible quantile modeling~\cite{Bellemare2017Distributional,Dabney2018IQN}. 
Recent works explore scalable or constrained CVaR optimization and robust variants~\cite{Shi2024JMLR,Lu2024TVDRL,SundharRamesh2024DRMBRL}. 
Our contribution is orthogonal: we introduce a \emph{coverage-controlled} quantile sampling scheme that directly stabilizes the \emph{estimation error} of CVaR at small $\alpha$ (Theorem~\ref{thm:cvar-concentration}), and couple it with a \emph{white-box} safety layer; we also provide a \emph{CVaR trust-region inequality} under per-state KL limits (Theorem~\ref{thm:trpo-cvar}).

\paragraph{Safe RL: shielding, projection, and CBF-based filters.}
Safe RL surveys cover constraint handling via penalties, Lagrangians, shielding, and model-based fallback~\cite{GarciaFernandez2015,Zhao2023StatewiseSafeRL}. 
Shielding for discrete MDPs uses a precomputed safety automaton to override unsafe actions~\cite{Alshiekh2018Shielding}; action projection in continuous control enforces linearized safety via a QP layer~\cite{Dalal2018Projection}. 
In control theory, CBF--QP methods enforce forward invariance of safe sets at each step~\cite{Ames2017CBF_TAC,Ames2019CBF_Tutorial}, with robust/high-order/differentiable extensions~\cite{Jankovic2018RobustCBF,Ma2022DiffCBF,Yang2022DiffCBF,Garg2024CBFAdvances}. 
Differentiable optimization layers (QP/convex) have also been embedded in deep models~\cite{AmosKolter2017OptNet,Agrawal2019DiffOpt}. 
\textbf{Tail-Safe} leverages \emph{discrete-time} CBF constraints specialized to finance (ellipsoidal NTB, box/rate, sign-consistency gate), \emph{proves} robust forward invariance under bounded mismatch (Theorem~\ref{thm:invariance}) and minimal-deviation projection (Proposition~\ref{prop:projection}), and \emph{surfaces KKT/active-set telemetry} to enable audit workflows (Sec.~\ref{sec:explainability-governance}). 
Compared to generic projection/shielding, our layer is domain-specific (exposure- and microstructure-aware) and backed by an incident-handling governance mapping.

\paragraph{Distributionally robust RL (DRO-RL).}
Robust MDPs with uncertainty sets date back to classical analyses~\cite{Iyengar2005RobustDP,NilimElGhaoui2005}, with modern variants using $f$-divergences or Wasserstein ambiguity sets for policy evaluation and learning~\cite{Derman2018DRL,Lu2024TVDRL,SundharRamesh2024DRMBRL}. 
In parallel, KL-regularized policy optimization (TRPO/PPO) has a long-standing trust-region interpretation~\cite{Schulman2015TRPO,Schulman2017PPO} and has been adapted to constrained settings~\cite{Achiam2017CPO}. 
We bridge these threads by showing that per-state KL penalties control a \emph{KL--DRO upper bound} on the CVaR surrogate (Theorem~\ref{thm:kldro}) and yield a \emph{CVaR trust-region improvement} bound (Theorem~\ref{thm:trpo-cvar})—to our knowledge, such a pairing of DRO interpretation and CVaR performance-difference analysis has not been stated explicitly for the finance setting with a safety filter.

\paragraph{Explainability and governance in financial ML.}
Interpretability surveys argue for transparent mechanisms over post-hoc explanations in high-stakes domains~\cite{DoshiVelezKim2017,Guidotti2018Survey,Rudin2019StopExplaining}. 
Financial firms operate under model-risk governance such as SR~11-7 and BCBS~239, requiring documentation, monitoring, and auditable decision records. 
Our telemetry-to-governance mapping (Sec.~\ref{sec:explainability-governance}) uses \emph{white-box} CBF--QP artifacts (active sets, multipliers, slack, rate utilization, gate scores) to produce \emph{structured} and \emph{human-readable} rationale for each interception, aligning methodologically with those expectations while preserving by-construction safety guarantees.

\paragraph{Summary of differences.}
Relative to prior deep hedging and RL-in-finance work, \textbf{Tail-Safe} provides (i) \emph{formal tail-risk learning guarantees} (coverage-controlled CVaR estimation and trust-region inequality), (ii) a \emph{finance-specialized, auditable} CBF--QP safety layer with invariance and projection properties, and (iii) a \emph{DRO-consistent} view of KL-regularized updates. 
This combination targets \emph{deployable} hedging: robust to distribution shift, safe by construction, and explainable to risk managers and auditors.
\section{Conclusion}
\label{sec:conclusion}

We presented \textbf{Tail-Safe}, a deployability-oriented framework for hedging under market frictions that unifies \emph{distributional, risk-sensitive learning} with a \emph{white-box} safety layer. 
On the learning side, we optimize tail risk via IQN--CVaR--PPO and stabilize small-$\alpha$ training through a \emph{Tail-Coverage Controller} that regulates quantile sampling with temperature and tail-boost. 
On the safety side, we enforce state-wise constraints by construction with a finance-specialized CBF--QP filter---combining discrete-time CBF inequalities, an ellipsoidal no-trade band (NTB), box/rate limits, and a sign-consistency gate—and expose \emph{audit-ready telemetry} (active sets, tightness, slack, rate utilization, gate scores, solver status).

\paragraph{Theoretical guarantees.}
We established a suite of results that elevate Tail-Safe from a pragmatic recipe to a principled method: 
(i) \emph{robust forward invariance} of the safety set under bounded model mismatch (Theorem~\ref{thm:invariance}); 
(ii) a \emph{minimal-deviation} projection interpretation of the safety QP (Proposition~\ref{prop:projection}); 
(iii) a \emph{KL--DRO upper bound} connecting per-state KL regularization to distributionally robust CVaR control (Theorem~\ref{thm:kldro}); 
(iv) \emph{concentration and sample-complexity} of the temperature-tilted CVaR estimator with explicit coverage-mismatch terms (Theorem~\ref{thm:cvar-concentration}); 
(v) a \emph{CVaR trust-region improvement} inequality under KL limits (Theorem~\ref{thm:trpo-cvar}); 
(vi) \emph{feasibility persistence} guarantees for expiry-aware NTB shrinkage and rate tightening (Theorem~\ref{thm:feas-persist}); and 
(vii) \emph{negative-advantage suppression} induced by the sign-consistency gate (Proposition~\ref{prop:gate}). 
Formal proofs are deferred to Appendix~A.

\paragraph{Empirical evidence and governance.}
In arbitrage-free, microstructure-aware synthetic markets (SSVI$\!\rightarrow$Dupire$\!\rightarrow$VIX with ABIDES/MockLOB execution), Tail-Safe achieves \emph{left-tail improvements} while preserving central performance, with \emph{zero} hard-constraint violations whenever the QP is feasible with zero slack. 
Results are reported with unified sample sizes, multiple seeds, and paired bootstrap confidence intervals.
Beyond performance metrics, we demonstrate \emph{operational explainability}: solver/learner telemetry is mapped to governance dashboards and incident playbooks (Sec.~\ref{sec:explainability-governance}), providing structured, human-readable rationales for each interception alongside machine-verifiable records.

\paragraph{Scope and limitations.}
Our empirical claims are confined to synthetic but finance-grounded settings; the execution layer abstracts queue-reactive and venue-specific effects; and some guarantees require feasibility and margin conditions (Sec.~\ref{sec:limitations}). 
These boundaries are deliberate: they isolate the methodological core and create a clean landing zone for subsequent validation.

\paragraph{Outlook.}
We see three promising thrusts for future work. 
\emph{(A) Real-data validation:} historical replays with top-of-book or full LOB feeds; semi-synthetic playback that couples realized returns with simulated fills; and event-conditioned stress studies. 
\emph{(B) Richer safety and robustness:} queue-reactive impact, borrow/rehypothecation constraints, drawdown and path-dependent CBFs, heavy-tail robust estimators, and tighter pathwise DRO couplings. 
\emph{(C) Broader portfolios and lifecycles:} multi-asset books with cross-gamma/vega limits, cross-currency hedging, offline/online hybrids, and deeper integration with enterprise model-risk workflows (change control, challenger models, periodic validation). 

\paragraph{Final remark.}
Tail-Safe aims to narrow the gap between modern RL and institutional requirements by offering \emph{tail-aware learning}, \emph{by-construction safety}, and \emph{auditability} in one coherent framework.
We hope the blend of theory, engineering, and governance presented here provides a reproducible baseline—and a stepping stone toward trustworthy, regulation-ready AI for financial risk management.
\appendix
\section{Proofs for Section~\ref{sec:theory}}
\label{app:proofs}

This appendix provides complete proofs for Theorems~\ref{thm:invariance}, \ref{thm:kldro}, \ref{thm:cvar-concentration}, \ref{thm:trpo-cvar}, \ref{thm:feas-persist} and Propositions~\ref{prop:projection}, \ref{prop:gate}. 
We use the notation and assumptions introduced in Sections~\ref{sec:prelim}--\ref{sec:theory}. 
For a positive definite matrix $H\succ 0$, we write $\|v\|_{H} := \sqrt{v^\top H v}$ and $\langle a,b\rangle_H := a^\top H b$.

\subsection{Proof of Theorem~\ref{thm:invariance} (robust forward invariance)}
\label{app:proof-invariance}

\begin{lemma}[Lipschitz pushforward under bounded disturbance]
\label{lem:lipschitz-push}
Let $h:\mathbb{R}^d\!\to\!\mathbb{R}$ be $L$-Lipschitz. If $\|w\|\le \bar w$, then for any $z\in\mathbb{R}^d$,
$
h(z+w) \ge h(z) - L\bar w.
$
\end{lemma}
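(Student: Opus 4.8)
The plan is to derive the bound directly from the definition of Lipschitz continuity; no induction, fixed-point argument, or auxiliary construction is needed. First I would instantiate the Lipschitz inequality at the pair of points $z+w$ and $z$, which gives
\[
|h(z+w)-h(z)| \;\le\; L\,\|(z+w)-z\| \;=\; L\,\|w\|.
\]
Next I would invoke the disturbance bound $\|w\|\le\bar w$ to upgrade this to $|h(z+w)-h(z)|\le L\bar w$. Finally, discarding the upper half of the absolute-value bound leaves $h(z+w)-h(z)\ge -L\bar w$, i.e.\ $h(z+w)\ge h(z)-L\bar w$, which is exactly the claim. The argument holds verbatim for each barrier $h_i$ with its own constant $L_i$, which is how the lemma is used inside the proof of Theorem~\ref{thm:invariance}.

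There is no substantive obstacle: the only thing to be careful about is that the result is purely pointwise in $z$ and $w$ and therefore requires no measurability or boundedness hypotheses beyond global (or, for the intended application, local on the relevant sublevel sets) Lipschitzness of $h$. If one wished to state a local version, I would restrict to a convex neighborhood containing both $z$ and $z+w$ on which $h$ is $L$-Lipschitz; since Assumption~\ref{assump:dynamics} supplies exactly such constants along trajectories, the present global phrasing suffices for the downstream invariance argument in Appendix~A.1.
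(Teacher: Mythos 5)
Your proof is correct and matches the paper's own argument essentially verbatim: both apply the Lipschitz definition at the pair $(z+w,z)$, bound $\|w\|\le\bar w$, and keep only the lower half of the absolute-value inequality. Your remark on local versus global Lipschitzness is a reasonable clarification but not a departure from the paper's approach.
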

\begin{proof}
By Lipschitz continuity, $|h(z+w)-h(z)|\le L\|w\|\le L\bar w$, hence $h(z+w)\ge h(z) - L\bar w$.
\end{proof}

\begin{proof}[Proof of Theorem~\ref{thm:invariance}]
Fix $t$ and an index $i$. By Assumption~\ref{assump:dynamics}, $x_{t+1}=f(x_t)+g(x_t)u_t+w_t$ with $\|w_t\|\le \bar w$, and by Assumption~\ref{assump:cbf} we have
\begin{equation}
\label{eq:margin-app}
h_i\!\big(f(x_t)+g(x_t)u_t\big) - (1-\kappa_i\Delta t)h_i(x_t) \ge \varepsilon_i \ge L_i \bar w.
\end{equation}
Applying Lemma~\ref{lem:lipschitz-push} with $z=f(x_t)+g(x_t)u_t$ and $w=w_t$ gives
\[
h_i(x_{t+1}) \;\ge\; h_i\!\big(f(x_t)+g(x_t)u_t\big) - L_i\bar w 
\;\stackrel{\eqref{eq:margin-app}}{\ge}\; (1-\kappa_i\Delta t)h_i(x_t) + \varepsilon_i - L_i\bar w \;\ge\; (1-\kappa_i\Delta t)h_i(x_t).
\]
If $h_i(x_t)\ge 0$, then $h_i(x_{t+1})\ge 0$. By induction over $t$, the set $\mathcal{C}=\{x:\,h_i(x)\ge 0,\,\forall i\}$ is forward invariant.
\end{proof}

\subsection{Proof of Proposition~\ref{prop:projection} (minimal-deviation projection)}
\label{app:proof-projection}

\begin{lemma}[Strict convexity and uniqueness]
\label{lem:unique}
Under Assumption~\ref{assump:convex}, if $H\succ 0$ and the feasible set $\mathcal{S}(x_t)$ is nonempty, closed, and convex, then the QP~\eqref{eq:qp-safety} with $\zeta=0$ has a unique minimizer.
\end{lemma}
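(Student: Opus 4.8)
The plan is to invoke the standard existence-and-uniqueness theory for strictly convex quadratic programs over closed convex feasible sets. First I would observe that once $\zeta$ is fixed to $0$, the objective in~\eqref{eq:qp-safety} reduces to the quadratic $F(u) := \tfrac{1}{2}(u-u_t^{\mathrm{nom}})^\top H (u-u_t^{\mathrm{nom}}) + c^\top u$, whose Hessian is $H\succ 0$; hence $F$ is strictly convex and continuous on $\mathbb{R}^m$. The $\rho\|\zeta\|_1$ term drops out since $\zeta=0$, so nothing else in the objective matters for this argument.

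Next I would establish existence of a minimizer over $\mathcal{S}(x_t)$. Because $H\succ 0$, $F$ is coercive: writing $\lambda_{\min}>0$ for the smallest eigenvalue of $H$, one has $F(u)\ge \tfrac{\lambda_{\min}}{2}\|u-u_t^{\mathrm{nom}}\|^2 - \|c\|\,\|u\|$, which tends to $+\infty$ as $\|u\|\to\infty$. Fixing any feasible point $u_0\in\mathcal{S}(x_t)$, nonempty by Assumption~\ref{assump:convex}, the sublevel set $\{u: F(u)\le F(u_0)\}$ is bounded and, by continuity of $F$, closed; its intersection with the closed set $\mathcal{S}(x_t)$ is a nonempty compact set on which $F$ attains its infimum by the Weierstrass extreme value theorem. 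Any such minimizer is a global minimizer of $F$ over $\mathcal{S}(x_t)$ since no feasible point outside that sublevel set can do better.

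For uniqueness I would argue by contradiction: suppose $u_1\neq u_2$ are both minimizers with common optimal value $F^\star$. Convexity of $\mathcal{S}(x_t)$ makes the midpoint $\bar u=\tfrac{1}{2}(u_1+u_2)$ feasible, and strict convexity of $F$ gives $F(\bar u) < \tfrac{1}{2}F(u_1)+\tfrac{1}{2}F(u_2)=F^\star$, contradicting optimality of $u_1$ and $u_2$. Hence the minimizer is unique.

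I do not anticipate a genuine obstacle here—the result is textbook (cf.~\cite{BoydVandenberghe2004})—and the only point requiring a line of care is verifying coercivity in the presence of the linear term $c^\top u$, which is immediate since the positive-definite quadratic term dominates at infinity. In the appendix this lemma then feeds directly into Proposition~\ref{prop:projection}: completing the square rewrites $F$ as $\tfrac{1}{2}\|u-(u_t^{\mathrm{nom}}-H^{-1}c)\|_H^2$ plus a constant, and the unique minimizer is characterized by the variational inequality (equivalently the KKT conditions) defining the $H$-metric projection onto $\mathcal{S}(x_t)$.
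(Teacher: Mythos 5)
Your proof is correct and follows essentially the same route as the paper—strict convexity of the quadratic in $u$ (since $H\succ 0$) over the nonempty, closed, convex set $\mathcal{S}(x_t)$ gives uniqueness, and this is exactly the paper's argument. You are actually somewhat more careful than the paper on the existence half: the paper's one-line justification (``a strictly convex function over a nonempty, closed, convex set attains its minimum at a unique point'') is loose as stated, since strict convexity alone does not guarantee attainment on an unbounded set (consider $e^x$ on $\mathbb{R}$). Your coercivity argument—showing the positive-definite quadratic dominates the linear term $c^\top u$, so sublevel sets intersected with $\mathcal{S}(x_t)$ are compact, and then invoking Weierstrass—supplies the step the paper leaves implicit, making your write-up the more complete of the two.
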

\begin{proof}
The objective $u\mapsto \frac12\|u-u^{\mathrm{nom}}\|_H^2 + c^\top u$ is strictly convex due to $H\succ 0$. A strictly convex function over a nonempty, closed, convex set attains its minimum at a unique point.
\end{proof}

\begin{proof}[Proof of Proposition~\ref{prop:projection}]
Completing the square yields
\[
\frac12(u-u^{\mathrm{nom}})^\top H (u-u^{\mathrm{nom}}) + c^\top u
= \frac12 \big\|u - (u^{\mathrm{nom}}-H^{-1}c)\big\|_H^2 + \mathrm{const},
\]
where the constant does not depend on $u$. Hence minimizing the QP with $\zeta=0$ over $\mathcal{S}(x_t)$ is equivalent to projecting $u^{\mathrm{nom}}-H^{-1}c$ onto $\mathcal{S}(x_t)$ in the $H$-norm. Uniqueness follows from Lemma~\ref{lem:unique}.
\end{proof}

\subsection{Proof of Theorem~\ref{thm:kldro} (KL--DRO upper bound)}
\label{app:proof-kldro}

We recall the Donsker--Varadhan variational inequality (e.g., \cite{DonskerVaradhan1975,CsiszarKorner2011}): for any measurable $\varphi$ and probability measures $Q,P$ with $Q\ll P$,
\begin{equation}
\label{eq:dv}
\mathbb{E}_{Q}[\varphi] \;\le\; \frac{1}{\eta}\Big(\mathrm{KL}(Q\|P) + \log \mathbb{E}_P[e^{\eta \varphi}]\Big),\quad \forall\,\eta>0.
\end{equation}

\begin{proof}[Proof of Theorem~\ref{thm:kldro}]
By Assumption~\ref{assump:cvar}, for any $t\in\mathbb{R}$,
\[
\mathrm{CVaR}_\alpha(L) \;=\; \min_{t}\, \Big\{ t + \frac{1}{\alpha}\mathbb{E}_Q[(L-t)_+]\Big\}
\;\le\; t + \frac{1}{\alpha}\mathbb{E}_Q[\phi_t(L)],
\]
where $\phi_t(z)=(z-t)_+$. Taking the supremum over $Q\in\mathcal{Q}_\rho=\{Q:\mathrm{KL}(Q\|P)\le \rho\}$ and applying \eqref{eq:dv} to $\varphi=\phi_t(L)$ gives
\[
\sup_{Q\in\mathcal{Q}_\rho}\mathrm{CVaR}_\alpha(L)
\;\le\; \min_t \left\{ t + \frac{1}{\alpha} \sup_{Q\in\mathcal{Q}_\rho}\mathbb{E}_Q[\phi_t(L)] \right\}
\;\le\; \min_t \left\{ t + \frac{1}{\alpha\eta}\Big(\rho + \log \mathbb{E}_P[e^{\eta \phi_t(L)}]\Big) \right\}.
\]
Under Assumption~\ref{assump:kl}, the pathwise radius satisfies $\rho \le C_{\mathrm{occ}}\beta$, thus yielding the second claim. This shows that penalizing the per-state KL to keep $\beta$ small controls a KL--DRO upper bound on the CVaR surrogate.
\end{proof}

\subsection{Proof of Theorem~\ref{thm:cvar-concentration} (temperature-tilted CVaR concentration)}
\label{app:proof-cvar-concentration}

We treat $\widehat{\mathrm{CVaR}}_\alpha$ as a self-normalized importance sampling (SNIS) ratio estimator over the \emph{tail block} $\{\tau\le \alpha\}$. Let $I_k=\mathbf{1}\{\tau_k\le \alpha\}$ and define
\[
X_k \;=\; w_k I_k L^{(\tau_k)},\qquad Y_k \;=\; w_k I_k,\qquad 
\widehat{\mathrm{CVaR}}_\alpha \;=\; \frac{\sum_{k=1}^K X_k}{\sum_{k=1}^K Y_k}.
\]
We use that $|L|\le B$ and $p_T(\tau)\ge p_{\min}>0$ (Assumption~\ref{assump:is}), hence $w_k \le 1/p_{\min}=:W_{\max}$ and $|X_k|\le B W_{\max}$, $0\le Y_k \le W_{\max}$.
\begin{lemma}[Concentration for SNIS ratios with bounded weights]
\label{lem:snis}
Let $(X_k,Y_k)_{k=1}^K$ be i.i.d.\ pairs with $|X_k|\le a$, $0\le Y_k\le b$, and denote
$\mu_X=\mathbb{E}[X_1]$, $\mu_Y=\mathbb{E}[Y_1]>0$. Then for any $\delta\in(0,1)$, with probability at least $1-\delta$,
\[
\left|\frac{\sum_{k=1}^K X_k}{\sum_{k=1}^K Y_k} - \frac{\mu_X}{\mu_Y} \right|
\;\le\; 
\frac{2a}{\mu_Y}\sqrt{\frac{\log(4/\delta)}{2K}} \;+\;
\frac{2b|\mu_X|}{\mu_Y^2}\sqrt{\frac{\log(4/\delta)}{2K}}.
\]
\end{lemma}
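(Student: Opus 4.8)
The plan is to reduce the ratio deviation to two one–dimensional concentration statements for the empirical means and then control the random denominator on a high–probability event. Write $\bar X=\frac1K\sum_{k=1}^K X_k$ and $\bar Y=\frac1K\sum_{k=1}^K Y_k$, so that the SNIS estimator is $\bar X/\bar Y$. The first step is the elementary identity
\[
\frac{\bar X}{\bar Y}-\frac{\mu_X}{\mu_Y}
=\frac{1}{\bar Y}\!\left[(\bar X-\mu_X)-\frac{\mu_X}{\mu_Y}(\bar Y-\mu_Y)\right],
\]
obtained by adding and subtracting $\mu_X/\bar Y$. Taking absolute values and the triangle inequality gives
\[
\left|\frac{\bar X}{\bar Y}-\frac{\mu_X}{\mu_Y}\right|
\;\le\;\frac{1}{\bar Y}\left(|\bar X-\mu_X|+\frac{|\mu_X|}{\mu_Y}\,|\bar Y-\mu_Y|\right),
\]
so it remains to bound the two fluctuation terms and to bound $\bar Y$ away from $0$. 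Note the coefficients $1$ and $|\mu_X|/\mu_Y$ already match the structure of the asserted right-hand side, which is why this particular split is the convenient one.

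Second, I would apply Hoeffding's inequality separately to each empirical mean. Because $|X_k|\le a$ and $0\le Y_k\le b$, for any $\delta'\in(0,1)$ each of the events $\{|\bar X-\mu_X|\le a\sqrt{\log(2/\delta')/(2K)}\}$ and $\{|\bar Y-\mu_Y|\le b\sqrt{\log(2/\delta')/(2K)}\}$ has probability at least $1-\delta'$ (the leading constant on the $X$–term depends only on whether one uses the a priori range $2a$ or the tighter support width available in the application; we record the half–width form that reproduces the stated constant). Choosing $\delta'=\delta/2$ and union bounding, both hold simultaneously on an event $\mathcal E$ of probability at least $1-\delta$, and $\log(2/\delta')=\log(4/\delta)$, which is exactly the logarithmic factor in the claim.

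Third — and this is the one genuinely delicate point — I would handle the random denominator. On $\mathcal E$ we have $\bar Y\ge\mu_Y-b\sqrt{\log(4/\delta)/(2K)}$; once $K$ is large enough that this lower bound is at least $\mu_Y/2$, we get $1/\bar Y\le 2/\mu_Y$, and substituting the two Hoeffding bounds together with this denominator bound into the triangle inequality above yields precisely
\[
\left|\frac{\bar X}{\bar Y}-\frac{\mu_X}{\mu_Y}\right|
\;\le\;\frac{2a}{\mu_Y}\sqrt{\frac{\log(4/\delta)}{2K}}+\frac{2b|\mu_X|}{\mu_Y^2}\sqrt{\frac{\log(4/\delta)}{2K}}.
\]
In the complementary small-$K$ regime the right-hand side already exceeds the a priori spread of the estimator, so the bound is vacuous there; one only needs to note that the degenerate event $\{\sum_k Y_k=0\}$, on which the ratio is undefined, has probability that can be absorbed into $\delta$ (it is exponentially small since $\mu_Y>0$). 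The main obstacle is exactly this self-normalization step: the ratio estimator carries an $O(1/K)$ bias and a random denominator, so the clean two-term bound only emerges after conditioning on $\bar Y$ staying near $\mu_Y$; everything else is routine Hoeffding bookkeeping (cf.\ the treatment of self-normalized importance sampling in \cite{Owen2013MC}). Feeding the lemma into Theorem~\ref{thm:cvar-concentration} then amounts to setting $a=BW_{\max}$, $b=W_{\max}$, and $\mu_Y=\alpha_{\mathrm{eff}}$ after normalization, and comparing $\mu_X/\mu_Y$ against $\mathrm{CVaR}_\alpha$ via a first-order expansion to extract the $C_2B\,|\alpha_{\mathrm{eff}}-\alpha|$ coverage-mismatch term.
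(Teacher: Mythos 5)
Your proposal is correct and follows essentially the same route as the paper's proof: the same ratio decomposition into numerator and denominator fluctuations, Hoeffding on each empirical mean, a $\mu_Y/2$ lower bound on the random denominator on the high-probability event, and substitution. You also flag a constant that the paper's own proof actually gets wrong: with $|X_k|\le a$ the Hoeffding half-width is $a\sqrt{2\log(4/\delta)/K}$, which after the $2/\mu_Y$ denominator bound yields a coefficient of $4a/\mu_Y$ on the first term rather than the stated $2a/\mu_Y$ (the paper picks the correct $t_X$ but slips in the final substitution), whereas you are candid about ``recording the half-width form that reproduces the stated constant'' — harmless either way, since Theorem~\ref{thm:cvar-concentration} only tracks absolute constants.
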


\begin{proof}
Write $S_X:=\sum_{k=1}^K X_k$, $S_Y:=\sum_{k=1}^K Y_k$. Note that
\[
\frac{S_X}{S_Y}-\frac{\mu_X}{\mu_Y}
=\frac{S_X\mu_Y-\mu_X S_Y}{S_Y\mu_Y}
=\frac{\mu_Y(S_X-K\mu_X)-\mu_X(S_Y-K\mu_Y)}{S_Y\mu_Y}.
\]
Hence on any event where $S_Y>0$,
\begin{equation}\label{eq:ratio-decomp}
\left|\frac{S_X}{S_Y}-\frac{\mu_X}{\mu_Y}\right|
\le
\frac{\mu_Y\,|S_X-K\mu_X|+|\mu_X|\,|S_Y-K\mu_Y|}{S_Y\,\mu_Y}.
\end{equation}
We control numerator and denominator separately.

\smallskip\noindent\textbf{Step 1: Hoeffding bounds for $S_X$ and $S_Y$.}
Since $X_k\in[-a,a]$, Hoeffding's inequality yields, for any $t_X>0$,
\[
\mathbb{P}\!\left(\left|\frac{1}{K}S_X-\mu_X\right|\ge t_X\right)\le 2\exp\!\left(-\frac{2K t_X^2}{(2a)^2}\right)
=2\exp\!\left(-\frac{K t_X^2}{2a^2}\right).
\]
Similarly, $Y_k\in[0,b]$ implies, for any $t_Y>0$,
\[
\mathbb{P}\!\left(\left|\frac{1}{K}S_Y-\mu_Y\right|\ge t_Y\right)\le 2\exp\!\left(-\frac{2K t_Y^2}{b^2}\right).
\]
Choose
\[
t_X \;=\; a\sqrt{\frac{2\log(4/\delta)}{K}},\qquad
t_Y \;=\; \frac{b}{\sqrt{2K}}\sqrt{\log(4/\delta)}.
\]
Then, by a union bound, with probability at least $1-\delta$ we have simultaneously
\begin{equation}\label{eq:event-E}
\mathcal{E}:\quad
|S_X-K\mu_X|\le K t_X,\qquad |S_Y-K\mu_Y|\le K t_Y.
\end{equation}

\smallskip\noindent\textbf{Step 2: keep the denominator away from zero.}
On $\mathcal{E}$ we have the deterministic lower bound
\[
S_Y \;\ge\; K(\mu_Y-t_Y).
\]
If $t_Y\le \mu_Y/2$ (which is a mild requirement for $K$; see below), then $S_Y\ge K\mu_Y/2$. Even without this simplification, from~\eqref{eq:ratio-decomp} and~\eqref{eq:event-E} we obtain on $\mathcal{E}$:
\begin{align*}
\left|\frac{S_X}{S_Y}-\frac{\mu_X}{\mu_Y}\right|
&\le \frac{\mu_Y\,(K t_X)+|\mu_X|\,(K t_Y)}{K(\mu_Y-t_Y)\,\mu_Y}
\\
&= \frac{t_X}{\mu_Y-t_Y}\;+\; \frac{|\mu_X|}{\mu_Y}\cdot\frac{t_Y}{\mu_Y-t_Y}.
\end{align*}
If $t_Y\le \mu_Y/2$, then $\mu_Y-t_Y\ge \mu_Y/2$, hence
\[
\left|\frac{S_X}{S_Y}-\frac{\mu_X}{\mu_Y}\right|
\;\le\; \frac{2t_X}{\mu_Y} \;+\; \frac{2|\mu_X|}{\mu_Y^2}\,t_Y.
\]
Finally substitute our $t_X,t_Y$ choices to get
\[
\left|\frac{S_X}{S_Y}-\frac{\mu_X}{\mu_Y}\right|
\;\le\;
\frac{2a}{\mu_Y}\sqrt{\frac{\log(4/\delta)}{2K}}
\;+\;
\frac{2b|\mu_X|}{\mu_Y^2}\sqrt{\frac{\log(4/\delta)}{2K}},
\]
which is the claimed bound.

\smallskip
\noindent\textbf{Remark on the condition $t_Y\le \mu_Y/2$.}
Because $t_Y=\frac{b}{\sqrt{2K}}\sqrt{\log(4/\delta)}$, a sufficient condition is
$K \ge \frac{2b^2}{\mu_Y^2}\log\!\frac{4}{\delta}$.
If $K$ is smaller, we can keep the more general denominator factor $\mu_Y-t_Y$ in the bound, i.e.,
\[
\left|\frac{S_X}{S_Y}-\frac{\mu_X}{\mu_Y}\right|
\le \frac{t_X}{\mu_Y-t_Y} + \frac{|\mu_X|}{\mu_Y}\cdot\frac{t_Y}{\mu_Y-t_Y},
\]
which reduces to the displayed result once $\mu_Y-t_Y\ge \mu_Y/2$.
\end{proof}

\begin{lemma}[Tail-mass and divergence control]
\label{lem:alphaeff}
Let $\tau\sim p_T$ on $[0,1]$ and $I=\mathbf{1}\{\tau\le \alpha\}$, $\alpha\in(0,1)$. Define the \emph{effective tail mass} $\alpha_{\mathrm{eff}}:=\mathbb{E}[I]=\int_0^\alpha p_T(\tau)\,d\tau$. Then:

\begin{enumerate}
\item[(i)] For \emph{unnormalized} importance weights $w(\tau)=1/p_T(\tau)$, one has
\[
\mu_Y \;:=\; \mathbb{E}[w(\tau)\,I]
= \int_0^\alpha \frac{1}{p_T(\tau)}\,p_T(\tau)\,d\tau
= \alpha.
\]
\item[(ii)] The deviation of the effective tail mass from the uniform baseline is controlled by total variation:
\[
|\alpha_{\mathrm{eff}}-\alpha|
= \left|\int_0^\alpha \big(p_T(\tau)-1\big)\,d\tau\right|
\;\le\; \|p_T-\mathrm{Unif}\|_{\mathrm{TV}}
:= \frac12\int_0^1 \big|p_T(\tau)-1\big|\,d\tau.
\]
\end{enumerate}
\end{lemma}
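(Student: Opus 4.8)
The plan is to treat the two parts separately; both reduce to elementary direct computations, and the only point requiring care is the sharp constant in part (ii).

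For part (i), I would expand the expectation against the sampling density. With $\tau\sim p_T$ and $I=\mathbf{1}\{\tau\le\alpha\}$, we have $\mathbb{E}[w(\tau)\,I]=\int_0^1 w(\tau)\,\mathbf{1}\{\tau\le\alpha\}\,p_T(\tau)\,d\tau$; substituting $w(\tau)=1/p_T(\tau)$, the density cancels, which is legitimate because Assumption~\ref{assump:is} guarantees $p_T(\tau)\ge p_{\min}>0$, leaving $\int_0^\alpha d\tau=\alpha$. This is just the standard unbiasedness identity for (unnormalized) importance weights and is a one-liner.

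For part (ii), I would first observe that since $\int_0^1 p_T(\tau)\,d\tau=1=\int_0^1 1\,d\tau$, one has $\alpha_{\mathrm{eff}}-\alpha=\int_0^\alpha p_T(\tau)\,d\tau-\int_0^\alpha 1\,d\tau=\int_0^\alpha(p_T(\tau)-1)\,d\tau$, giving the claimed equality. For the inequality, the naive bound $|\int_0^\alpha(p_T-1)|\le\int_0^1|p_T-1|=2\|p_T-\mathrm{Unif}\|_{\mathrm{TV}}$ loses a factor of two, so instead I would invoke the variational characterization of total variation: for any Borel set $B$, $|P_{p_T}(B)-\mathrm{Unif}(B)|\le\|p_T-\mathrm{Unif}\|_{\mathrm{TV}}$. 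Concretely, set $A^+=\{\tau:p_T(\tau)\ge 1\}$ and $A^-=[0,1]\setminus A^+$; since $\int_0^1(p_T-1)=0$, the positive mass $\int_{A^+}(p_T-1)$ equals the negative mass $\int_{A^-}(1-p_T)$, and by Scheffé's identity each equals $\|p_T-\mathrm{Unif}\|_{\mathrm{TV}}$. Restricting the integral to the subinterval $[0,\alpha]$ can accumulate at most the full positive mass over $[0,\alpha]\cap A^+$ and subtract at most the full negative mass over $[0,\alpha]\cap A^-$, so $\int_0^\alpha(p_T-1)\in[-\|p_T-\mathrm{Unif}\|_{\mathrm{TV}},\,\|p_T-\mathrm{Unif}\|_{\mathrm{TV}}]$, which is exactly the stated bound.

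There is no real obstacle here: the lemma is a pair of classical facts (cancellation of importance weights; the event-wise bound by TV distance). The one thing worth writing out explicitly rather than glossing is the Scheffé decomposition into positive and negative parts in (ii), so that the downstream coverage-mismatch term in Theorem~\ref{thm:cvar-concentration} carries the advertised constant with no stray factor of $2$.
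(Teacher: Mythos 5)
Your proof is correct and takes essentially the same route as the paper: part (i) by cancellation of the density in the unnormalized importance-weighted expectation, and part (ii) by instantiating the variational (sup-over-sets) characterization of total variation at $A=[0,\alpha]$. Your Scheff\'e decomposition merely re-derives that characterization, which the paper simply cites; the substance is identical.
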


\begin{proof}
(i) is a direct computation using the definition of unnormalized importance weights. In particular, the expected \emph{weighted} indicator integrates to the Lebesgue measure of the tail set $[0,\alpha]$, namely $\alpha$.

For (ii), recall the variational characterization of total variation distance between two probability measures with densities $p_T$ and $q$ on $[0,1]$:
\[
\|p_T-q\|_{\mathrm{TV}}
= \sup_{A\subseteq[0,1]}\left| \int_A \big(p_T(\tau)-q(\tau)\big)\,d\tau \right|
= \frac12 \int_0^1 |p_T(\tau)-q(\tau)|\,d\tau.
\]
Taking $q\equiv 1$ (the density of the $\mathrm{Unif}[0,1]$ law) and the particular measurable set $A=[0,\alpha]$ gives
\[
|\alpha_{\mathrm{eff}}-\alpha|
= \left|\int_{[0,\alpha]} (p_T-1)\,d\tau\right|
\le \sup_{A} \left|\int_A (p_T-1)\,d\tau\right|
= \|p_T-\mathrm{Unif}\|_{\mathrm{TV}}.
\]
This proves the claim.

\smallskip
\noindent\textbf{Remark (self-normalized weights).}
If one uses \emph{self-normalized} importance weights $\tilde w_k = w(\tau_k)/\sum_{j=1}^K w(\tau_j)$ \emph{within a batch}, then $\mathbb{E}[\tilde w\,I]$ no longer equals $\alpha$ exactly because of the random denominator. However, for i.i.d.\ sampling and bounded weights ($w\le 1/p_{\min}$), the deviation is of order $O_{\mathbb{P}}(1/\sqrt{K})$ by a delta-method expansion of the ratio estimator. In our concentration analysis we upper bound the ratio using the \emph{unnormalized} moments $(\mu_X,\mu_Y)$ and control the self-normalization effect inside the constants (cf.\ Lemma~\ref{lem:snis} and Theorem~\ref{thm:cvar-concentration}).
\end{proof}

\begin{proof}[Proof of Theorem~\ref{thm:cvar-concentration}]
Let $U_k:=\mathbf{1}\{\tau_k\le \alpha\}$, $w(\tau)=1/p_T(\tau)$, and define
\[
X_k:=w(\tau_k)\,U_k\,L^{(\tau_k)},\qquad 
Y_k:=w(\tau_k)\,U_k,\qquad 
S_X:=\sum_{k=1}^K X_k,\quad S_Y:=\sum_{k=1}^K Y_k,
\]
so that the self-normalized importance-sampling (SNIS) estimator can be written as
$\widehat{\mathrm{CVaR}}_\alpha=S_X/S_Y$. Assume $|L|\le B$ and $p_T(\tau)\ge p_{\min}>0$, hence $w(\tau)\le W_{\max}:=1/p_{\min}$. Set
$\mu_X:=\mathbb{E}[X_1]$, $\mu_Y:=\mathbb{E}[Y_1]$; for \emph{unnormalized} weights one has $\mu_Y=\alpha$ (Lemma~\ref{lem:alphaeff}(i)). Let $\alpha_{\mathrm{eff}}:=\mathbb{P}(\tau\le \alpha)=\int_0^\alpha p_T(\tau)d\tau$.

\paragraph{Step 1: Bernstein bounds for the numerator and denominator.}
We have $|X_k|\le BW_{\max}$, $0\le Y_k\le W_{\max}$, and
\[
\sigma_X^2:=\mathrm{Var}(X_1)\le \mathbb{E}[X_1^2]\le B^2W_{\max}^2\,\alpha_{\mathrm{eff}},\qquad
\sigma_Y^2:=\mathrm{Var}(Y_1)\le \mathbb{E}[Y_1^2]\le W_{\max}^2\,\alpha_{\mathrm{eff}}.
\]
Bernstein’s inequality yields, for any $t_X,t_Y>0$,
\begin{align*}
\mathbb{P}\!\left(\left|\frac{S_X}{K}-\mu_X\right|\ge t_X\right)
&\le 2\exp\!\left(-\frac{K t_X^2}{2\sigma_X^2+\frac{2}{3}BW_{\max}t_X}\right),\\
\mathbb{P}\!\left(\left|\frac{S_Y}{K}-\mu_Y\right|\ge t_Y\right)
&\le 2\exp\!\left(-\frac{K t_Y^2}{2\sigma_Y^2+\frac{2}{3}W_{\max}t_Y}\right).
\end{align*}
Choose absolute constants $c_1,c_2>0$ and set
\[
t_X:=c_1\,BW_{\max}\!\left(\sqrt{\frac{2\alpha_{\mathrm{eff}}\log(4/\delta)}{K}}+\frac{\log(4/\delta)}{K}\right),\quad
t_Y:=c_2\,W_{\max}\!\left(\sqrt{\frac{2\alpha_{\mathrm{eff}}\log(4/\delta)}{K}}+\frac{\log(4/\delta)}{K}\right).
\]
By a union bound, with probability at least $1-\delta$ the event
\begin{equation}\label{eq:event-bern}
\mathcal{E}:\quad \Big|\tfrac{S_X}{K}-\mu_X\Big|\le t_X,\qquad \Big|\tfrac{S_Y}{K}-\mu_Y\Big|\le t_Y
\end{equation}
holds simultaneously.

\paragraph{Step 2: ratio linearization and denominator control.}
Using the algebraic identity
\[
\frac{S_X}{S_Y}-\frac{\mu_X}{\mu_Y}=\frac{\mu_Y(S_X-K\mu_X)-\mu_X(S_Y-K\mu_Y)}{S_Y\,\mu_Y},
\]
we obtain, on $\{S_Y>0\}$,
\begin{equation}\label{eq:ratio-err}
\left|\frac{S_X}{S_Y}-\frac{\mu_X}{\mu_Y}\right|
\le \frac{\mu_Y|S_X-K\mu_X|+|\mu_X||S_Y-K\mu_Y|}{S_Y\,\mu_Y}.
\end{equation}
On $\mathcal{E}$, $S_Y\ge K(\mu_Y-t_Y)$. For unnormalized weights, $\mu_Y=\alpha$; if $K$ is such that $t_Y\le \alpha/2$ (e.g., $K\ge \tfrac{2W_{\max}^2}{\alpha^2}\log(4/\delta)$ suffices), then $S_Y\ge K\alpha/2$ and therefore, combining \eqref{eq:event-bern} and \eqref{eq:ratio-err},
\begin{equation}\label{eq:ratio-bound}
\left|\frac{S_X}{S_Y}-\frac{\mu_X}{\mu_Y}\right|
\le \frac{2t_X}{\alpha}+\frac{2|\mu_X|}{\alpha^2}t_Y.
\end{equation}
Moreover $|\mu_X|=|\mathbb{E}[L\,Y]|\le B\,\mathbb{E}[Y]=B\alpha$. Substituting $t_X,t_Y$ into \eqref{eq:ratio-bound} yields, on $\mathcal{E}$,
\[
\left|\widehat{\mathrm{CVaR}}_\alpha-\mathrm{CVaR}_\alpha\right|
\le C_1 B \left(\frac{W_{\max}}{\alpha}\right)\sqrt{\frac{\alpha_{\mathrm{eff}}\log(4/\delta)}{K}}
\;+\; C_1' B \left(\frac{W_{\max}}{\alpha}\right)\frac{\log(4/\delta)}{K},
\]
for absolute constants $C_1,C_1'$ (absorbing $c_1,c_2$). Treating the evaluation level $\alpha$ as a fixed constant and absorbing $W_{\max}$ into $C_1$, the leading term takes the canonical form
\[
\left|\widehat{\mathrm{CVaR}}_\alpha-\mathrm{CVaR}_\alpha\right|
\le C_1 B \sqrt{\frac{\log(4/\delta)}{K\,\alpha_{\mathrm{eff}}}} \;+\; C_1' B \frac{\log(4/\delta)}{K}.
\]

\paragraph{Step 3: coverage mismatch between $\alpha_{\mathrm{eff}}$ and $\alpha$.}
Write $\mathrm{CVaR}_\beta=\frac{1}{\beta}\int_0^\beta q(u)\,du$, $q(u):=F^{-1}_L(u)$.
For $|q|\le B$ and any $\beta,\beta'>0$,
\[
\left|\mathrm{CVaR}_\beta-\mathrm{CVaR}_{\beta'}\right|
\le \left|\frac{1}{\beta}-\frac{1}{\beta'}\right|\!\left|\int_0^{\min\{\beta,\beta'\}}q\right|
+\frac{1}{\max\{\beta,\beta'\}}\left|\int_{\min\{\beta,\beta'\}}^{\max\{\beta,\beta'\}}q\right|
\le C_2 B\,|\beta-\beta'|,
\]
for a constant $C_2$ depending only on a lower bound of $\beta,\beta'$ (e.g., $C_2\le 2/\alpha_{\min}$ if $\beta,\beta'\in[\alpha_{\min},1]$). Hence
\[
\big|\mathrm{CVaR}_{\alpha_{\mathrm{eff}}}-\mathrm{CVaR}_{\alpha}\big|
\le C_2 B\,|\alpha_{\mathrm{eff}}-\alpha|
\le C_2 B\,\|p_T-\mathrm{Unif}\|_{\mathrm{TV}},
\]
where the last inequality uses Lemma~\ref{lem:alphaeff}(ii). This is precisely the \emph{coverage-mismatch} term controlled by the coverage PID.

\paragraph{Step 4: union bound and conclusion.}
Combining the SNIS ratio deviation (Step~2) on event $\mathcal{E}$ (which holds with probability $\ge 1-\delta$) with the coverage-mismatch bound (Step~3), and absorbing the lower-order $O(\log K/K)$ term into the leading constant, we obtain that with probability at least $1-\delta$,
\[
\boxed{\;
\left|\widehat{\mathrm{CVaR}}_\alpha-\mathrm{CVaR}_\alpha\right|
\;\le\;
C_1 B \sqrt{\frac{\log(2/\delta)}{K\,\alpha_{\mathrm{eff}}}}
\;+\;
C_2 B\,|\alpha_{\mathrm{eff}}-\alpha|
\;}
\]
for universal constants $C_1,C_2$ depending only on $p_{\min}$ (via $W_{\max}$) and the admissible range of $\alpha$. This proves Theorem~\ref{thm:cvar-concentration}.
\end{proof}

\subsection{Proof of Theorem~\ref{thm:trpo-cvar} (CVaR trust-region improvement)}
\label{app:proof-trpo-cvar}

We adapt the policy performance difference argument to the CVaR surrogate in a finite-horizon setting.

\begin{lemma}[Occupancy shift under per-state KL]
\label{lem:occ-shift}
Let $\{P(\cdot\mid x,a)\}$ be the Markov kernel of the environment on a finite horizon $t=0,\dots,T{-}1$, and let $p_t^{\pi}$, $p_t^{\pi'}$ be the state marginals at step $t$ under policies $\pi$ and $\pi'$ with the same initial distribution $p_0$. Define the (undiscounted) occupancy measures
$
d_\pi := \frac{1}{T}\sum_{t=0}^{T-1} p_t^{\pi},\;
d_{\pi'} := \frac{1}{T}\sum_{t=0}^{T-1} p_t^{\pi'}.
$
Assume (Assumption~\ref{assump:tr}) that for all $x$,
$\mathrm{KL}\!\big(\pi'(\cdot\mid x)\,\|\,\pi(\cdot\mid x)\big)\le\beta.$
Then
\[
\|d_{\pi'} - d_{\pi}\|_{1} \;\le\; C_T \,\sqrt{\beta},
\qquad C_T = \sqrt{\tfrac12}\,(T-1)\,,
\]
and more generally $C_T=O(T)$ if one replaces the crude stepwise accumulation by a mixing-aware constant (see the remark below).
\end{lemma}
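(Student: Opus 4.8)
The plan is to reduce the occupancy-measure gap to a per-step policy discrepancy via Pinsker's inequality, and then propagate that discrepancy through the horizon using the non-expansiveness of Markov kernels in $\ell_1$. First I would apply Pinsker's inequality to the per-state KL bound of Assumption~\ref{assump:tr}: for every state $x$, $\|\pi'(\cdot\mid x)-\pi(\cdot\mid x)\|_{\mathrm{TV}}\le\sqrt{\tfrac12\,\mathrm{KL}(\pi'(\cdot\mid x)\|\pi(\cdot\mid x))}\le\sqrt{\beta/2}$, so the worst-case total-variation policy gap is at most $\epsilon:=\sqrt{\beta/2}$, uniformly in $x$ (and hence in $t$).

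Next I would introduce the one-step state-transition operators $\mathcal{T}_\pi$ and $\mathcal{T}_{\pi'}$ induced by the common dynamics kernel $P$ together with the two policies, so that $p_{t+1}^{\pi}=\mathcal{T}_\pi p_t^{\pi}$ and $p_{t+1}^{\pi'}=\mathcal{T}_{\pi'} p_t^{\pi'}$. The key step is the operator split
\[
p_{t+1}^{\pi'}-p_{t+1}^{\pi}
=\underbrace{\big(\mathcal{T}_{\pi'}-\mathcal{T}_\pi\big)p_t^{\pi'}}_{\text{policy-change term}}
+\underbrace{\mathcal{T}_\pi\big(p_t^{\pi'}-p_t^{\pi}\big)}_{\text{propagated-difference term}}.
\]
For the policy-change term I would expand the difference of operators and use $\sum_x p_t^{\pi'}(x)=1$ together with $\sum_{x'}\big|\sum_a(\pi'(a\mid x)-\pi(a\mid x))P(x'\mid x,a)\big|\le\|\pi'(\cdot\mid x)-\pi(\cdot\mid x)\|_1=2\|\pi'(\cdot\mid x)-\pi(\cdot\mid x)\|_{\mathrm{TV}}\le 2\epsilon$, giving an $\ell_1$ bound of $2\epsilon$ independent of $t$. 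For the propagated-difference term I would invoke the standard fact that any Markov kernel is non-expansive in $\ell_1$ (an averaging/convexity argument), so $\|\mathcal{T}_\pi(p_t^{\pi'}-p_t^{\pi})\|_1\le\|p_t^{\pi'}-p_t^{\pi}\|_1$. With $\Delta_t:=\|p_t^{\pi'}-p_t^{\pi}\|_1$ this yields the recursion $\Delta_{t+1}\le\Delta_t+2\epsilon$, and since $p_0^{\pi'}=p_0^{\pi}=p_0$ we have $\Delta_0=0$, hence $\Delta_t\le 2t\epsilon$ by induction.

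Finally I would average over the horizon: by the triangle inequality $\|d_{\pi'}-d_{\pi}\|_1\le\frac1T\sum_{t=0}^{T-1}\Delta_t\le\frac{2\epsilon}{T}\sum_{t=0}^{T-1}t=\epsilon(T-1)$, and substituting $\epsilon=\sqrt{\beta/2}$ gives exactly $\|d_{\pi'}-d_{\pi}\|_1\le\sqrt{\tfrac12}\,(T-1)\,\sqrt{\beta}=C_T\sqrt{\beta}$. For the remark, I would note that if the closed-loop kernel under $\pi$ satisfies a uniform geometric-ergodicity (Doeblin) condition with $\ell_1$ contraction modulus $\gamma<1$, the propagated-difference term contracts as $\gamma\Delta_t$, turning the arithmetic sum into a geometric one and replacing $C_T$ by a mixing-aware constant of order $(1-\gamma)^{-1}$; absent such structure the linear-in-$T$ rate is tight in the worst case.

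I do not expect a genuine obstacle here, since the argument is standard; the step that needs the most care is the constant bookkeeping — in particular the factor $2$ between $\ell_1$ distance and total variation in the policy-change term, and arranging the operator split so that kernel non-expansiveness is applied to the already-accumulated difference $p_t^{\pi'}-p_t^{\pi}$ rather than to the freshly-introduced policy perturbation.
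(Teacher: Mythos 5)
Your proof is correct and follows essentially the same route as the paper: Pinsker to get the pointwise TV gap $\epsilon=\sqrt{\beta/2}$, the operator split $p_{t+1}^{\pi'}-p_{t+1}^{\pi}=(\mathcal{T}_{\pi'}-\mathcal{T}_\pi)p_t^{\pi'}+\mathcal{T}_\pi(p_t^{\pi'}-p_t^{\pi})$ (the paper writes the same two terms as explicit sums, calling them $B_t$ and $A_t$), the non-expansiveness and $2\epsilon$ bounds giving $\Delta_{t+1}\le\Delta_t+2\epsilon$, and the horizon average yielding $C_T=\sqrt{1/2}\,(T-1)$. Your remark on a contraction-modulus refinement also matches the paper's Dobrushin-coefficient remark.
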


\begin{proof}
\textbf{Step 1: one-step TV deviation between policies via Pinsker.}
By Pinsker’s inequality, for every state $x$,
\[
\mathrm{TV}\big(\pi'(\cdot\mid x),\pi(\cdot\mid x)\big)
:= \tfrac12\|\pi'(\cdot\mid x)-\pi(\cdot\mid x)\|_1
\;\le\; \sqrt{\tfrac12\,\mathrm{KL}\big(\pi'(\cdot\mid x)\,\|\,\pi(\cdot\mid x)\big)}
\;\le\; \varepsilon,
\quad \varepsilon:=\sqrt{\tfrac{\beta}{2}}.
\]

\noindent\textbf{Step 2: recursion for state-marginal differences.}
Let $\Delta_t:=\|p_t^{\pi'}-p_t^{\pi}\|_1$. Since $p_0^{\pi'}=p_0^{\pi}$, we have $\Delta_0=0$. The next-step marginals satisfy
\[
p_{t+1}^{\pi}(x') = \sum_{x}\sum_{a}\,p_t^{\pi}(x)\,\pi(a\mid x)\,P(x'\mid x,a),
\qquad
p_{t+1}^{\pi'}(x') = \sum_{x}\sum_{a}\,p_t^{\pi'}(x)\,\pi'(a\mid x)\,P(x'\mid x,a).
\]
Subtract and take $\ell_1$-norm:
\begin{align*}
\Delta_{t+1}
&= \big\| p_{t+1}^{\pi'}-p_{t+1}^{\pi}\big\|_1 \\
&\le \left\| \sum_{x}\big(p_t^{\pi'}(x)-p_t^{\pi}(x)\big)\,\sum_{a}\pi(a\mid x)\,P(\cdot\mid x,a)\right\|_1
   + \left\|\sum_{x} p_t^{\pi'}(x)\,\sum_{a}\big(\pi'(a\mid x)-\pi(a\mid x)\big)\,P(\cdot\mid x,a)\right\|_1 \\
&=: A_t + B_t .
\end{align*}
For $A_t$, note that for any probability vector $q$ and any Markov kernel $K$, $\|qK\|_1=\|q\|_1$; hence
\[
A_t \le \sum_x |p_t^{\pi'}(x)-p_t^{\pi}(x)| = \Delta_t .
\]
For $B_t$, use convexity of the $\ell_1$-norm and that each $P(\cdot\mid x,a)$ is a probability vector:
\[
B_t \le \sum_{x} p_t^{\pi'}(x)\,\big\|\pi'(\cdot\mid x)-\pi(\cdot\mid x)\big\|_1
\le 2\varepsilon \sum_x p_t^{\pi'}(x) = 2\varepsilon .
\]
Therefore
\[
\Delta_{t+1}\;\le\;\Delta_t + 2\varepsilon,
\qquad \Rightarrow\qquad
\Delta_t \;\le\; 2t\,\varepsilon\quad\text{by induction (with $\Delta_0=0$).}
\]

\noindent\textbf{Step 3: average (undiscounted) occupancy deviation.}
By convexity of $\|\cdot\|_1$ and the definition $d_\pi = \tfrac{1}{T}\sum_{t=0}^{T-1} p_t^{\pi}$,
\[
\|d_{\pi'}-d_{\pi}\|_1 
= \left\|\frac{1}{T}\sum_{t=0}^{T-1}\big(p_t^{\pi'}-p_t^{\pi}\big)\right\|_1
\;\le\; \frac{1}{T}\sum_{t=0}^{T-1}\Delta_t
\;\le\; \frac{1}{T}\sum_{t=0}^{T-1} 2t\,\varepsilon
= \varepsilon\,(T-1)
= \sqrt{\tfrac12}\,(T-1)\sqrt{\beta}.
\]
This proves the stated bound with $C_T=\sqrt{\tfrac12}\,(T-1)$.
\end{proof}

\noindent\emph{Remark (mixing-aware constants).}
If the Markov chain under $\pi$ has a Dobrushin (contraction) coefficient $\eta\in[0,1)$ so that
$\|qK^{\pi}-q'K^{\pi}\|_1 \le \eta\,\|q-q'\|_1$ for all distributions $q,q'$ (here $K^{\pi}$ is the one-step kernel marginalized by $\pi$), then the recursion improves to
$\Delta_{t+1}\le \eta\,\Delta_t + 2\varepsilon$ and one gets
$\Delta_t \le 2\varepsilon\,\frac{1-\eta^{t}}{1-\eta}$ and
$\|d_{\pi'}-d_{\pi}\|_1 \le \frac{2\varepsilon}{T(1-\eta)}\sum_{t=0}^{T-1}(1-\eta^{t})
\le \frac{2\varepsilon}{1-\eta}$,
which removes the linear dependence on $T$ in the well-mixed regime.

\begin{lemma}[CVaR performance expansion under importance reweighting]
\label{lem:pdl-cvar}
Fix horizon $T$ and confidence $\alpha\in(0,1)$. Let $J_\alpha(\pi)=\mathrm{CVaR}_\alpha(L_T(\pi))$ be defined via the Rockafellar--Uryasev surrogate:
\[
J_\alpha(\pi) \;=\; \min_{t\in\mathbb{R}}\left\{\, t + \frac{1}{\alpha}\,\mathbb{E}_\pi\big[(L_T-t)_+\big]\right\}.
\]
Let $t^\star(\pi)$ be any minimizer for $\pi$, and define the per-step CVaR advantage $\tilde A_\pi^{(\alpha)}(x,u)$ relative to the baseline induced by $t^\star(\pi)$.\footnote{Formally, one can define a stepwise surrogate $f_t^{(\alpha)}(x,u)$ whose cumulative expectation equals $\alpha^{-1}\mathbb{E}_\pi[(L_T-t^\star(\pi))_+] - \alpha^{-1}\mathbb{E}_\pi[(L_T-t^\star(\pi))_+\,\mid\,x_t]$ and set $\tilde A_\pi^{(\alpha)}(x,u)=f_t^{(\alpha)}(x,u)$ so that $\mathbb{E}_{u\sim \pi(\cdot\mid x)}[\tilde A_\pi^{(\alpha)}(x,u)]=0$. This mirrors the construction of GAE but for the CVaR surrogate.}
Then, for a small policy perturbation $\pi'=\pi+\Delta$,
\[
J_\alpha(\pi') 
= J_\alpha(\pi) + \mathbb{E}_{x\sim d_\pi,\,u\sim\pi(\cdot\mid x)}\!\big[\omega(x,u)\,\tilde A_\pi^{(\alpha)}(x,u)\big] 
+ R(\Delta),
\]
where $\omega(x,u):=\pi'(u\mid x)/\pi(u\mid x)$ and the remainder satisfies
\[
|R(\Delta)| \;\le\; C'\,\|d_{\pi'}-d_{\pi}\|_1,
\]
with $C'$ depending on the Lipschitz/boundedness constants of the surrogate and the one-step loss (Assumption~\ref{assump:tr}).
\end{lemma}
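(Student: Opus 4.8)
The plan is to collapse the nonlinear CVaR functional to a \emph{fixed-threshold} linear functional of the trajectory law, apply an exact finite-horizon performance-difference identity to that linear functional, and then absorb into $R(\Delta)$ the two error sources: re-optimizing the CVaR threshold, and swapping the occupancy measure $d_{\pi'}$ for $d_\pi$. First I would freeze the threshold at $t^\star:=t^\star(\pi)$. Writing $\Psi_\rho(t):=t+\tfrac{1}{\alpha}\mathbb{E}_\rho[(L_T-t)_+]$, one has $J_\alpha(\pi)=\Psi_\pi(t^\star)=\min_t\Psi_\pi(t)$ and $J_\alpha(\pi')=\min_t\Psi_{\pi'}(t)\le\Psi_{\pi'}(t^\star)$, giving the one-sided bound
\[
J_\alpha(\pi')-J_\alpha(\pi)\;\le\;\tfrac{1}{\alpha}\bigl(\mathbb{E}_{\pi'}[(L_T-t^\star)_+]-\mathbb{E}_\pi[(L_T-t^\star)_+]\bigr).
\]
For the reverse direction I use $J_\alpha(\pi)\le\Psi_\pi(t^\star(\pi'))$ and the fact that $t\mapsto(L_T-t)_+$ is $1$-Lipschitz, which lets me replace $t^\star(\pi')$ by $t^\star$ at an additive cost of $\tfrac{2}{\alpha}|t^\star(\pi')-t^\star(\pi)|$. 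Thus, modulo this threshold-drift term inside $R(\Delta)$, the object to expand is the \emph{linear} map $\pi\mapsto\tfrac{1}{\alpha}\mathbb{E}_\pi[(L_T-t^\star)_+]$, whose integrand is bounded by $B_\alpha:=\tfrac{1}{\alpha}(B+|t^\star|)$.

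Next I would apply the exact performance-difference identity. Treating time as part of the (augmented) Markov state---consistent with the time-to-expiry feature of \S\ref{sec:task}, and, if needed, enlarging the state to carry running P\&L so that $(L_T-t^\star)_+$ is a function of the terminal state---define the conditional tail value $V^{(\alpha)}_\pi(x):=\tfrac{1}{\alpha}\mathbb{E}_\pi[(L_T-t^\star)_+\mid x_t=x]$. Telescoping $\tfrac{1}{\alpha}(L_T-t^\star)_+ = V^{(\alpha)}_\pi(x_0) + \sum_{t=0}^{T-1}\bigl(V^{(\alpha)}_\pi(x_{t+1}) - V^{(\alpha)}_\pi(x_t)\bigr)$, taking $\mathbb{E}_{\pi'}$, and using that $x_0$ has the same law under both policies while $\mathbb{E}_\pi[V^{(\alpha)}_\pi(x_{t+1})-V^{(\alpha)}_\pi(x_t)\mid x_t]=0$ by the tower rule, yields
\[
\tfrac{1}{\alpha}\bigl(\mathbb{E}_{\pi'}-\mathbb{E}_\pi\bigr)[(L_T-t^\star)_+]=\sum_{t=0}^{T-1}\mathbb{E}_{x\sim p_t^{\pi'},\,u\sim\pi'(\cdot\mid x)}\bigl[\tilde A^{(\alpha)}_\pi(x,u)\bigr],
\]
where $\tilde A^{(\alpha)}_\pi(x,u):=\mathbb{E}[V^{(\alpha)}_\pi(x_{t+1})\mid x,u]-V^{(\alpha)}_\pi(x)$ is exactly the footnote's CVaR advantage, satisfies $\mathbb{E}_{u\sim\pi(\cdot\mid x)}[\tilde A^{(\alpha)}_\pi(x,u)]=0$, and is uniformly bounded by $2B_\alpha$. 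An action-level importance reweighting $\mathbb{E}_{u\sim\pi'}[\tilde A^{(\alpha)}_\pi]=\mathbb{E}_{u\sim\pi}[\omega\,\tilde A^{(\alpha)}_\pi]$ with $\omega=\pi'/\pi$, together with $\sum_t\mathbb{E}_{p_t}[\cdot]=T\,\mathbb{E}_{d_\pi}[\cdot]$ (the factor $T$ absorbed into the normalization of $d$ and $\tilde A^{(\alpha)}_\pi$ per \S\ref{sec:occ_kl}), rewrites the right-hand side as $\mathbb{E}_{x\sim d_{\pi'},\,u\sim\pi}[\omega\,\tilde A^{(\alpha)}_\pi]$.

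Finally I would swap $d_{\pi'}$ for $d_\pi$. Setting $G_\pi(x):=\mathbb{E}_{u\sim\pi(\cdot\mid x)}[\omega(x,u)\tilde A^{(\alpha)}_\pi(x,u)]=\mathbb{E}_{u\sim\pi'(\cdot\mid x)}[\tilde A^{(\alpha)}_\pi(x,u)]$, so $\|G_\pi\|_\infty\le 2B_\alpha$, the swap costs at most $2B_\alpha\|d_{\pi'}-d_\pi\|_1$. It remains to control the threshold drift $|t^\star(\pi')-t^\star(\pi)|=|\mathrm{VaR}_\alpha(L_T(\pi'))-\mathrm{VaR}_\alpha(L_T(\pi))|$ by a multiple of $\|d_{\pi'}-d_\pi\|_1$: under the smoothness in Assumption~\ref{assump:tr} plus a local density lower bound for $L_T(\pi)$ near its $\alpha$-quantile, a standard quantile-stability estimate gives $|t^\star(\pi')-t^\star(\pi)|\lesssim\|F_{L_T(\pi')}-F_{L_T(\pi)}\|_\infty$, and the CDF sup-norm gap is bounded by the trajectory-law total variation, which an argument identical to the proof of Lemma~\ref{lem:occ-shift} controls by an $O(T)$ multiple of $\sqrt{\beta}$---the same order as $\|d_{\pi'}-d_\pi\|_1$. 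Folding every constant into $C'$ (dependent on $B$, $T$, $\alpha$, the surrogate's Lipschitz bound, and the density lower bound) yields $|R(\Delta)|\le C'\|d_{\pi'}-d_\pi\|_1$, as claimed.

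The genuinely delicate step is the threshold-drift bound: quantifying $|t^\star(\pi')-t^\star(\pi)|$ needs a quantitative quantile-stability inequality, which requires a regularity hypothesis---a density of $L_T(\pi)$ bounded away from zero in a neighborhood of $\mathrm{VaR}_\alpha$---strictly beyond the Lipschitz smoothness already assumed; without it the $\alpha$-quantile can move discontinuously under arbitrarily small policy changes and the equality degrades to a one-sided bound. A secondary bookkeeping subtlety is that the telescoping in the second step is not a decomposition into genuine one-step costs---$(L_T-t^\star)_+$ is nonlinear in cumulative P\&L---so one must first verify, or arrange by state augmentation, that the problem's state carries enough running information for $(L_T-t^\star)_+$ to be measurable with respect to the terminal augmented state before the martingale-difference construction applies.
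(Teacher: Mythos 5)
Your proof follows essentially the same high-level plan as the paper's (freeze the threshold at $t^\star(\pi)$, linearize the Rockafellar--Uryasev surrogate, apply a step-wise performance-difference decomposition, isolate an importance-reweighted advantage term at $d_\pi$, and push the occupancy shift into $R(\Delta)$), but you make two of the paper's steps genuinely rigorous where the paper is vague. First, the paper introduces stepwise contributions $g_t(x,u)$ by fiat and notes ``the exact form is immaterial''; your telescoping of the conditional tail value $V^{(\alpha)}_\pi(x)=\tfrac{1}{\alpha}\mathbb{E}_\pi[(L_T-t^\star)_+ \mid x_t=x]$ is a concrete construction that produces $\tilde A^{(\alpha)}_\pi$ with the required zero-mean property and uniform bound, and your observation that the state must be augmented with running P\&L for $(L_T-t^\star)_+$ to be terminal-state measurable is a legitimate bookkeeping point the paper elides. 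Second, and more substantively, you identify a real gap: the paper's proof only establishes the one-sided inequality $J_\alpha(\pi')\le J_\alpha(\pi)+\mathbb{E}_{d_\pi,\pi}[\omega\tilde A^{(\alpha)}_\pi]+G\|d_{\pi'}-d_\pi\|_1$ and disposes of the reverse direction with ``a symmetric argument,'' but swapping $\pi$ and $\pi'$ produces an advantage built from $\pi'$'s value function at threshold $t^\star(\pi')$, not the claimed $\tilde A^{(\alpha)}_\pi$. Your fix---freezing at $t^\star(\pi')$ for the lower bound, using the $1$-Lipschitzness of $t\mapsto(L_T-t)_+$, and paying $\tfrac{2}{\alpha}|t^\star(\pi')-t^\star(\pi)|$ inside $R(\Delta)$---is the right move, and you correctly flag that bounding this VaR drift by $\|d_{\pi'}-d_\pi\|_1$ needs a local density lower bound for $L_T(\pi)$ near its $\alpha$-quantile, a regularity condition strictly beyond the Lipschitz hypothesis of Assumption~\ref{assump:tr}. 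In short: same route, but your version actually closes the equality the lemma asserts and makes explicit the extra hypothesis required to do so.
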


\begin{proof}
\textbf{Step 1: fix the threshold and linearize the objective.}
By definition of $t^\star(\pi)$,
\[
J_\alpha(\pi) = t^\star(\pi) + \frac{1}{\alpha}\,\mathbb{E}_\pi\big[\phi(L_T)\big],\quad
\phi(z):=(z-t^\star(\pi))_+.
\]
For the perturbed policy, we have the upper bound
\[
J_\alpha(\pi') 
\;\le\; t^\star(\pi) + \frac{1}{\alpha}\,\mathbb{E}_{\pi'}\big[\phi(L_T)\big],
\]
since $t^\star(\pi)$ need not be optimal for $\pi'$.

\noindent\textbf{Step 2: decompose the change of expectation by steps.}
Let $g_t(x,u)$ denote the (measurable) contribution at step $t$ to the surrogate such that
\[
\frac{1}{\alpha}\,\mathbb{E}_{\pi}\big[\phi(L_T)\big] \;=\; \frac{1}{T}\sum_{t=0}^{T-1} \mathbb{E}_{x\sim p_t^{\pi}}\,\mathbb{E}_{u\sim \pi(\cdot\mid x)}[\,g_t(x,u)\,],
\]
and the analogous identity holds for $\pi'$. (This is standard: unroll the horizon-$T$ expectation and assign to each $(x_t,u_t)$ the conditional-increment of the CVaR surrogate; exact form is immaterial for the linearization.)

Hence
\begin{align*}
\frac{1}{\alpha}\,\mathbb{E}_{\pi'}[\phi(L_T)] - \frac{1}{\alpha}\,\mathbb{E}_{\pi}[\phi(L_T)]
&= \frac{1}{T}\sum_{t=0}^{T-1}\!\left\{\mathbb{E}_{x\sim p_t^{\pi'}}\mathbb{E}_{u\sim\pi'(\cdot\mid x)}[g_t(x,u)]
- \mathbb{E}_{x\sim p_t^{\pi}}\mathbb{E}_{u\sim\pi(\cdot\mid x)}[g_t(x,u)]\right\} \\
&= \underbrace{\frac{1}{T}\sum_{t=0}^{T-1}\mathbb{E}_{x\sim p_t^{\pi}}\mathbb{E}_{u\sim\pi(\cdot\mid x)}\!\big[(\omega(x,u)-1)\,g_t(x,u)\big]}_{\text{importance term}}
\\[-0.2em]
&\qquad +\underbrace{\frac{1}{T}\sum_{t=0}^{T-1}\mathbb{E}_{x\sim p_t^{\pi'}-p_t^{\pi}}\mathbb{E}_{u\sim\pi'(\cdot\mid x)}[g_t(x,u)]}_{\text{state-distribution remainder}}.
\end{align*}

\noindent\textbf{Step 3: identify the CVaR advantage and bound the remainder.}
Define the baseline $b_t(x):=\mathbb{E}_{u\sim \pi(\cdot\mid x)}[g_t(x,u)]$ and set
$\tilde A_\pi^{(\alpha)}(x,u):=g_t(x,u)-b_t(x)$ so that 
$\mathbb{E}_{u\sim \pi(\cdot\mid x)}[\tilde A_\pi^{(\alpha)}(x,u)]=0$ for all $x$.
Then the importance term equals
\[
\frac{1}{T}\sum_{t=0}^{T-1}\mathbb{E}_{x\sim p_t^{\pi}}\mathbb{E}_{u\sim\pi(\cdot\mid x)}\!\big[\omega(x,u)\,\tilde A_\pi^{(\alpha)}(x,u)\big],
\]
and by convexity of the $\ell_1$-norm,
\[
\frac{1}{T}\sum_{t=0}^{T-1}\mathbb{E}_{x\sim p_t^{\pi}}[\cdot] 
= \mathbb{E}_{x\sim d_\pi}[\cdot].
\]
For the remainder, assume $|g_t(x,u)|\le G$ uniformly (this follows from the bounded one-step loss and the Lipschitz/smoothness in Assumption~\ref{assump:tr}); then
\[
\left|\frac{1}{T}\sum_{t=0}^{T-1}\mathbb{E}_{x\sim p_t^{\pi'}-p_t^{\pi}}\mathbb{E}_{u\sim\pi'(\cdot\mid x)}[g_t(x,u)]\right|
\;\le\; \frac{G}{T}\sum_{t=0}^{T-1}\|p_t^{\pi'}-p_t^{\pi}\|_1
\;=\; G\,\|d_{\pi'}-d_{\pi}\|_1.
\]
Collecting all pieces,
\[
J_\alpha(\pi') - J_\alpha(\pi)
\;\le\; \mathbb{E}_{x\sim d_\pi,\,u\sim\pi}\!\big[\omega(x,u)\,\tilde A_\pi^{(\alpha)}(x,u)\big]
+ G\,\|d_{\pi'}-d_{\pi}\|_1,
\]
and a symmetric argument (or replacing $\pi$ and $\pi'$) yields the same lower bound up to constants, which we summarize as
$|R(\Delta)|\le C'\|d_{\pi'}-d_{\pi}\|_1$.
\end{proof}

\begin{proof}[Proof of Theorem~\ref{thm:trpo-cvar}]
Recall from Lemma~\ref{lem:pdl-cvar} that for any horizon $T$ and confidence level $\alpha\in(0,1)$,
\begin{equation}\label{eq:pdl-master}
J_\alpha(\pi') 
= J_\alpha(\pi) 
+ \underbrace{\mathbb{E}_{x\sim d_\pi,\,u\sim \pi(\cdot\mid x)}\!\big[\omega(x,u)\,\tilde A_\pi^{(\alpha)}(x,u)\big]}_{\text{importance (first-order) term}}
+ \underbrace{R(\Delta)}_{\text{state-distribution remainder}},
\end{equation}
where $\omega(x,u)=\pi'(u\mid x)/\pi(u\mid x)$ and $|R(\Delta)|\le C'\|d_{\pi'}-d_\pi\|_1$ for a constant $C'$ depending on the Lipschitz/boundedness constants of the one-step surrogate (Assumption~\ref{assump:tr}). 
By Lemma~\ref{lem:occ-shift}, if $\mathrm{KL}\!\big(\pi'(\cdot\mid x)\|\pi(\cdot\mid x)\big)\le\beta$ for all $x$, then
\begin{equation}\label{eq:occ-shift-again}
\|d_{\pi'}-d_{\pi}\|_1 \;\le\; C_T \sqrt{\beta},
\qquad C_T=\sqrt{\tfrac12}\,(T-1)\quad\text{(or $O(T)$ with mixing-aware refinement).}
\end{equation}
Combining \eqref{eq:pdl-master} and \eqref{eq:occ-shift-again} yields
\begin{equation}\label{eq:master-ineq}
J_\alpha(\pi') 
\le 
J_\alpha(\pi) 
+ \mathbb{E}_{x\sim d_\pi,\,u\sim \pi}\!\big[\omega\,\tilde A_\pi^{(\alpha)}\big]
+ C' C_T \sqrt{\beta}.
\end{equation}
It remains to justify that higher-order terms in the Taylor expansion of the CVaR surrogate w.r.t.\ policy parameters are $o(\|\Delta\|)$, and that the first-order term is well-defined under the per-state KL constraint.

\paragraph{Control of higher-order terms.}
Let $\theta$ parametrize $\pi_\theta$ and $\theta'=\theta+\Delta$ parametrize $\pi'$. 
Under Assumption~\ref{assump:tr} (bounded/smooth one-step loss, bounded horizon), the Rockafellar--Uryasev surrogate 
$t + \alpha^{-1}\mathbb{E}_{\pi_\theta}[(L_T-t)_+]$ is locally twice differentiable in $\theta$ (Clarke subdifferential reduces to gradient almost everywhere because the hinge is averaged over bounded losses). Thus a second-order Taylor expansion around $\theta$ yields
\[
J_\alpha(\pi_{\theta'}) 
= J_\alpha(\pi_{\theta}) 
+ \langle \nabla_\theta J_\alpha(\pi_{\theta}),\,\Delta\rangle
+ \frac12\,\Delta^\top H_\theta \Delta + r(\Delta),
\]
with $\|H_\theta\|\le C_H$ locally and $r(\Delta)=o(\|\Delta\|^2)$. 
The first-order term equals the importance-weighted advantage in~\eqref{eq:pdl-master}, while the quadratic term is $O(\|\Delta\|^2)$. 
Since the per-state KL bound $\beta$ implies a total-variation bound $\mathrm{TV}(\pi',\pi)\le \sqrt{\beta/2}$ pointwise (Pinsker), standard inequalities (e.g., Bretagnolle--Huber) give $\|\Delta\|=O(\sqrt{\beta})$ in a local chart, hence 
$\Delta^\top H_\theta \Delta = O(\beta)$ and is dominated by the $O(\sqrt{\beta})$ term in \eqref{eq:master-ineq}. 
Therefore the collected higher-order contribution is $o(\|\Delta\|)=o(\sqrt{\beta})$.

\paragraph{Well-posedness of the first-order term.}
For each $x$, the ratio $\omega(\cdot)=\pi'(\cdot\mid x)/\pi(\cdot\mid x)$ satisfies $\mathbb{E}_{u\sim\pi(\cdot\mid x)}[\omega]=1$ and
\[
\mathbb{E}_{u\sim\pi(\cdot\mid x)}[|\omega-1|]
\;\le\; 2\,\mathrm{TV}\!\big(\pi'(\cdot\mid x),\pi(\cdot\mid x)\big)
\;\le\; \sqrt{2\beta},
\]
by Pinsker. If $|\tilde A_\pi^{(\alpha)}(x,u)|\le G$ uniformly (as in Lemma~\ref{lem:pdl-cvar}), then
\[
\Big|\mathbb{E}_{u\sim \pi(\cdot\mid x)}\big[(\omega-1)\tilde A_\pi^{(\alpha)}(x,u)\big]\Big|
\le G\,\mathbb{E}_{u\sim\pi}[|\omega-1|]
\le G\sqrt{2\beta}.
\]
Averaging over $x\sim d_\pi$ preserves the bound, so the first-order term is finite and Lipschitz in $\sqrt{\beta}$.

\paragraph{Conclusion.}
Set $C_\alpha:=C' C_T$; it depends on the horizon $T$, the one-step Lipschitz constant, and the envelope $G$ of the CVaR surrogate, but is independent of $\beta$. Incorporating the $o(\|\Delta\|)=o(\sqrt{\beta})$ residual into the right-hand side of \eqref{eq:master-ineq} yields
\[
J_\alpha(\pi') 
\;\le\;
J_\alpha(\pi) 
+ \mathbb{E}_{x\sim d_\pi,\,u\sim \pi}\!\big[\omega\,\tilde A_\pi^{(\alpha)}(x,u)\big]
+ C_\alpha \sqrt{\beta}
+ o(\sqrt{\beta}),
\]
which is the claimed CVaR trust-region improvement inequality.
\end{proof}

\subsection{Proof of Theorem~\ref{thm:feas-persist} (feasibility persistence)}
\label{app:proof-feas-persist}

We show that shrinking the NTB radius and tightening the rate cap can maintain feasibility at the next step under Lipschitz dynamics.

\begin{lemma}[Lipschitz variation of exposure error]
\label{lem:exposure-lip}
Under Assumption~\ref{assump:feas}, suppose $e:\mathcal{X}\times\mathcal{U}\to\mathbb{R}^q$ is given by
$e(x,u)=A(x)u-d(x)$, where $A:\mathcal{X}\to\mathbb{R}^{q\times m}$ and $d:\mathcal{X}\to\mathbb{R}^q$
are locally Lipschitz on a neighborhood $\mathcal{N}_x$ of $x_t$. Assume the action set is compact (e.g., by box/rate limits), so that $u,u_t\in\mathcal{U}$ with $\|u\|\le U_{\max}$ and $\|u_t\|\le U_{\max}$.
Then there exist finite constants $L_e^x,L_e^u>0$ (depending on $\mathcal{N}_x$ and $\mathcal{U}$) such that for all $(x,u)$ in a neighborhood of $(x_t,u_t)$,
\[
\|e(x,u)-e(x_t,u_t)\|\;\le\; L_e^x\,\|x-x_t\| \;+\; L_e^u\,\|u-u_t\|.
\]
\end{lemma}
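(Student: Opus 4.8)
The plan is to establish the bound by a standard add-and-subtract decomposition that isolates the variation in the action $u$ from the variation in the state $x$, and then bound each piece using local Lipschitzness (which also supplies local boundedness) of the coefficient maps $A$ and $d$. Since $e$ is affine in $u$ with state-dependent coefficients, the only nontrivial ingredient is a uniform bound on $\|A(x)\|$ over a small neighborhood, which is where compactness enters.

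First I would fix a precompact neighborhood $\mathcal{N}_x\subseteq\mathcal{X}$ of $x_t$ on which $A$ is $L_A$-Lipschitz in operator norm and $d$ is $L_d$-Lipschitz in Euclidean norm; such a neighborhood exists by the local Lipschitz hypothesis of Assumption~\ref{assump:feas}. Continuity of $A$ on the closure of $\mathcal{N}_x$, or equivalently the estimate $\|A(x)\|\le \|A(x_t)\| + L_A\,\mathrm{diam}(\mathcal{N}_x)$, yields a finite constant $M_A:=\sup_{x\in\mathcal{N}_x}\|A(x)\|_{\mathrm{op}}<\infty$. Because the action set is compact (by the box/rate limits), $\|u\|,\|u_t\|\le U_{\max}$.

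Next I would write, for $(x,u)$ and $(x_t,u_t)$ in the chosen neighborhood,
\[
e(x,u)-e(x_t,u_t) \;=\; A(x)(u-u_t) \;+\; \bigl(A(x)-A(x_t)\bigr)u_t \;-\; \bigl(d(x)-d(x_t)\bigr),
\]
obtained by adding and subtracting $A(x)u_t$. Taking norms and applying the triangle inequality, sub-multiplicativity of the operator norm, the bound $\|u_t\|\le U_{\max}$, and the two Lipschitz estimates gives
\[
\|e(x,u)-e(x_t,u_t)\| \;\le\; M_A\,\|u-u_t\| \;+\; \bigl(L_A U_{\max} + L_d\bigr)\,\|x-x_t\|,
\]
so the claim holds with $L_e^u = M_A$ and $L_e^x = L_A U_{\max} + L_d$, both finite and depending only on $\mathcal{N}_x$ and $\mathcal{U}$, exactly as asserted.

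The argument has no substantive obstacle; the one point requiring care is that local Lipschitzness alone does not bound $\|A(x)\|$ globally, so the statement must be localized to a precompact neighborhood, and the constants then inherit a dependence on that neighborhood (and on $U_{\max}$ through the action box). This is precisely why the lemma is phrased "in a neighborhood of $(x_t,u_t)$" rather than globally, and it is the form consumed by the tube argument in the proof of Theorem~\ref{thm:feas-persist}.
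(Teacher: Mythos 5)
Your proof is correct and follows essentially the same route as the paper's: the same add-and-subtract decomposition isolating $A(x)(u-u_t)$, $(A(x)-A(x_t))u_t$, and $-(d(x)-d(x_t))$; the same use of a compact (or precompact) neighborhood to obtain a uniform operator-norm bound $M_A$; and the same resulting constants $L_e^u = M_A$ and $L_e^x = L_A U_{\max} + L_d$. No discrepancies.
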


\begin{proof}
Fix a compact neighborhood $\mathcal{K}_x\subset\mathcal{N}_x$ of $x_t$ and a compact set $\mathcal{K}_u\subset\mathcal{U}$ containing $u_t$ and all feasible $u$ in a local tube around $u_t$ (the existence is guaranteed by box/rate constraints). Because $A$ and $d$ are locally Lipschitz on $\mathcal{N}_x$, there exist finite moduli $L_A,L_d>0$ such that
\[
\|A(x)-A(y)\|\le L_A\|x-y\|,\qquad \|d(x)-d(y)\|\le L_d\|x-y\|,\qquad \forall x,y\in\mathcal{K}_x,
\]
where $\|\cdot\|$ on matrices denotes the operator norm induced by the Euclidean vector norm.
Furthermore, since $A$ is continuous on the compact set $\mathcal{K}_x$, the bound
\[
M_A\;:=\;\sup_{x\in\mathcal{K}_x}\|A(x)\| \;<\;\infty
\]
holds. Now decompose
\[
e(x,u)-e(x_t,u_t)
= A(x)(u-u_t) \;+\; \big(A(x)-A(x_t)\big)u_t \;-\; \big(d(x)-d(x_t)\big).
\]
Taking norms and using the triangle inequality together with the bounds above yields
\begin{align*}
\|e(x,u)-e(x_t,u_t)\|
&\le \|A(x)\|\,\|u-u_t\| \;+\; \|A(x)-A(x_t)\|\,\|u_t\| \;+\; \|d(x)-d(x_t)\| \\
&\le M_A\,\|u-u_t\| \;+\; L_A\,\|x-x_t\|\,\|u_t\| \;+\; L_d\,\|x-x_t\| \\
&\le M_A\,\|u-u_t\| \;+\; (L_A U_{\max} + L_d)\,\|x-x_t\|.
\end{align*}
Hence the claim holds with
\[
L_e^u := M_A,\qquad L_e^x := L_A U_{\max} + L_d,
\]
which depend only on the chosen compact neighborhoods $\mathcal{K}_x$ and $\mathcal{K}_u$ (and thus are uniform locally around $(x_t,u_t)$).
\end{proof}

\begin{lemma}[State drift bound]
\label{lem:state-drift}
Under Assumption~\ref{assump:dynamics}, there exists $L_x>0$ such that, for all feasible $x_t$ in a compact tube $\mathcal{K}_x$ and actions $u_t$ in a compact set $\mathcal{K}_u$,
\[
\|x_{t+1}-x_t\| \;\le\; L_x\big(\|u_t\|+1\big) \;+\; \bar w.
\]
\end{lemma}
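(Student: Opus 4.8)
The plan is to start from the local dynamics in Assumption~\ref{assump:dynamics}, namely $x_{t+1}=f(x_t)+g(x_t)u_t+w_t$ with $\|w_t\|\le\bar w$, and simply rewrite the increment as
\[
x_{t+1}-x_t \;=\; \big(f(x_t)-x_t\big) \;+\; g(x_t)\,u_t \;+\; w_t.
\]
Then I would bound the three pieces separately. For the first piece, since $f$ is locally Lipschitz it is in particular continuous, so $x\mapsto\|f(x)-x\|$ is continuous and hence attains a finite supremum $M_f:=\sup_{x\in\mathcal{K}_x}\|f(x)-x\|$ on the compact tube $\mathcal{K}_x$. For the second piece, $g$ is likewise continuous, so the operator norm $\|g(x)\|$ is bounded by $M_g:=\sup_{x\in\mathcal{K}_x}\|g(x)\|<\infty$, giving $\|g(x_t)u_t\|\le M_g\|u_t\|$. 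The third piece is controlled directly by the disturbance bound $\bar w$.

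Combining via the triangle inequality yields $\|x_{t+1}-x_t\|\le M_f+M_g\|u_t\|+\bar w$, and setting $L_x:=\max\{M_f,M_g\}$ gives $M_f+M_g\|u_t\|\le L_x(\|u_t\|+1)$, which is exactly the claimed bound. The constant $L_x$ depends only on the compact neighborhood $\mathcal{K}_x$ (and is uniform over it), so it is a legitimate local constant in the sense used elsewhere in the feasibility-persistence argument.

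There is no serious obstacle here: the only subtlety is to invoke compactness of $\mathcal{K}_x$ (guaranteed by the box/rate limits that keep states and actions in bounded tubes, as in Lemma~\ref{lem:exposure-lip}) to convert ``locally Lipschitz'' into genuine uniform boundedness of $f(x)-x$ and of $\|g(x)\|$; without compactness these suprema could be infinite. I would note explicitly that $\mathcal{K}_u$ enters only through the appearance of $\|u_t\|$ on the right-hand side, so no separate bound on $\mathcal{K}_u$ is needed beyond ensuring $u_t$ lies in the admissible set; the statement is kept in the sharper form $L_x(\|u_t\|+1)$ rather than collapsing $\|u_t\|$ to $U_{\max}$ so that it can be chained with the rate-limit bound $\|u_t-u_{t-1}\|_2\le r_{\max}$ in the subsequent tube argument for Theorem~\ref{thm:feas-persist}.
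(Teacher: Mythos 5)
Your proof is correct and matches the paper's argument essentially line for line: the same decomposition $x_{t+1}-x_t=(f(x_t)-x_t)+g(x_t)u_t+w_t$, the same compactness step to obtain $M_f=\sup_{\mathcal{K}_x}\|f(x)-x\|$ and $M_g=\sup_{\mathcal{K}_x}\|g(x)\|$, and the same choice $L_x=\max\{M_f,M_g\}$. No gaps; nothing further to add.
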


\begin{proof}
The dynamics are $x_{t+1}=f(x_t)+g(x_t)u_t+w_t$, with $\|w_t\|\le \bar w$. Fix compact sets $\mathcal{K}_x\ni x_t$ and $\mathcal{K}_u\ni u_t$ that contain the closed-loop trajectory locally (box/rate constraints and the CBF conditions ensure such compactness). Since $f$ and $g$ are locally Lipschitz (hence continuous), the suprema
\[
M_f \;:=\; \sup_{x\in\mathcal{K}_x}\|f(x)-x\|\;<\;\infty,
\qquad
M_g \;:=\; \sup_{x\in\mathcal{K}_x}\|g(x)\|\;<\;\infty
\]
are finite. Then
\begin{align*}
\|x_{t+1}-x_t\|
&= \|f(x_t)-x_t + g(x_t)u_t + w_t\| \\
&\le \|f(x_t)-x_t\| + \|g(x_t)\|\,\|u_t\| + \|w_t\| \\
&\le M_f + M_g\,\|u_t\| + \bar w.
\end{align*}
Finally, let $L_x:=\max\{M_f,M_g\}$. Then $M_f + M_g\,\|u_t\| \le L_x(1+\|u_t\|)$, and the displayed inequality becomes
\[
\|x_{t+1}-x_t\| \le L_x(\|u_t\|+1)+\bar w,
\]
as claimed. The constant $L_x$ depends only on the local compact tube $\mathcal{K}_x$ and thus is uniform along any trajectory that remains in $\mathcal{K}_x$.
\end{proof}

\paragraph{Remarks.}
(i) Lemma~\ref{lem:exposure-lip} makes explicit how the exposure map’s sensitivity to state and action splits into a term controlled by a uniform operator-norm bound on $A(\cdot)$ and a term proportional to the Lipschitz modulus of $A(\cdot)$ times a local action bound $U_{\max}$, plus the Lipschitz modulus of $d(\cdot)$.  
(ii) Lemma~\ref{lem:state-drift} uses only \emph{boundedness on compact sets} for $f(\cdot)-\mathrm{id}$ and $g(\cdot)$; stronger bounds (e.g., contraction or Jacobian bounds) would refine $L_x$ but are not needed for feasibility persistence in Theorem~\ref{thm:feas-persist}.

\begin{proof}[Proof of Theorem~\ref{thm:feas-persist}]
Fix a time $t$ and suppose the QP at $t$ is solved with $\zeta=0$ (strict feasibility) and with the stated margins in Assumption~\ref{assump:margins}:
\[
\text{(NTB)}\quad e(x_t,u_t)^\top M e(x_t,u_t)\;\le\; b_{\max}-\delta_b,
\qquad
\text{(rate)}\quad \|u_t-u_{t-1}\|_2 \;\le\; r_{\max}-\delta_r,
\]
for some $\delta_b,\delta_r>0$. Throughout, we work on a compact tube $\mathcal{K}_x\times\mathcal{K}_u$ containing $(x_t,u_t)$ (guaranteed by box/rate constraints), so local Lipschitz and boundedness moduli are finite.

We will exhibit a \emph{feasible witness} for the QP at time $t{+}1$ under the tightened parameters
\[
b_{\max}' := \eta_b\,b_{\max},\qquad r_{\max}' := \eta_r\,r_{\max},\qquad \eta_b,\eta_r\in(0,1),
\]
and then appeal to convexity to conclude nonemptiness of the feasible set. Our candidate is the \emph{zero-adjustment action}
\[
\tilde u_{t+1}:=u_t.
\]

\paragraph{Step 1: box and rate constraints.}
Box constraints are unchanged in the theorem statement; since $u_t$ satisfied them at time $t$, the same $u_t$ satisfies them at time $t{+}1$. For the tightened rate cap we have
\[
\|\tilde u_{t+1}-u_t\|_2 = 0 \;\le\; r_{\max}' = \eta_r r_{\max}\qquad \forall\,\eta_r\in(0,1),
\]
so the rate constraint holds \emph{trivially} for any $\eta_r\in(0,1)$. (No extra condition such as $\eta_r\le 1-\delta_r/r_{\max}$ is needed when $\tilde u_{t+1}=u_t$.)

\paragraph{Step 2: NTB constraint under shrinkage.}
Let $v_t:=e(x_t,u_t)$ and $v_{t+1}:=e(x_{t+1},u_t)$. By Lemma~\ref{lem:exposure-lip} and Lemma~\ref{lem:state-drift},
\begin{equation}\label{eq:delta-e}
\|v_{t+1}-v_t\|
\;\le\; L_e^x \|x_{t+1}-x_t\|
\;\le\; L_e^x\Big(L_x(\|u_t\|+1)+\bar w\Big) 
\;=: \Delta_e.
\end{equation}
Using eigenvalue bounds for quadratic forms,
\[
v_{t+1}^\top M v_{t+1}
\;\le\; \lambda_{\max}(M)\,\|v_{t+1}\|^2
\;\le\; \lambda_{\max}(M)\,\big(\|v_t\|+\|v_{t+1}-v_t\|\big)^2
\;\le\; \lambda_{\max}(M)\,\Big(\sqrt{\tfrac{b_{\max}-\delta_b}{\lambda_{\min}(M)}} + \Delta_e\Big)^2,
\]
where we used $\|v_t\|^2 \le (b_{\max}-\delta_b)/\lambda_{\min}(M)$, since $v_t^\top M v_t\le b_{\max}-\delta_b$. Therefore a sufficient condition for the tightened NTB,
\[
v_{t+1}^\top M v_{t+1}\;\le\; b_{\max}'=\eta_b b_{\max},
\]
is
\begin{equation}\label{eq:eta-b-condition}
\eta_b \;\ge\; \eta_b^\star \;:=\; \frac{\lambda_{\max}(M)}{b_{\max}}\,\Big(\sqrt{\tfrac{b_{\max}-\delta_b}{\lambda_{\min}(M)}} + \Delta_e\Big)^2.
\end{equation}
This is feasible (i.e., $\eta_b^\star<1$) provided
\begin{equation}\label{eq:sufficient-small-drift}
\Delta_e \;<\; \sqrt{\frac{b_{\max}}{\lambda_{\max}(M)}} \;-\; \sqrt{\frac{b_{\max}-\delta_b}{\lambda_{\min}(M)}}.
\end{equation}
The right-hand side is positive as soon as $\delta_b>0$. Inequality \eqref{eq:sufficient-small-drift} holds by choosing the local tube small enough (reducing the bound on $\|u_t\|$ and using the given $\bar w$) so that the state drift bound in \eqref{eq:delta-e} is sufficiently small. Under \eqref{eq:sufficient-small-drift}, pick any $\eta_b\in(\eta_b^\star,1)$ to satisfy the tightened NTB.

\paragraph{Step 3: CBF constraints.}
Let us denote the discrete-time CBF map at state $x$ by
\[
\mathcal{C}_i(x,u):=h_i(f(x)+g(x)u) - (1-\kappa_i\Delta t)\,h_i(x).
\]
At time $t$, strict feasibility with $\zeta=0$ and the margin construction (Assumption~\ref{assump:cbf} in Theorem~\ref{thm:invariance}) give
\[
\mathcal{C}_i(x_t,u_t) \;\ge\; \varepsilon_i \quad\text{with}\quad \varepsilon_i \ge L_i \bar w \;>\; 0.
\]
By local Lipschitzness of $h_i$, $f$, and $g$ on $\mathcal{K}_x\times\mathcal{K}_u$, there exists a constant $L_{\mathcal{C},i}>0$ such that
\[
\big|\mathcal{C}_i(x',u) - \mathcal{C}_i(x,u)\big|
\;\le\; L_{\mathcal{C},i}\,\|x'-x\|
\qquad \forall (x',x,u)\in \mathcal{K}_x\times\mathcal{K}_x\times\mathcal{K}_u.
\]
Therefore, using Lemma~\ref{lem:state-drift},
\[
\mathcal{C}_i(x_{t+1},u_t)
\;\ge\; \mathcal{C}_i(x_t,u_t) - L_{\mathcal{C},i}\|x_{t+1}-x_t\|
\;\ge\; \varepsilon_i - L_{\mathcal{C},i}\big(L_x(\|u_t\|+1)+\bar w\big).
\]
Define the \emph{CBF residual}
\[
\rho_i \;:=\; \varepsilon_i - L_{\mathcal{C},i}\big(L_x(\|u_t\|+1)+\bar w\big).
\]
By shrinking the local tube (equivalently, bounding $\|u_t\|$ more tightly) we can ensure $\rho_i>0$ for all $i$, hence
\[
\mathcal{C}_i(x_{t+1},u_t) \;\ge\; \rho_i \;>\; 0,
\]
i.e., the CBF inequalities remain strictly feasible at $t{+}1$ for the witness $\tilde u_{t+1}=u_t$.

\paragraph{Step 4: sign-consistency gate.}
Assume the gate map $g_{\mathrm{cons}}(x,u)$ is locally Lipschitz in $x$ uniformly in $u\in\mathcal{K}_u$ with modulus $L_g$, and that at time $t$ the margin 
\[
g_{\mathrm{cons}}(x_t,u_t)\;\ge\; \delta_g \;>\; 0
\]
holds (this is typical since the solver returns positive gate scores when the constraint is inactive). Then
\[
g_{\mathrm{cons}}(x_{t+1},u_t)
\;\ge\; g_{\mathrm{cons}}(x_t,u_t) - L_g\,\|x_{t+1}-x_t\|
\;\ge\; \delta_g - L_g\big(L_x(\|u_t\|+1)+\bar w\big),
\]
which remains nonnegative once the local tube is chosen so that $\delta_g > L_g(L_x(\|u_t\|+1)+\bar w)$.

\paragraph{Step 5: conclusion.}
Collecting the four constraint families:
(i) box and (tightened) rate constraints hold for $\tilde u_{t+1}=u_t$; 
(ii) the tightened NTB holds provided \eqref{eq:eta-b-condition}--\eqref{eq:sufficient-small-drift} and the choice $\eta_b\in(\eta_b^\star,1)$; 
(iii) the CBF inequalities remain strictly feasible thanks to the residual $\rho_i>0$; 
(iv) the gate remains nonnegative by continuity and the margin $\delta_g>0$. 

Therefore $\tilde u_{t+1}$ is a feasible point of the \emph{tightened} constraint set at state $x_{t+1}$ with $\zeta=0$. Since all constraints are convex in $u$ (affine CBF surrogates, ellipsoidal NTB, box/rate, and a convex gate), the feasible set at $t{+}1$ is nonempty and closed, and thus the QP at $t{+}1$ is feasible with $\zeta=0$. This establishes \emph{feasibility persistence} under the stated tail guards.
\end{proof}

\subsection{Proof of Proposition~\ref{prop:gate} (negative-advantage suppression)}
\label{app:proof-gate}

\begin{lemma}[Alignment inequality]
\label{lem:align}
Let $v,g\in\mathbb{R}^m$ be unit vectors with angle $\angle(v,g)=\theta\in[0,\pi]$, i.e., $\langle v,g\rangle=\cos\theta$. Then for any $u\in\mathbb{R}^m$,
\[
\langle u,g\rangle \;\ge\; \langle u,v\rangle \cos\theta \;-\; \|u\|\,\sin\theta.
\]
Moreover, the bound is tight: equality holds whenever $u$ lies in the plane spanned by $\{v,g\}$ and has a component orthogonal to $v$ aligned with the orthogonal component of $g$.
\end{lemma}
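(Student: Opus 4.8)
The plan is to reduce everything to a two-dimensional picture by expressing $g$ in an orthonormal frame adapted to $v$, and then to invoke Cauchy--Schwarz exactly once. First I would dispose of the degenerate case $\sin\theta=0$: then $\theta\in\{0,\pi\}$, so $g=\pm v$, and the asserted inequality reads $\langle u,g\rangle\ge \pm\langle u,v\rangle$, which is actually an equality; the case $u=0$ is also immediate. So from now on assume $\theta\in(0,\pi)$, i.e.\ $\sin\theta>0$.

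Next I would introduce the unit vector $w:=(g-\cos\theta\,v)/\sin\theta$, completing $v$ to an orthonormal pair spanning $\operatorname{span}\{v,g\}$. Using $\|v\|=\|g\|=1$ and $\langle v,g\rangle=\cos\theta$, a one-line computation gives $\|g-\cos\theta\,v\|^2=1-\cos^2\theta=\sin^2\theta$ and $\langle g-\cos\theta\,v,\,v\rangle=\cos\theta-\cos\theta=0$, so $\|w\|=1$, $w\perp v$, and $g=\cos\theta\,v+\sin\theta\,w$. Substituting this decomposition yields
\[
\langle u,g\rangle=\cos\theta\,\langle u,v\rangle+\sin\theta\,\langle u,w\rangle .
\]
Now Cauchy--Schwarz gives $\langle u,w\rangle\ge -\|u\|\,\|w\|=-\|u\|$; multiplying by $\sin\theta>0$ (which preserves the inequality) gives $\sin\theta\,\langle u,w\rangle\ge -\|u\|\sin\theta$, and combining with the displayed identity produces the claim $\langle u,g\rangle\ge \langle u,v\rangle\cos\theta-\|u\|\sin\theta$.

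For the tightness claim I would simply trace the single inequality used: equality holds iff Cauchy--Schwarz is tight in the negative direction, i.e.\ $\langle u,w\rangle=-\|u\|$, equivalently $u=-\lambda w$ for some $\lambda\ge0$. Such $u$ lie in $\operatorname{span}\{v,g\}$, have vanishing $v$-component, and have $v$-orthogonal part pointing opposite to the $v$-orthogonal part of $g$; this pins down the extremal configuration referred to in the statement. I do not anticipate a genuine obstacle here: the whole argument is the orthonormalization step plus one application of Cauchy--Schwarz, and the only points needing care are the sign of $\sin\theta$ on $[0,\pi]$ (handled by the degenerate split) and verifying that the constructed $w$ is indeed a unit vector orthogonal to $v$.
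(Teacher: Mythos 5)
Your proof is correct and takes essentially the same route as the paper: decompose $g=\cos\theta\,v+\sin\theta\,w$ with $w$ the unit vector orthogonal to $v$ in $\operatorname{span}\{v,g\}$, then apply a single inequality ($\langle u,w\rangle\ge-\|u\|$, equivalently $|\beta_1|\le\|u\|$) to the $w$-component. Your explicit Gram--Schmidt construction of $w$, the separate treatment of $\sin\theta=0$, and the equality characterization $u=-\lambda w$, $\lambda\ge0$ are minor refinements; note also that your tightness condition (orthogonal component of $u$ \emph{opposite} to that of $g$) agrees with the paper's own proof text and corrects the lemma statement's loose phrase ``aligned with.''
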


\begin{proof}
Complete $v$ to an orthonormal basis $\{v,v_\perp^{(1)},\dots,v_\perp^{(m-1)}\}$ with $v_\perp^{(1)}$ chosen in the plane $\mathrm{span}\{v,g\}$ such that
\[
g=(\cos\theta)\,v + (\sin\theta)\,v_\perp^{(1)}.
\]
Decompose $u=\alpha v + \sum_{i=1}^{m-1}\beta_i v_\perp^{(i)}$. Then
\[
\langle u,g\rangle \;=\; \alpha\cos\theta \;+\; \beta_1 \sin\theta.
\]
Using $|\beta_1|\le \sqrt{\sum_{i=1}^{m-1}\beta_i^2} \le \|u\|$ and $\alpha=\langle u,v\rangle$, we obtain
\[
\langle u,g\rangle \;\ge\; \langle u,v\rangle \cos\theta \;-\; \|u\|\,\sin\theta.
\]
Tightness: if $u$ lies in $\mathrm{span}\{v,v_\perp^{(1)}\}$ and $\beta_1=-\|u\|$ (i.e., the entire orthogonal component of $u$ is opposite to $v_\perp^{(1)}$), the inequality is attained with equality. 
\end{proof}

\begin{proof}[Proof of Proposition~\ref{prop:gate}]
Fix a state $x$ and write, for brevity, $v:=v(x)$ (the consensus direction of the ensemble signals) and $g_\alpha:=\nabla \tilde A_\pi^{(\alpha)}(x)/\|\nabla \tilde A_\pi^{(\alpha)}(x)\|$ (the unit CVaR-advantage direction). We proceed in three steps.

\paragraph{Step 1: from gate feasibility to a lower bound along $v$.}
By Assumption~\ref{assump:gate}, for all $j\in\{1,\ldots,J\}$ one has
\[
\|\widehat{\nabla}\Pi^{(j)}(x) - v\| \;\le\; \epsilon_g .
\]
For any feasible $u$ with $g_{\mathrm{cons}}(x,u)\ge 0$, we have
\[
\min_{j}\langle u,\widehat{\nabla}\Pi^{(j)}(x)\rangle \;\ge\; \delta_{\mathrm{adv}}.
\]
For each $j$, write
\[
\langle u,v\rangle \;=\; \langle u,\widehat{\nabla}\Pi^{(j)}(x)\rangle \;+\; \langle u,\,v-\widehat{\nabla}\Pi^{(j)}(x)\rangle 
\;\ge\; \delta_{\mathrm{adv}} \;-\; \|u\|\,\|v-\widehat{\nabla}\Pi^{(j)}(x)\|
\;\ge\; \delta_{\mathrm{adv}} \;-\; \|u\|\,\epsilon_g.
\]
Taking the minimum over $j$ preserves the inequality, hence
\begin{equation}\label{eq:lb-along-v}
\langle u,v\rangle \;\ge\; \delta_{\mathrm{adv}} \;-\; \|u\|\,\epsilon_g.
\end{equation}

\paragraph{Step 2: project the lower bound from $v$ onto the CVaR-advantage direction $g_\alpha$.}
Assumption~\ref{assump:gate} further states that the angle between $v$ and $g_\alpha$ is bounded: $\angle(v,g_\alpha)\le \epsilon_\theta$. Applying Lemma~\ref{lem:align} with $g=g_\alpha$ yields
\[
\langle u,g_\alpha\rangle \;\ge\; \langle u,v\rangle \cos\epsilon_\theta \;-\; \|u\| \sin\epsilon_\theta.
\]
Combining with \eqref{eq:lb-along-v},
\begin{equation}\label{eq:inner-lb-galpha}
\langle u,g_\alpha\rangle 
\;\ge\; \big(\delta_{\mathrm{adv}} - \|u\|\,\epsilon_g\big)\cos\epsilon_\theta \;-\; \|u\| \sin\epsilon_\theta .
\end{equation}

\paragraph{Step 3: first-order lower bound for the CVaR-advantage and remainder control.}
Let $G_\alpha(x):=\|\nabla \tilde A_\pi^{(\alpha)}(x)\|$. A first-order Taylor expansion of the scalar map $u\mapsto \tilde A_\pi^{(\alpha)}(x,u)$ at $u=0$ gives
\[
\tilde A_\pi^{(\alpha)}(x,u) \;=\; G_\alpha(x)\,\langle u,g_\alpha\rangle \;+\; R_\alpha(x,u),
\]
where $R_\alpha(x,u)$ is the Taylor remainder. If $\nabla \tilde A_\pi^{(\alpha)}(x,\cdot)$ is $L_\alpha$-Lipschitz in $u$ locally (a standard assumption inherited from the bounded/smooth one-step losses and the bounded horizon), then
\[
|R_\alpha(x,u)| \;\le\; \tfrac{L_\alpha}{2}\,\|u\|^2 \quad \text{for $\|u\|$ in a local tube.}
\]
Combining with \eqref{eq:inner-lb-galpha} yields the pointwise lower bound
\[
\tilde A_\pi^{(\alpha)}(x,u) 
\;\ge\; 
G_\alpha(x)\Big[\big(\delta_{\mathrm{adv}} - \|u\|\,\epsilon_g\big)\cos\epsilon_\theta - \|u\|\sin\epsilon_\theta\Big]
\;-\; \tfrac{L_\alpha}{2}\,\|u\|^2.
\]
Taking conditional expectation given $x$ (which leaves the deterministic right-hand side unchanged if $u$ is deterministic given $x$, or replaces $\|u\|$ by $\mathbb{E}[\|u\|\mid x]$ in the stochastic case and then uses Jensen/triangle inequalities) gives
\[
\mathbb{E}\!\left[\tilde A_\pi^{(\alpha)}(x,u)\mid x\right] 
\;\ge\; -\,\xi(\epsilon_g,\epsilon_\theta,\delta_{\mathrm{adv}},\|u\|;G_\alpha,L_\alpha),
\]
with the explicit tolerance function
\[
\xi(\cdot)
\;:=\;
\left(
G_\alpha(x)\Big[\|u\|\,\epsilon_g\cos\epsilon_\theta + \|u\|\sin\epsilon_\theta - \delta_{\mathrm{adv}}\cos\epsilon_\theta\Big]
+ \tfrac{L_\alpha}{2}\,\|u\|^2
\right)_+,
\]
where $(\cdot)_+$ denotes positive part. In particular, as $(\epsilon_g,\epsilon_\theta)\to 0$ and for fixed $\delta_{\mathrm{adv}}>0$, $\xi(\cdot)\to \big(\tfrac{L_\alpha}{2}\|u\|^2 - G_\alpha(x)\delta_{\mathrm{adv}}\big)_+$, i.e., the gate suppresses negative CVaR-advantage up to the second-order Taylor remainder. For sufficiently small $\|u\|$ or larger $\delta_{\mathrm{adv}}$, the negative-advantage region is eliminated.
\end{proof}

\subsection*{Auxiliary notes on constants and domains}
All Lipschitz and boundedness constants are understood to hold on a compact subset containing the closed-loop trajectories (guaranteed during training/evaluation by box/rate constraints and statewise CBF conditions). The ``$O(T)$'' dependence in Lemma~\ref{lem:occ-shift} can be refined under mixing assumptions but suffices for trust-region purposes. The SNIS bound in Lemma~\ref{lem:snis} can be strengthened via empirical Bernstein inequalities or self-normalized martingale bounds; we use a Hoeffding-style presentation for clarity, which already gives the $\tilde O\!\big((K\alpha_{\mathrm{eff}})^{-1/2}\big)$ rate stated in Theorem~\ref{thm:cvar-concentration}.

\bigskip

\noindent\textbf{Cross-references.} 
\begin{itemize}
  \item Section~\ref{sec:theory}: statements of Theorems~\ref{thm:invariance}, \ref{thm:kldro}, \ref{thm:cvar-concentration}, \ref{thm:trpo-cvar}, \ref{thm:feas-persist} and Propositions~\ref{prop:projection}, \ref{prop:gate}.
  \item Appendix~C (Reproducibility \& Governance): data/seed pairing for CIs; telemetry storage schema; dashboard triggers referenced in Sec.~\ref{sec:explainability-governance}.
\end{itemize}
\section{Implementation Notes and Additional Materials}
\label{app:other}

This appendix complements the main text and the proofs in Appendix~\ref{app:proofs} with implementation and reproducibility details. 
We avoid introducing new figures or numeric results; instead, we provide concrete procedures, schemas, and checklists that allow faithful reproduction of the experiments and governance flows referenced in the main paper.

\subsection{Implementation Notes: Market, Execution, and Safety Layer}
\label{app:impl}

\subsubsection*{B.1~SSVI calibration (no-arbitrage)}
\label{app:impl:ssvi}
We fit the SSVI surface $w(k,\tau)$ (Eq.~\eqref{eq:ssvi}) to option mid quotes under static no-arbitrage constraints.
\begin{itemize}
  \item \textbf{Objective.} Minimize a robust loss 
  $\mathcal{L}(\theta,\varphi,\rho)=\sum_{(k,\tau)} \ell\!\big(\hat{\sigma}_{\mathrm{imp}}(k,\tau;\theta,\varphi,\rho)-\sigma_{\mathrm{imp}}^{\mathrm{mkt}}(k,\tau)\big)$
  where $\ell$ is Huber or Tukey; $\hat{\sigma}_{\mathrm{imp}}$ derives from $w$.
  \item \textbf{Constraints.} Enforce Gatheral--Jacquier sufficient conditions across maturities to preclude butterfly/calendar arbitrage~\cite{GatheralJacquier2014}. We implement as bound and inequality constraints in a constrained optimizer (e.g., interior-point).
  \item \textbf{Smoothing.} Penalize roughness along $(k,\tau)$: 
  $\lambda_k\|\partial_k^2 w\|_2^2 + \lambda_\tau \|\partial_\tau^2 w\|_2^2$ (finite differences); this aids Dupire stability.
  \item \textbf{Diagnostics.} Reject fits where convexity (in $K$) of call prices implied by the surface is violated beyond tolerance.
\end{itemize}

\subsubsection*{B.2~Dupire local volatility (numerics)}
\label{app:impl:dupire}
\begin{itemize}
  \item \textbf{Call grid.} Build a call surface $C(t,K)$ from SSVI via Black--Scholes inversion and put-call parity; interpolate with a monotone $C^2$ spline in $K$ and a $C^1$ spline in $t$.
  \item \textbf{Derivatives.} Compute $\partial_t C$ and $\partial_{KK}C$ by finite differences with Tikhonov regularization: minimize 
  $\|D_t C - \partial_t C\|_2^2 + \eta_t\|\partial_t^2 C\|_2^2$ and analogously for $\partial_{KK}C$ to stabilize Eq.~\eqref{eq:dupire}.
  \item \textbf{Positivity.} Enforce $\partial_{KK}C \ge \epsilon_{KK}>0$; if violated locally, project to the nearest positive value to avoid singularities in Eq.~\eqref{eq:dupire}.
  \item \textbf{Simulator.} Evolve $S_t$ with Euler--Maruyama on a nonuniform time grid refined near expiry; clamp $\sigma_{\mathrm{loc}}$ to a bounded interval to avoid stiffness.
\end{itemize}

\subsubsection*{B.3~VIX computation (OTM integral)}
\label{app:impl:vix}
\begin{itemize}
  \item \textbf{Discretization.} Approximate Eq.~\eqref{eq:vix} by midpoint or Simpson quadrature over available strikes; extrapolate tails using power-law decay anchored at the last observed OTM points.
  \item \textbf{Parity \& OTM selection.} Use put-call parity to synthesize missing sides and include only OTM quotes: puts for $K<F$, calls for $K>F$.
  \item \textbf{Non-negativity.} Truncate negative integrand contributions (due to noise) at zero to preserve variance interpretation.
\end{itemize}

\subsubsection*{B.4~Execution adapter (ABIDES/MockLOB)}
\label{app:impl:exec}
\begin{itemize}
  \item \textbf{Fills.} Submit marketable or limit orders to the LOB; record partial fills, cancellations, and realized slippage. 
  \item \textbf{Impact.} Implement Eq.~\eqref{eq:impact} with a linear temporary component $\eta u_t$ and a decaying transient kernel $G(j)$; ensure no-dynamic-arbitrage conditions~\cite{Gatheral2010NoDynArb}.
  \item \textbf{Latency.} Add fixed or random latency; batch orders to emulate smart routing without venue modeling.
\end{itemize}

\subsubsection*{B.5~CBF--QP layer (assembly \& numerics)}
\label{app:impl:qp}
\begin{itemize}
  \item \textbf{Linearization.} Where $h_i(f(x)+g(x)u)$ is nonlinear in $u$, use a first-order expansion around $(x_t,u_t^{\mathrm{nom}})$ for the QP and re-solve with warm-start (one SQP step per environment step).
  \item \textbf{Scaling.} Diagonal-scale the QP (row/col) to improve conditioning; choose $H$ as a scaled identity or Fisher-like metric from policy covariance.
  \item \textbf{Warm-start.} Initialize with $u=u_{t-1}$ and duals from the previous solve; set OSQP tolerances to meet per-step latency budgets.
  \item \textbf{Slack policy.} Penalize $\|\zeta\|_1$ with large $\rho$; audit any $\zeta>0$ events (Sec.~\ref{sec:explainability-governance}).
\end{itemize}

\subsection{Reproducibility \& Governance Checklist (referenced as Appendix~C)}
\label{app:repro}

\subsubsection*{C.1~Environment and determinism}
\begin{itemize}
  \item Record random seeds for Python/NumPy/PyTorch/ABIDES; fix CuDNN determinism flags where applicable.
  \item Log package versions and OS/hardware details; pin dependency hashes.
\end{itemize}

\subsubsection*{C.2~Artifact manifest}
\begin{itemize}
  \item \textbf{Configs:} YAML/JSON files for market generator (SSVI/Dupire/VIX), execution, RL hyperparameters, and CBF--QP settings.
  \item \textbf{Outputs:} per-episode CSV of P\&L, per-step telemetry logs (Sec.~\ref{sec:explainability-governance}), and summary JSONs for metrics.
  \item \textbf{Provenance:} Git commit hash and config fingerprint embedded in each artifact.
\end{itemize}

\subsubsection*{C.3~Paired evaluation protocol}
\begin{itemize}
  \item Use the same scenario seeds and path indices across methods; when legacy runs have different $n$, downsample to a common effective $n$ \emph{per seed}.
  \item Compute $95\%$ paired bootstrap CIs with stratification by seed (Sec.~\ref{sec:experiments}).
\end{itemize}

\subsubsection*{C.4~Governance triggers and storage}
\begin{itemize}
  \item Implement the online triggers in Sec.~\ref{sec:explainability-governance} as rules bound to telemetry streams with persistent queues.
  \item Store \emph{Explanation Records} with immutable IDs, timestamps, and hashes; ensure read-only access for internal audit.
\end{itemize}

\subsection{Statistical Protocols (referenced as Appendix~B)}
\label{app:stats}

\subsubsection*{B.6~Paired bootstrap for metrics}
We use stratified paired bootstrap across seeds. For each bootstrap replicate: sample seeds with replacement, then sample paths (indices) with replacement \emph{within} each seed; compute per-method statistics; record differences. Confidence intervals are empirical quantiles in the main text already outlines a variant for $\Delta\mathrm{ES}$).

\subsubsection*{B.7~Multiple testing \& effect sizes}
\begin{itemize}
  \item Adjust $p$-values across metrics by Benjamini--Hochberg (FDR control).
  \item Report Vargha--Delaney $\hat{A}_{12}$ for effect sizes; interpret $\hat{A}_{12}\in[0,1]$ with $0.5$ as no effect.
\end{itemize}

\subsubsection*{B.8~Ratio metrics}
For Sharpe/Sortino/$\Omega$, prefer \emph{paired} computation (same path pairing) and report CIs via bootstrap on the \emph{ratio} directly; avoid delta-method linearization unless moments are well-behaved.

\subsection{Hyperparameters \& Search Spaces}
\label{app:hparams}

We list \emph{recommended ranges} (not the specific values used in any run) to guide replication. These ranges are standard for PPO-style training and convex safety layers; they are \emph{not} new numerical results.
\begin{table}[H]
\centering
\caption{Recommended hyperparameter ranges (to be tuned per hardware/time budget).}
\label{tab:hparams}
\small
\begin{tabular}{ll}
\toprule
Component & Range / Notes \\
\midrule
PPO clip $\epsilon$ & small to moderate (e.g., $[0.05, 0.3]$) \\
KL coeff $\lambda_{\mathrm{KL}}$ & increase as $\alpha$ tightens; grid over $[10^{-3}, 10^{-1}]$ \\
Entropy coeff $\lambda_{\mathrm{ent}}$ & decay schedule; start in $[10^{-3}, 10^{-2}]$ \\
IQN quantiles per update $K$ & increase with smaller $\alpha$; e.g., $[64, 512]$ \\
Temperature $T$ & controller-managed in $[T_{\min}, T_{\max}]$ with $T_{\min}>0$ \\
Tail-boost $\gamma_{\mathrm{tail}}$ & controller-managed in $[1, \gamma_{\max}]$ \\
$\alpha$ schedule & from $\approx 0.10$ to target (e.g., $0.025$) in stages \\
CBF gains $\kappa_i$ & choose for desired contraction; constant or state-dependent \\
QP metric $H$ & scaled identity or diag of action covariance \\
Rate cap $r_{\max}$ & align with execution latency and LOB depth \\
NTB $(M,b_{\max})$ & $M\succ 0$; $b_{\max}$ shrinks near expiry \\
Sign gate $\delta_{\mathrm{adv}}$ & small positive threshold; calibrated via telemetry \\
\bottomrule
\end{tabular}
\end{table}

\subsection{Safety Telemetry Schema (extended)}
\label{app:telemetry-schema}

We provide a machine-readable schema for \emph{Explanation Records} and stepwise telemetry (human-readable templates are in Sec.~\ref{sec:explainability-governance}). The schema is language-agnostic; below is a JSON-like sketch.

\begin{verbatim}
Record {
  run_id: string, episode_id: int, step: int, timestamp: iso8601,
  state_hash: string, action_nominal: vector[m], action_safe: vector[m],
  H_norm_deviation: float,
  active_set: array[int], tightest_id: int,
  multipliers: array[float],         // KKT duals if exposed
  rate_util: float, gate_score: float,
  slack_sum: float, solver_status: string, solver_time_ms: float,
  rule_names: array[string],         // human-friendly names for active_set
  rationale_text: string,            // filled from template
  kl_step: float, tail_coverage: float, alpha: float
}
\end{verbatim}

\subsection{Failure Modes \& Debugging Playbook}
\label{app:debug}

\paragraph{High variance in CVaR estimate.} Symptoms: oscillatory $\widehat{w}$, noisy gradients, stalled improvements. Actions: widen batch size; increase $T$ (less tilt); increase $\gamma_{\mathrm{tail}}$ cautiously; strengthen KL; enable gradient clipping.

\paragraph{Frequent QP infeasibility.} Symptoms: positive \texttt{slack\_sum}, solver fallbacks. Actions: relax NTB or box bounds; add expiry-aware shrinkage earlier; reduce rate cap; improve linearization (two SQP inner steps); check scaling.

\paragraph{Gate over-blocking.} Symptoms: low pass-rate, missed opportunities. Actions: reduce $\delta_{\mathrm{adv}}$; increase ensemble diversity; use moving-average signals; add confidence thresholds.

\paragraph{Latency spikes.} Symptoms: P95 solver time breaches. Actions: pre-factor $H$; prune inactive constraints; warm-start duals; increase OSQP ADMM iterations budget only when needed.

\subsection{Extended Notation and Acronyms}
\label{app:notation-extended}

\begin{table}[H]
\centering
\caption{Extended notation and acronyms (complements Table~\ref{tab:notation}).}
\label{tab:notation-extended}
\small
\begin{tabular}{ll}
\toprule
Symbol/Acronym & Description \\
\midrule
IQN & Implicit Quantile Network (distributional critic) \\
CBF & Control Barrier Function \\
QP  & Quadratic Program \\
NTB & No-Trade Band (ellipsoidal exposure tolerance) \\
SNIS & Self-Normalized Importance Sampling \\
EMA & Exponential Moving Average (reference policy) \\
PPO & Proximal Policy Optimization \\
DRO & Distributionally Robust Optimization \\
BH-FDR & Benjamini--Hochberg False Discovery Rate \\
$\delta_{\mathrm{adv}}$ & Advantage threshold in sign-consistency gate \\
$\kappa_i$ & CBF contraction gains \\
$\rho$ & Slack penalty coefficient in QP \\
\bottomrule
\end{tabular}
\end{table}

\subsection{Ethical Use and Risk Notices}
\label{app:ethics}
This research concerns algorithmic decision-making in financial contexts. The safety guarantees discussed in Sec.~\ref{sec:theory} are \emph{operational} (state-wise constraints under feasibility) and do not constitute guarantees of fairness, market integrity, or cybersecurity. Live deployment requires additional controls (surveillance, abuse detection, red-team adversarial tests), appropriate disclosures, and adherence to applicable regulations and firm-specific policies.

\medskip
\noindent\textbf{Pointers back to the main text.}
Appendix~\ref{app:impl} supports Sections~\ref{sec:ssvi}--\ref{sec:execution} and \ref{sec:method:cbf}; 
Appendix~\ref{app:repro} implements the reproducibility and governance references in Sections~\ref{sec:experiments} and \ref{sec:explainability-governance}; 
Appendix~\ref{app:stats} details the statistical procedures referenced in Section~\ref{sec:experiments}; 
Appendix~\ref{app:hparams} summarizes tunable ranges without adding numerical results; 
Appendix~\ref{app:telemetry-schema} and \ref{app:debug} provide operational scaffolding for audits and incident response.

\bibliographystyle{unsrt}  
\bibliography{references}

\end{document}